\newcommand{\R}{\mathbb{R}}
\newcommand{\E}{\mathbb{E}}
\DeclarePairedDelimiter{\brk}{[}{]}
\DeclarePairedDelimiter{\crl}{\{}{\}}
\DeclarePairedDelimiter{\prn}{(}{)}
\DeclarePairedDelimiter{\nrm}{\|}{\|}
\DeclarePairedDelimiter{\ceil}{\lceil}{\rceil}
\DeclareMathOperator*{\argmin}{arg\,min}
\newcommand{\mc}[1]{\mathcal{#1}}
\renewcommand\sb[1]{\left[#1\right]}
\newcommand\cb[1]{\left\{#1\right\}}
\def\ddefloop#1{\ifx\ddefloop#1\else\ddef{#1}\expandafter\ddefloop\fi}
\def\ddef#1{\expandafter\def\csname 
bb#1\endcsname{\ensuremath{\mathbb{#1}}}}
\def\ddefloop#1{\ifx\ddefloop#1\else\ddef{#1}\expandafter\ddefloop\fi}
\def\ddef#1{\expandafter\def\csname 
b#1\endcsname{\ensuremath{\mathbf{#1}}}}
\def\ddef#1{\expandafter\def\csname 
c#1\endcsname{\ensuremath{\mathcal{#1}}}}
\def\ddef#1{\expandafter\def\csname 
h#1\endcsname{\ensuremath{\widehat{#1}}}}
\def\ddef#1{\expandafter\def\csname 
hc#1\endcsname{\ensuremath{\widehat{\mathcal{#1}}}}}
\def\ddef#1{\expandafter\def\csname 
t#1\endcsname{\ensuremath{\widetilde{#1}}}}
\def\ddef#1{\expandafter\def\csname 
tc#1\endcsname{\ensuremath{\widetilde{\mathcal{#1}}}}}
\newcommand{\indicatorb}[1]{\mathbbm{1}{\crl*{#1}}}    
\newsavebox\CBox
\newcommand{\spn}{\mathrm{span}}
\newcommand{\inner}[2]{\left\langle #1,\, #2 \right\rangle}
\newtheorem{lemma}{Lemma}
\newtheorem{theorem}{Theorem}
\newtheorem{definition}{Definition}
\newtheorem{assumption}{Assumption}
\DeclareMathOperator*{\supp}{supp}
\newcommand{\bx}[0]{\bar{x}}
\newcommand{\pp}[1]{\left[#1 \right]_+}
\newcommand{\sdiff}[0]{\zeta_*}
\newcommand{\collapse}[1]{$\dots$}
\renewcommand{\indicatorb}[1]{\mathbbm{1}_{\crl*{#1}}}
\newcommand{\removed}[1]{}
\newcommand{\ignore}[1]{}
\title{\vspace{-2em}\rule{\linewidth}{2pt}\\ \textbf{Minibatch vs Local SGD for \\Heterogeneous Distributed Learning} \\\rule[8pt]{\linewidth}{1pt}\vspace{-0.7em}}
\author{\normalsize
\begin{minipage}{0.23\textwidth}
\centering
\textbf{Blake Woodworth}\\ 
Toyota Technological\\
Institute at Chicago \\
\small\url{blake@ttic.edu}
\end{minipage}
\begin{minipage}{0.28\textwidth}
\centering
\textbf{Kumar Kshitij Patel} \\
Toyota Technological \\
Institute at Chicago \\
\small\url{kkpatel@ttic.edu} 
\end{minipage}
\begin{minipage}{0.23\textwidth}
\centering
\textbf{Nathan Srebro} \\
Toyota Technological \\
Institute at Chicago \\
\small\url{nati@ttic.edu} 
\end{minipage}
}
\date{}
\begin{document}
\maketitle
\begin{abstract}
We analyze Local SGD (aka parallel or federated SGD) and Minibatch SGD in the heterogeneous distributed setting, where each machine has access to stochastic gradient estimates for a different, machine-specific, convex objective; the goal is to optimize w.r.t.~the average objective; and machines can only communicate intermittently.  We argue that, (i) Minibatch SGD (even without acceleration) dominates all existing analysis of Local SGD in this setting, (ii) accelerated Minibatch SGD is optimal when the heterogeneity is high, and (iii) present the first upper bound for Local SGD that improves over Minibatch SGD in a non-homogeneous regime. 
\end{abstract}

\section{Introduction}\label{sec:intro}

Given the massive scale of many modern machine learning models and datasets, it has become important to develop better methods for distributed training. A particularly important setting for distributed stochastic optimization/learning, and the one we will consider in this work is characterized by, (1) training data that is distributed across many parallel devices rather than centralized in a single node; (2) this data is distributed \emph{heterogeneously}, meaning that each individual machine has data drawn from a \emph{different} distribution; and (3) the frequency of communication between the devices is limited. The goal is to find a single consensus predictor that performs well on all the local distributions simultaneously \cite{boyd2011distributed,bertsekas1989parallel}. The heterogeneity of the data significantly increases the difficulty of distributed learning because the machines' local objectives may be completely different, so a perfect model for one distribution might be terrible for all the others. Limited communication between devices can make it even more challenging to find a good consensus.

One possible approach is using Minibatch Stochastic Gradient Descent (SGD). Between communications, each machine computes one large minibatch stochastic gradient using its local data; then the machines average their local minibatch gradients, yielding one extra-large minibatch gradient comprising data from all the machines' local distributions, which is used for a single SGD update. Minibatch SGD can also be accelerated to improve its convergence rate \cite{ghadimi2012optimal,ghadimi2013optimal}. 
This algorithm is simple, ubiquitous, and performs very successfully in a variety of settings. 

However, there has recently been great interest in another algorithm, Local SGD (also known as Parallel SGD or Federated SGD) \cite{mcmahan2016communication,zinkevich2010parallelized,stich2018local}, which has been suggested as an improvement over the na\"ive approach of Minibatch SGD. For Local SGD, each machine independently runs SGD on its local objective and, each time they communicate, the machines average together their local iterates.
Local SGD is a very appealing approach---unlike Minibatch SGD, each machine is constantly improving its local model's performance on the local objective, even when the machines are not communicating with each other, so the number of updates is decoupled from the number of communications. In addition to the intuitive benefits, Local SGD has also performed well in many applications \cite{zhang2016parallel,lin2018don,Zhou2018:Kaveraging}.

But can we show that Local SGD is in fact better then Minibatch SGD for convex, heterogeneous objectives?  Does it enjoy better guarantees, and in what settings?  To answer these questions we need to analyze the performance of Local SGD and Minibach SGD.

A number of recent papers have analyzed the convergence properties of Local SGD in the heterogeneous data setting \cite[e.g.][]{wang2018cooperative,karimireddy2019scaffold,khaled2019better,koloskova2020unified}. But, as we will discuss, none of these Local SGD guarantees can show improvement over Minibatch SGD, even without acceleration, and in many regimes they are much worse.  Is this a weakness of the analysis? Can the bounds be improved to show that Local SGD is actually better than Minibatch SGD in certain regimes? Or even at least as good?

Until recently, the situation was similar also for homogeneous distributed optimization, where all the machines' data distributions are the same. There were many published analyses of Local SGD, none of which showed any improvement over Minibatch SGD, or even matched Minibatch SGD's performance.  Only recently \citet{woodworth2020local} settled the issue for the homogeneous case, showing that on the one hand, when communication is rare, Local SGD provably outperforms even accelerated Minibatch SGD, but on the other, Local SGD does not always match Minibatch SGD's performance guarantees and in some situations is provably {\em worse} than Minibatch SGD.

How does this situation play out in the more challenging, and perhaps more interesting, heterogeneous setting?  Some have suggested that the more difficult heterogeneous setting is where we should expect Local SGD to really shine, and where its analysis becomes even more relevant. 
So, how does heterogeneity affect both Local SGD and Minibatch SGD, and the comparison between them?  Do any existing analyses of Local SGD show improvement over Minibatch SGD in the heterogeneous setting?  Is Local SGD still better than Minibatch SGD when communication is rare?  Does the added complexity of heterogeneity perhaps necessitate the more sophisticated Local SGD approach, as some have suggested?  What is the optimal method in this more challenging setting?

In fact, \citet{karimireddy2019scaffold} recently argued that heterogeneity can be particularly problematic for Local-SGD, proving a lower bound for it that shows degradation as heterogeneity increases.  In Section \ref{sec:local-sgd}, we discuss how this lower bound implies that Local SGD is strictly {\em worse} than Minibatch SGD when the level of heterogeneity is very large. However, even with \citeauthor{karimireddy2019scaffold}'s lower bound, it is not clear whether or not Local SGD can improve over Minibatch SGD for merely moderately heterogeneous objectives.

In this paper, we expand on \citeauthor{karimireddy2019scaffold}'s observation about the ill-suitability of Local SGD to the heterogeneous setting.  We prove that existing analysis of Local SGD for heterogeneous data cannot be substantially improved unless the setting is very near-homogeneous.  This is disappointing for Local SGD, because it indicates that unless the level of heterogeneity is very low, the performance it can ensure is truly worse than Minibatch SGD, even without acceleration and regardless of the frequency of communication. At the same time, we provide a more refined analysis of Local SGD showing that Local SGD does, in fact, improve over Minibatch SGD when the level of heterogeneity is sufficiently small (i.e.~the problem is not exactly homogeneous, but it is at least near-homogeneous). This is the first result to show that Local SGD improves over Minibatch SGD in \emph{any} non-homogeneous setting. 

\ignore{Unlike the situation described above for near-homogeneity,} We further show that with even moderately high heterogeneity (or when we do not restrict the dissimilarity between machines), Accelerated Minibatch SGD is in fact optimal for heterogeneous stochastic distributed optimization!  This is because, as we show, Minibatch SGD and its accelerated variant are \ignore{in some sense,}immune to the heterogeneity of the problem.  Perhaps the most important conclusion of our study is that we identify a regime, in which the data is moderately heterogeneous, where Accelerated Minibatch SGD may \emph{not} be optimal and where Local SGD is certainly worse than Minibatch SGD, and so new methods may be needed.

\section{Setup}\label{sec:setup}


We consider heterogeneous distributed stochastic convex optimization/learning using $M$ machines, each of which has access to its own data distribution $\mc{D}^m$. 
The goal is to find an approximate minimizer of the average of the local objectives:
\begin{equation}\label{eq:objective}
\min_{x\in\R^d} F(x) := \frac{1}{M}\sum_{m=1}^M F_m(x) := \frac{1}{M}\sum_{m=1}^M \underset{z^m \sim \mc{D}^m}{\E} f(x;z^m)
\end{equation}
This objective captures, for example, supervised learning where $z = (z_\textrm{features}, z_\textrm{label})$ and $f(x;z)$ is the loss of the predictor, parametrized by $x$, on the instance $z$.  The per-machine distribution $\mc{D}^m$ can be thought of as the empirical distribution of data on machine $m$, or as source distribution which varies between servers, regions or devices.

We focus on a setting in which each machine performs local computations using samples from its own distribution, and is able to communicate with other machines periodically in order to build consensus. This situation arises, for example, when communication is expensive relative to local computation, so it is advantageous to limit the frequency of communication to improve performance.

Concretely, we consider distributed first-order algorithms where each machine computes a total of $T$ stochastic gradients, and is limited to communicate with the others $R$ times. We divide the optimization process into $R$ rounds, where each round consists of each machine calculating and processing $K = T/R$ stochastic gradients, and then communicating with all other machines\footnote{Variable-length rounds of communication are also possible, but do not substantially change the picture.}. Each stochastic gradient for machine $m$ is given by $\nabla f(x;z^m)$ for an independent $z^m \sim \mc{D}^m$.

A simple algorithm for this setting is {\bf Minibatch SGD}. During each round, each machine computes $K$ stochastic gradients $\{g^m_{r,k}\}_{k\in [K]}$ at the same point $x_{r}$, and communicates its average $g_r^m$. Then, the averages from all machines are averaged, to obtain an estimate $g_r$ of the gradient of the overall objective. The estimate $g_r$ is based on all $KM$ stochastic gradients, and is used to obtain the iterate $x_{r+1}$.  Overall, we perform $R$ steps of SGD, each step based on a mini-batch of $KM$ (non-i.i.d.) samples.  To summarize, initializing at some $x_0$, Minibatch SGD operates as follows,
\begin{equation}\label{eq:mbsgd-updates}
\begin{aligned}
g^m_{r,k} &= \nabla f(x_{r}; z_{r,k}^m),\;\; z_{r,k}^m \sim \mc{D}^m,\;\; m=1\ldots M, k=0\ldots K-1,\\
g_r &= \frac{1}{M}\sum_{m=1}^{M} g^m_r \quad \textrm{where} \; g^m_r = \frac{1}{K}\sum_{k=1}^{K} g^m_{r,k},\\
x_{r+1} &= x_{r} - \eta_{r}g_r,
\end{aligned}
\end{equation}
Alternatively, we can perform the same stochastic gradient calculation and aggregation of $g_r$ but replace the simple SGD updates with more sophisticated updates and carefully tuned momentum parameters. 
Throughout this paper, we use ``{\bf Accelerated Minibatch SGD}'' to refer to two accelerated variants of SGD: AC-SA \cite{ghadimi2012optimal} for convex objectives, and multi-stage AC-SA \cite{ghadimi2013optimal} for strongly convex objectives. Algorithmic details including pseudo-code are provided in Appendix \ref{sec:ac_mbsgd_rates}. 

Unlike Minibatch SGD, where each machine spends an entire round calculating stochastic gradients at the same point, \textbf{Local SGD} allows the machines to update local iterates throughout the round based on their own stochastic gradient estimates. Each round starts with a common iterate on all machines, then each machine executes $K$ steps of SGD on its own local objective and communicates the final iterate. These iterates are averaged to form the starting point for the next round. Using $x^m_{r,k}$ to denote the iterate after $r$ rounds and $k$ local steps on machine $m$ and initializing at $x^m_{0,0} = x_0$ for all $m\in [M]$, Local SGD operates as follows,
\begin{equation}\label{eq:localsgd-updates}
\begin{aligned}
g^m_{r,k} &:= \nabla f(x^m_{r,k}; z_{r,k}^m),\;\; z_{r,k}^m \sim \mc{D}^m,\;\; m=1\ldots M, k=0\ldots K-1,\\
x_{r,k+1}^m &= x_{r,k}^m - \eta_{r,k} g^m_{r,k}\\
x_{r+1,0}^m &= x_{r+1} := \frac{1}{M}\sum_{m=1}^{M} x^m_{r,K}.
\end{aligned}
\end{equation}

\paragraph{An alternative viewpoint: reducing communication.} Another way of viewing the problem is as follows: consider as a baseline processing $T$ stochastic gradient on each machine, but communicating at every step, i.e.~$T$ times, thus implementing $T$ steps of SGD, using a mini-batch of $M$ samples (one from each machine).   Can we achieve the same performance as this baseline while communicating less frequently?  Communicating only $R$ times instead of $T$ precisely brings us back to the model we are considering, and all the methods discussed above ($R$ steps of MB-SGD with mini-batches of size $KM=TM/R$, or Local SGD with $T$ steps per machine and $R$ averaging steps) reduce the communication.  Checking that $R<T$ rounds achieve the same accuracy as the dense communication baseline is a starting point, but the question is how small can we push $R$ (while keeping $T=KR$ fixed) before accuracy degrades. Better error guarantees (in terms of $K,M$ and $R$) mean we can use a smaller $R$ with less degradation, and the smallest $R$ with no asymptotic degradation can be directly calculated from the error guarantee \citep[see, e.g., discussion in][]{cotter2011better}.

In order to prove convergence guarantees, we rely on several assumptions about the problem.  Central to our analysis will be a parameter $\sdiff^2$ which, in some sense, describes the level of heterogeneity in the problem.  It is possible to analyze heterogeneous distributed optimization without any bound on the relatedness of the local objective---this is the typical setup in the consensus optimization literature \cite[e.g.][]{boyd2011distributed,nedic2009distributed,nedic2010constrained,ram2010distributed} and indeed our analysis of Minibatch SGD does not rely on this parameter.  Nevertheless, such an assumption \emph{is} required by the existing analyses of Local SGD \cite{khaled2019better,koloskova2020unified,karimireddy2019scaffold} which we would like to compare to, and, as we will show, is in fact necessary for the convergence of Local SGD. Following prior work \cite{koloskova2020unified,karimireddy2019scaffold}, we define 
\begin{equation}\label{eq:heterogeneity-parameter}
\sdiff^2 = \frac{1}{M}\sum_{m=1}^M \nrm*{\nabla F_m(x^*)}^2
\end{equation}
Since $\frac{1}{M}\sum_{m=1}^M \nabla F_m(x^*) = \nabla F(x^*) = 0$, this captures, in some sense, the variation in the local gradients {\em at the optimum}. When $\sdiff^2 = 0$, all $F_m$ share at least one minimizer, and when $\sdiff^2$ is large, there is great disagreement between the local objectives. While homogeneous objectives (i.e.~$F_m=F$) have $\sdiff^2 = 0$, the converse is not true! Even when $\sdiff^2 = 0$, $\nabla F_m(x)$ might be different than $\nabla F_{n}(x)$ for $x \neq x^*$.

Throughout, we assume that $f(\cdot;z)$ is {\bf $H$-smooth} for all $z$ meaning
\begin{equation} \label{eq:def-smooth}
    f(y;z) \leq f(x;z) + \inner{\nabla f(x;z)}{y-x} + \frac{H}{2}\nrm{x-y}^2\qquad\forall_{x,y,z},
\end{equation}
We assume the {\bf variance of the stochastic gradients} on each machine is bounded, either uniformly
\begin{equation}\label{eq:def-sigma}
\E_{z^m\sim\mc{D}^m}\nrm*{\nabla f(x;z^m) - \nabla F_m(x)}^2 \leq \sigma^2 \qquad\forall_{x,m}
\end{equation}
or {\bf only at the optimum} $x^*$, i.e.
\begin{equation}\label{eq:def-sigma-star}
\E_{z^m\sim\mc{D}^m}\nrm*{\nabla f(x^*;z^m) - \nabla F_m(x^*)}^2 \leq \sigma_*^2. \qquad\forall_{m}
\end{equation}

We consider guarantees of two forms:  For {\bf strongly convex local objectives}, we consider guarantees that depend on the {\bf parameter of strong convexity}, $\lambda$, for all the machines' objectives,
\begin{equation} \label{eq:def-strongly-convex}
    F_m(y) \geq F_m(x) + \inner{\nabla F_m(x)}{y-x} + \frac{\lambda}{2}\nrm{x-y}^2\qquad\forall_{x,y,m}
\end{equation}
as well as the {\bf initial sub-optimality}  $F(0) - F(x^*) \leq \Delta$, besides the smoothness, heterogeneity bound, and variance as discussed above.

We also consider guarantees for {\bf weakly convex objectives} that just rely on each local objective $F_m$ being convex (not necessarily strongly), as well as a bound on the norm of the optimum $\nrm{x^*} \leq B$, besides the smoothness, homogeneity bound, and variance as discussed above. 

\removed{
In what follows, we will consider two combinations of these assumptions, one for convex functions, and another for strongly convex functions:
\begin{assumption}[Convex]\label{as:convex}
For all $z$, $f(\cdot;z)$ is $H$-smooth \eqref{eq:def-smooth}, $F_m$ is convex \eqref{eq:def-strongly-convex}, the stochastic gradient variance is bounded by $\sigma^2$ everywhere and by $\sigma_*^2$ at the optimum \eqref{eq:def-sigma} and \eqref{eq:def-sigma-star}, and the heterogeneity is bounded \eqref{eq:heterogeneity-parameter}. Finally, we assume that the distance to the optimum is bounded $\nrm{x^*} \leq B$.
\end{assumption}
\begin{assumption}[Strongly Convex]\label{as:strongly-convex}
For all $z$, $f(\cdot;z)$ is $H$-smooth \eqref{eq:def-smooth}, $F_m$ is $\lambda$-strongly convex \eqref{eq:def-strongly-convex}, the stochastic gradient variance is bounded by $\sigma^2$ everywhere and by $\sigma_*^2$ at the optimum \eqref{eq:def-sigma} and \eqref{eq:def-sigma-star}, and the heterogeneity is bounded \eqref{eq:heterogeneity-parameter}. Finally, we assume that the initial suboptimality is bounded $F(0) - F(x^*) \leq \Delta$.
\end{assumption}
}
\removed{
In order to prove convergence guarantees for these algorithms, we make a subset of the following assumptions about the nature of the objective:
\begin{assumption}[Regularity]\label{as:regularity}
    $f(\cdot;z)$ is $H$-smooth and $F_m$ is $\lambda(\geq 0)$-strongly convex, i.e., $\forall m$, 
    \begin{equation}
    \frac{\lambda}{2}\nrm*{x-y}^2 \leq F_m(x) - F_m(y) - \inner{\nabla F_m(y)}{x-y} \leq \frac{H}{2}\nrm{x-y}^2,\ \forall_{x,y}.    
    \end{equation}
     $\lambda=0$ corresponds to convex functions.
\end{assumption}

\begin{assumption}[Feasibility]\label{as:bounded}
    Let $x^* := \argmin_x F(x)$, then we assume either that the norm of the minimizer is bounded, i.e., $\nrm*{x^\star}\leq B$, or the initial sub-optimality is bounded, i.e.,\\ $F(x_0) - F(x^*) \leq \Delta$.
\end{assumption}

\begin{assumption}[Variance bound]\label{as:variance}
    The variance of the stochastic gradients obtained on each machine is bounded, i.e., for all $x$ and $m$,
    \begin{equation}
        \E_{z\sim\mc{D}^m}\nrm*{\nabla f(x;z^m) - \nabla F_m(x)}^2 \leq \sigma^2.
    \end{equation} 
    Most of the results only require that this holds only at the optimal value, i.e., for all $m$,
    \begin{equation}
        \E_{z^m\sim\mc{D}^m}\nrm*{\nabla f(x^\star;z^m) - \nabla F_m(x^\star)}^2 \leq \sigma_\star^2.
    \end{equation}
\end{assumption}

\begin{assumption}[Heterogeneity] \label{as:heterogeneity}
    To capture the level of heterogeneity in the problem we define,
    \begin{equation}
        \sdiff^2 = \frac{1}{M}\sum_{m=1}^M \nrm*{\nabla F_m(x^*)}^2.
    \end{equation}
\end{assumption}
We note that 
much of the existing consensus optimization literature makes no specific assumptions about how similar or dissimilar the component objectives are {\color{red} TODO: citations?}. Nevertheless, such an assumption \emph{is} required by the existing analyses of Local SGD, and, as we will show, it is useful for comparing between Local and Minibatch SGD.

Since $\frac{1}{M}\sum_{m=1}^M \nabla F_m(x^*) = \nabla F(x^*) = 0$, this captures, in some sense, the variation in the local gradients at the optimum. When $\sdiff^2 = 0$, it means that all of the local functions share a minimizer, and when $\sdiff^2$ is large, there is great disagreement between the local objectives. We note that while homogeneous objectives have $\sdiff^2 = 0$, the converse is not true since even when $\sdiff^2 = 0$, $\nabla F_m(x)$ might be different than $\nabla F_{n}(x)$ for $x \neq x^*$.
} 

\section{Minibatch SGD and Accelerated Minibatch SGD}\label{sec:mbsgd}
To begin, we analyze the worst-case error of Minibatch and Accelerated Minibatch SGD in the heterogeneous setting. A simple observation is that, despite the heterogeneity of the objective, the minibatch gradients $g_r$ \eqref{eq:mbsgd-updates} are unbiased estimates of $\nabla F$, i.e.~the overall objective's gradient:
\begin{equation}
\E g_r = \E\brk*{\frac{1}{MK}\sum_{m=1}^{M}\sum_{k=1}^{K} \nabla f(x_{r}; z_{r,k}^m)} = \frac{1}{MK}\sum_{m=1}^{M}\sum_{k=1}^{K} \nabla F_m(x_{r}) = \nabla F(x_r)
\end{equation}
We are therefore updating using unbiased estimates of $\nabla F$, and can appeal to standard analysis for (accelerated) SGD.  To do so, we calculate the variance of these estimates:
\begin{align}
\E\nrm*{g_r - \nabla F(x_r)}^2 
&= \E\nrm*{\frac{1}{MK}\sum_{m=1}^{M}\sum_{k=1}^{K} \nabla f(x_{r}; z_{r,k}^m) - \nabla F(x_r)}^2 \\
&= \frac{1}{M^2K^2}\sum_{m=1}^{M}\sum_{k=1}^{K}\E\nrm*{\nabla f(x_{r}; z_{r,k}^m) - \nabla F_m(x_r)}^2 
\leq \frac{\sigma^2}{MK}
\end{align}
Interestingly, the variance is always reduced by $MK$ and is not effected by the heterogeneity $\sdiff$.  Plugging this calculation\footnote{A similar calculation establishes the variance at $x^*$ is $\sigma^2_*/MK$.} into the analysis of SGD (see details in Appendix \ref{app:mbsgd-upper-bound-proofs}) yields:
\begin{restatable}{theorem}{mbsgdUB}\label{thm:mbsgd-upper-bound}
A weighted average of the Minibatch SGD iterates satisfies for a universal constant $c$
\begin{align*}
\E F(\hat{x}) - F^* &\leq c\cdot\prn*{\frac{HB^2}{R} + \frac{\sigma_* B}{\sqrt{MKR}}} &&\textrm{under the convex assumptions,} \\
\E F(\hat{x}) - F^* &\leq c\cdot\prn*{\frac{H\Delta}{\lambda}\exp\prn*{-\frac{\lambda R}{2H}} + \frac{\sigma_*^2}{\lambda MKR}}&&\textrm{under the strongly convex assumptions.}
\end{align*}
And for Accelerated Minibatch SGD\footnote{This analysis can likely also be stated in terms of $\sigma_*$, but this does not easily follow from existing work on accelerated SGD.} it guarantees
\begin{align*}
\E F(\hat{x}) - F^* &\leq c\cdot\prn*{\frac{HB^2}{R^2} + \frac{\sigma B}{\sqrt{MKR}}} &&\textrm{under the convex assumptions,}\\
\E F(\hat{x}) - F^* &\leq c\cdot\prn*{\Delta\exp\prn*{-\frac{\sqrt{\lambda}R}{c_3\sqrt{H}}} + \frac{\sigma^2}{\lambda MKR}}&&\textrm{under the strongly convex assumptions.}
\end{align*}
\end{restatable}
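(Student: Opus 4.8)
The plan is to recognize Minibatch SGD and Accelerated Minibatch SGD as simply $R$ steps of ordinary, respectively accelerated, SGD applied to the \emph{single} objective $F$ with the aggregated oracle $g_r$, and then to quote known convergence guarantees for those methods. The two ingredients that power this reduction are already in hand: $g_r$ is unbiased for $\nabla F(x_r)$, and its variance is at most $\sigma^2/(MK)$ everywhere and at most $\sigma_*^2/(MK)$ at $x^*$. To these I would add two elementary observations: (i) $F=\frac1M\sum_m F_m$ is $H$-smooth — each $F_m=\E_{z^m} f(\cdot;z^m)$ inherits the from-above smoothness \eqref{eq:def-smooth}, and, being convex, is therefore $H$-gradient-Lipschitz, a property preserved under averaging; and (ii) $F$ is $\lambda$-strongly convex whenever every $F_m$ is. Thus $F$ meets exactly the regularity hypotheses of the off-the-shelf analyses.

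Given the reduction, each of the four bounds follows by substituting a variance $\wt{\sigma}^2\in\{\sigma^2/(MK),\,\sigma_*^2/(MK)\}$ and ``number of steps'' $=R$ into a known rate. For Accelerated Minibatch SGD I would invoke AC-SA \cite{ghadimi2012optimal} in the convex case, whose guarantee has the shape $\frac{HB^2}{R^2}+\frac{\wt{\sigma} B}{\sqrt R}$, and multi-stage AC-SA \cite{ghadimi2013optimal} in the strongly convex case, whose guarantee has the shape $\Delta\exp(-c\sqrt{\lambda/H}\,R)+\frac{\wt{\sigma}^2}{\lambda R}$, in both cases with $\wt{\sigma}^2=\sigma^2/(MK)$. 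For unaccelerated Minibatch SGD I would likewise quote the standard smooth-SGD rates $\frac{HB^2}{R}+\frac{\wt{\sigma} B}{\sqrt R}$ (convex) and $\frac{H\Delta}{\lambda}\exp(-\lambda R/(2H))+\frac{\wt{\sigma}^2}{\lambda R}$ (strongly convex), the precise statements of which I would collect in Appendix~\ref{app:mbsgd-upper-bound-proofs}. Note that $\sdiff$ never enters, because it never entered the variance computation — this is the precise sense in which the Minibatch methods are immune to heterogeneity.

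The only step requiring genuine care is obtaining the two unaccelerated bounds in terms of $\sigma_*$ (the variance at the optimum) rather than the uniform $\sigma$. Instead of the crude uniform bound I would control the oracle variance at an arbitrary iterate by writing $\nabla f(x;z^m)-\nabla F_m(x) = \big(\nabla f(x;z^m)-\nabla f(x^*;z^m)\big)-\big(\nabla F_m(x)-\nabla F_m(x^*)\big)+\big(\nabla f(x^*;z^m)-\nabla F_m(x^*)\big)$, then applying $H$-smoothness of $f(\cdot;z^m)$ to the first difference and \eqref{eq:def-sigma-star} to the last, which gives $\E\nrm*{g_r-\nabla F(x_r)}^2\lesssim \prn*{\sigma_*^2+H^2\nrm*{x_r-x^*}^2}/(MK)$. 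Feeding this ``variance affine in the squared distance'' into the usual SGD recursion — where, for the step sizes in play ($\eta=O(1/H)$, so the coefficient $\eta H^2/(MK)$ of the extra term is negligible next to the contraction/descent terms), the $H^2\nrm*{x_r-x^*}^2/(MK)$ contribution is absorbed — recovers the claimed $\sigma_*$-rates. This absorption is the mild technical crux, and I would carry it out in Appendix~\ref{app:mbsgd-upper-bound-proofs}. No comparable variance-at-optimum analysis is available for accelerated SGD (hence the footnote), so the accelerated bounds remain stated with $\sigma$; that gap reflects a limitation of existing accelerated-SGD theory rather than of the reduction itself.
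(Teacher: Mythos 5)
Your overall reduction is exactly the paper's: view Minibatch SGD as $R$ steps of serial (accelerated) SGD on $F$ driven by the aggregated oracle $g_r$, which is unbiased with variance $\sigma^2/(MK)$ (and $\sigma_*^2/(MK)$ at $x^*$), and quote AC-SA / multi-stage AC-SA for the accelerated bounds. That half of your argument is correct and matches Appendix~\ref{app:mbsgd-upper-bound-proofs}. The gap is in the step you yourself flag as the crux: obtaining the unaccelerated rates in terms of $\sigma_*$. Your variance bound $\E\nrm*{g_r-\nabla F(x_r)}^2\lesssim \prn*{\sigma_*^2+H^2\nrm*{x_r-x^*}^2}/(MK)$ is true, but the claimed ``absorption'' of the $H^2\nrm*{x_r-x^*}^2/(MK)$ term does not go through with only $\eta=O(1/H)$. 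In the convex case ($\lambda=0$) there is no negative $\nrm*{x_r-x^*}^2$ term anywhere in the recursion to absorb it into, and the descent term $-\eta\,\E\brk*{F(x_r)-F^*}$ cannot dominate a squared-distance term (convexity gives $F(x)-F^*\leq \frac{H}{2}\nrm*{x-x^*}^2$, i.e.\ the inequality points the wrong way); unrolling the resulting recursion instead produces a multiplicative factor of order $\exp\prn*{\eta^2H^2R/(MK)}$, which with $\eta\asymp 1/H$ is $\exp\prn*{\Omega(R/(MK))}$ and blows up whenever $R\gg MK$, and shrinking $\eta$ to tame it ruins the $HB^2/R$ term. In the strongly convex case, absorbing $\eta^2H^2\nrm*{x_r-x^*}^2/(MK)$ into $-\lambda\eta\nrm*{x_r-x^*}^2$ (or into $-\eta(F(x_r)-F^*)\leq-\tfrac{\lambda\eta}{2}\nrm*{x_r-x^*}^2$) forces the extra stepsize constraint $\eta\lesssim \lambda MK/H^2$, which degrades the exponential term to roughly $\exp\prn*{-c\min\crl*{\lambda R/H,\ \lambda^2MKR/H^2}}$ unless $MK\gtrsim H/\lambda$ — strictly weaker than the claimed $\exp(-\lambda R/(2H))$.

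The missing idea is to bound the excess second moment by \emph{function suboptimality} rather than by squared distance, which is what the paper does via co-coercivity (Lemma~\ref{lem:co-coercivity}): decomposing $\nabla f(x_r;z)=\prn*{\nabla f(x_r;z)-\nabla f(x^*;z)}+\nabla f(x^*;z)$ and using $\nrm*{\nabla f(x_r;z)-\nabla f(x^*;z)}^2\leq 2H\prn*{f(x_r;z)-f(x^*;z)-\inner{\nabla f(x^*;z)}{x_r-x^*}}$, whose average over $m$ and $z$ is exactly $2H\prn*{F(x_r)-F^*}$ (here $\tfrac1M\sum_m\nabla F_m(x^*)=0$ is used, and the $1/(MK)$ reduction is only needed, and only available, for the mean-zero part at $x^*$), yields $\E\nrm*{g_r}^2\leq 4H\,\E\brk*{F(x_r)-F^*}+2\sigma_*^2/(MK)$. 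This term is absorbed into the $-2\eta\,\E\brk*{F(x_r)-F^*}$ descent term under the sole condition $\eta\leq 1/(4H)$, independent of $M$, $K$, $\lambda$, after which averaging (convex) or the stepsize/weighting scheme of Lemma~\ref{lem:sc-exp-rate} (strongly convex) gives the stated rates. With this replacement your outline becomes the paper's proof; without it, the two unaccelerated $\sigma_*$-bounds are not established.
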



The most important feature of these guarantees is that they are completely independent of $\sdiff^2$. Since these upper bounds are known to be tight in the homogeneous case (where $\sdiff^2 = 0$) \cite{nemirovskyyudin1983,nesterov2004introductory}, this means that both algorithms are essentially immune to the heterogeneity of the problem, performing equally well for homogeneous objectives as they do for arbitrarily heterogeneous ones. 

\section{Local SGD for heterogeneous data}\label{sec:local-sgd}
Recently, \citet{khaled2019better} and \citet{koloskova2020unified} analyzed Local SGD for the heterogeneous data setting. Their guarantees are summarized in Tables \ref{tab:prior-local-analysis-convex} and \ref{tab:prior-local-analysis-strongly-convex} along with the analysis of (Accelerated) Minibatch SGD.
Also included is the table is a guarantee for \textsc{SCAFFOLD}\footnote{
\citet{karimireddy2019scaffold} also analyzed a variant of heterogeneous Local SGD (refereed to as \textsc{FedAvg}), that, as we discuss in Appendix  \ref{app:scaffold}, ends up essentially equivalent to Minibatch SGD. The guarantee is thus not on the performance of Local SGD as in \eqref{eq:localsgd-updates}, but rather a loose upper bound for Minibatch SGD, and so we do not include it in the Tables.  \citeauthor{karimireddy2019scaffold} consider the more general framework where only a random subset of the machines are used in each round---in the Tables here we present the analysis as it applies to our setting where all machines are used in each round.  See Appendix \ref{app:scaffold} for more details.}
\citep{karimireddy2019scaffold}, a related method for heterogeneous distributed optimization.

Upon inspection, among the previously published upper bounds for heterogeneous Local SGD (that we are aware of), \citeauthor{koloskova2020unified}'s is the tightest, and dominates the others.  However, even this guarantee is the sum of the Minibatch SGD bound plus additional terms, and is thus worse in every regime, and cannot show improvement over Minibatch SGD. But does this reflect a weakness of their analysis, or a true weakness of Local SGD? Indeed, \citet{woodworth2020local} showed a tighter upper bound for Local SGD {\em in the homogeneous case}, which improves over \citet{koloskova2020unified} (for $\sdiff=0$) and {\em does} show improvement over Minibatch SGD when communication is infrequent.  But can we generalize \citeauthor{woodworth2020local}'s bound also to the heterogeneous case?  
Optimistically, we might hope that the $(\sdiff / R)^{2/3}$ dependence on heterogeneity in these bounds could be improved. After all, Minibatch SGD's rate is independent of $\sdiff^2$, so perhaps Local SGD's could be, too? 

Unfortunately, it is already known that some dependence on $\sdiff$ is necessary, as \citet{karimireddy2019scaffold} have shown a lower bound\footnote{As stated by \citet[Theorem II]{karimireddy2019scaffold}, the lower bound is for their \textsc{FedAvg} method, which is the same as \eqref{eq:inner-outer-updates} in Section \ref{sec:inner-outer}. However, their lower bound should be qualified, since with an optimal choice of stepsize parameters the $\sdiff$-dependence can be avoided and the lower bound does not hold (see Section \ref{sec:inner-outer} and Appendix \ref{app:scaffold}).  The more accurate statement is that their lower bound is for ``traditional'' Local SGD, i.e.~when $\eta_{\textrm{inner}}=\eta_{\textrm{outer}}$ in the notation of Section \ref{sec:inner-outer}, or $\eta_g=1$ in \citeauthor{karimireddy2019scaffold}'s notation.} of $\sdiff^2 / (\lambda R^2)$ in the strongly convex case, which suggests a lower bound of $\sdiff B / R$ in the weakly convex case. But perhaps the  \citeauthor{koloskova2020unified} analysis can be improved to match this bound? If the $\sdiff B / R$ term from \citeauthor{karimireddy2019scaffold}'s lower bound were possible, it would be lower order than $HB^2/R$ for $\sdiff < HB$, and we would see no slow-down until the level of heterogeneity is fairly large. On the other hand, if the $(\sdiff / R)^{2/3}$ term from \citeauthor{koloskova2020unified} cannot be improved, then we see a slowdown as soon as $\sdiff = \Omega(HB/R)$, i.e.~even for very small $\sdiff$! Which is the correct rate here?

We now show that the poor dependence on $\sdiff$ from the \citeauthor{koloskova2020unified} analysis cannot be improved. Consequently, for sufficiently heterogeneous data, Local SGD is strictly worse than Minibatch SGD, regardless of the frequency of communication, unless the level of heterogeneity is very small.
\begin{restatable}{theorem}{localSGDlowerbound}\label{thm:local-sgd-lower-bound}
For any $M$, $K$, and $R$ there exist objectives in four dimensions such that Local SGD initialized at zero and using any fixed stepsize $\eta$ will have suboptimality at least
\begin{align*}
\E F(\hat{x}) - F^* &\geq c\cdot\prn*{\min\crl*{\frac{HB^2}{R},\, \frac{\prn*{H\sdiff^2 B^4}^{1/3}}{R^{2/3}}} 
+ \frac{\prn*{H\sigma^2B^4}^{1/3}}{K^{2/3}R^{2/3}} + \frac{\sigma B}{\sqrt{MKR}}} \\
\E F(\hat{x}) - F^* &\geq c\cdot\prn*{\min\crl*{\Delta\exp\prn*{-\frac{6\lambda R}{H}},\, \frac{H\sdiff^2}{\lambda^2R^2}} + \min\crl*{\Delta,\,\frac{H\sigma^2}{\lambda^2 K^2R^2}} + \frac{\sigma^2}{\lambda MKR}}
\end{align*}
under the convex and strongly convex assumptions (for $H \geq 16\lambda$), respectively.
\end{restatable}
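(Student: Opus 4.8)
The plan is to prove each of the (at most three) additive terms of each bound by a separate low-dimensional hard instance and then superimpose them. Placing the instances on disjoint coordinate blocks makes $F = \frac1M\sum_m F_m$ a sum of functions acting on orthogonal blocks (with the heterogeneity budget $\sdiff^2$ split among the blocks); Local SGD with any fixed stepsize then decomposes into independent runs on the blocks, the smoothness, strong-convexity, variance, heterogeneity, $\nrm{x^*}\le B$ and $F(0)-F^*\le\Delta$ constraints add across blocks, and the suboptimality of the reported point is the sum of the per-block suboptimalities, so it suffices to exhibit each term on its own block (three to four coordinates in total). In every sub-construction we will ensure that all iterates $x^m_{r,k}$ lie, in the relevant coordinate, strictly on one side of the minimizer, so the bound holds for any convex combination of iterates and hence for whatever weighted average $\hat{x}$ the algorithm outputs.

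Two of the terms require little new work. The statistical terms $\sigma B/\sqrt{MKR}$ and $\sigma^2/(\lambda MKR)$ are information-theoretic and hold for any estimator built from $MKR$ stochastic gradients — in particular Local SGD with any stepsize — via the classical minimax lower bound for stochastic (strongly) convex optimization with $n = MKR$ gradient evaluations, applied to a one-dimensional quadratic whose oracle hides an unknown sign. The ``local-noise'' terms $(H\sigma^2 B^4/(K^2R^2))^{1/3}$ and $\min\{\Delta,\,H\sigma^2/(\lambda^2K^2R^2)\}$ are already present in the homogeneous case $\sdiff=0$, and we take them over from the construction of \citet{woodworth2020local}, in which the $K$ noisy local steps between two communications leave the averaged iterate with an error that the $R$ averaging rounds cannot wash out, and optimizing the single stepsize (subject to $\eta\le 1/H$ and to converging) produces exactly these terms; since they hold at $\sdiff=0$ they place no demand on heterogeneity and transplant to their own coordinate block.

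The crux is the deterministic ($\sigma=0$) heterogeneity term, $\min\{HB^2/R,\,(H\sdiff^2 B^4/R^2)^{1/3}\}$ in the convex case and $\min\{\Delta\exp(-6\lambda R/H),\,H\sdiff^2/(\lambda^2R^2)\}$ in the strongly convex case. A naive instance built from equal-curvature quadratics fails: there the $K$-fold local update is affine with the same linear part on every machine, averaging commutes with it, and the offsets $\nabla F_m(x^*)$ cancel, so the averaged iterate stays unbiased. We therefore couple two coordinates: a ``pull'' direction along which each machine's local objective is (essentially) linear with offsets $\pm\sdiff\hat c$ that average to zero, so that within a round machine $m$'s iterate drifts by $\Theta(\eta K\sdiff)$ along its own offset; and an ``output'' direction coupled to the pull direction through an \emph{even}, bounded-curvature function of the pull-coordinate displacement, so that the per-machine drift is converted into an output-coordinate force with the \emph{same sign on every machine} and hence survives averaging — pushing the averaged iterate a distance that grows with $\eta$ away from $x^*$. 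Since this bias grows with $\eta$ while making progress requires $\eta$ not too small, no single fixed stepsize avoids both penalties, and optimizing the resulting trade-off over $\eta\in(0,1/H]$ yields the claimed minimum. In the strongly convex case the output direction is given the minimal admissible curvature $\lambda$ while the coupling has strength $\sim H$, which is the source of the extra $H/\lambda$ factor relative to the lower bound of \citet{karimireddy2019scaffold} and of the requirement $H\ge 16\lambda$.

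Finally we assemble: the statistical block, the local-noise block, and the heterogeneity block(s) go on disjoint coordinates and are rescaled so that together they respect a single $H$, a single $\lambda$ (the gap $H\ge 16\lambda$ supplying exactly the slack the heterogeneity block needs), the variance bounds $\sigma^2$ and $\sigma_*^2$, the heterogeneity bound $\sdiff^2$, and $\nrm{x^*}\le B$ (resp.\ $F(0)-F^*\le\Delta$); summing the per-block lower bounds gives the theorem. The main obstacle is the heterogeneity block — engineering a bias that is genuinely persistent, i.e.\ that no single stepsize can eliminate (which is precisely what rules out plain quadratics and forces the coupled, non-affine construction), and then verifying that the large-$\eta$/small-$\eta$ trade-off produces exactly the $1/3$--$2/3$ exponents and the $\min$ structure in the statement.
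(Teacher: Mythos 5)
Your high-level skeleton matches the paper's: the noise terms are taken from the homogeneous construction of \citet{woodworth2020local}, the $\sigma B/\sqrt{MKR}$ and $\sigma^2/(\lambda MKR)$ terms are invoked as classical information-theoretic bounds, and a separate gadget supplies the heterogeneity term, with everything combined on (nearly) decoupled coordinates. But for the heterogeneity term---which is the actual content of this theorem---your proposal has a genuine gap, and it starts from a false premise. You argue that quadratics cannot produce a persistent bias because ``the $K$-fold local update is affine with the same linear part on every machine,'' and you conclude that a non-affine, even coupling between a ``pull'' and an ``output'' coordinate is forced. That conclusion only holds for \emph{equal-curvature} quadratics. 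The paper's instance is exactly a pair of one-dimensional quadratics with \emph{mismatched} curvatures, $F_1$ containing $\tfrac{L}{2}x_4^2+\sdiff x_4$ with $L=H/2$ and $F_2$ containing $\tfrac{\mu}{2}x_4^2-\sdiff x_4$ with a small tunable $\mu\in[\lambda,H/16]$: the two local affine maps then contract at different rates, the drifts toward $-\sdiff/L$ and $+\sdiff/\mu$ do not cancel under averaging, and Lemma~\ref{lem:fourth-coordinate} shows the averaged fourth coordinate is stuck at $\gtrsim \tfrac{\sdiff}{\mu}\prn*{\tfrac{L-\mu}{L}-(1-\mu\eta)^K}$ for every admissible fixed stepsize. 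So the ``coupled, non-affine construction'' you propose is not needed; more importantly, you never instantiate it: no explicit even coupling function is given, per-machine convexity, $\lambda$-strong convexity and $H$-smoothness of the coupled objective are not verified, the location of $x^*$ and the value of $\tfrac1M\sum_m\nrm{\nabla F_m(x^*)}^2$ (which is what $\sdiff^2$ means here, and which is delicate once coordinates are coupled nonlinearly) are not checked, and the claimed persistence of the bias for \emph{all} fixed stepsizes is asserted rather than proved.

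A second concrete gap is in how the $\min$ structure and the $1/3$--$2/3$ exponents arise. In the paper they come from a three-way case analysis on $\eta$ in which the \emph{small-stepsize} penalty is the term $\tfrac{\mu c^2}{2}(1-\mu\eta)^{KR}$ living on the \emph{first} coordinate (the $\mu(x_1-c)^2$ part of $G$), the \emph{large-stepsize} penalty is the fourth-coordinate bias, and then the free curvature parameter $\mu$ is tuned against $\sdiff,B,R$ (e.g.\ $\mu=(H\sdiff^2/(B^2R^2))^{1/3}$ versus $\mu=H/(6R)$) to yield $\min\crl*{HB^2/R,\;(H\sdiff^2B^4)^{1/3}/R^{2/3}}$, and $\mu=\lambda$ in the strongly convex case. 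Your plan has no analogue of this tunable curvature, and because you place the heterogeneity gadget on its own coordinate block, that block must also contain its own slow-convergence component to penalize small $\eta$ (otherwise $\eta\to 0$ on that block costs nothing there, and the penalty it incurs on the other blocks is already ``spent'' matching the noise terms). You describe the result as ``optimizing the trade-off over $\eta$,'' but the adversary does not get to choose $\eta$; the bound must hold uniformly in $\eta$ with the instance parameters chosen as a function of $(H,\lambda,B,\Delta,\sdiff,K,R)$ only. Until the coupling function is written down, its convexity/smoothness/heterogeneity verified, and this uniform-in-$\eta$ case analysis with a tunable curvature carried out, the central part of the theorem remains unproved in your proposal.
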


This is proven in Appendix \ref{app:local-sgd-lower-bound} using a similar approach to a lower bound for the homogeneous case from \citet{woodworth2020local}, and it is conceptually similar to the lower bounds for heterogeneous objectives of \citet{karimireddy2019scaffold}.  \citet{koloskova2020unified} also prove a lower bound, but specifically for 1-strongly convex objectives, which obscures the important role of the strong convexity parameter. 
\begin{table}
\renewcommand{\arraystretch}{1.2}
\centering
\begin{tabular}{ l l }
    \hline
    \hline 
    Method/Analysis & Worst-Case Error (i.e.~$\E F(\hat{x}) - F^* \lesssim$)  \\ 
    \hline
    \hline 
    \begin{tabular}{l} Minibatch SGD \\ {\small{Theorem \ref{thm:mbsgd-upper-bound}}} \end{tabular}
    & $\frac{HB^2}{R} + \frac{\sigma_* B}{\sqrt{MKR}}$
    \\\hline
    \begin{tabular}{l} Accelerated Minibatch SGD \\ {\small{Theorem \ref{thm:mbsgd-upper-bound}}} \end{tabular}
    & $\frac{HB^2}{R^2} + \frac{\sigma B}{\sqrt{MKR}}$
    \\\hline
    \begin{tabular}{l} Local SGD \\ {\small\citet{koloskova2020unified}} \end{tabular}
    & $\frac{HB^2}{R} + \frac{\sigma_* B}{\sqrt{MKR}} + \frac{\prn*{H\sdiff^2B^4}^{1/3}}{R^{2/3}} + \frac{\prn*{H\sigma_*^2B^4}^{1/3}}{K^{1/3}R^{2/3}} $
    \\\hline
    \begin{tabular}{l} Local SGD \\ {\small\citet{khaled2019better}} \end{tabular}
    & $\frac{HB^2}{R} + \frac{B\sqrt{\sigma_*^2 + \sdiff^2} }{\sqrt{MKR}} + \frac{\prn*{H(\sigma_*^2 + \sdiff^2)B^4}^{1/3}}{R^{2/3}} $
    \\\hline
    \begin{tabular}{l} \textsc{SCAFFOLD} \\ \citet{karimireddy2019scaffold} \end{tabular}
    & $\frac{HB^2}{R} + \frac{\sigma B}{\sqrt{MKR}} + 
    \frac{\sdiff^2}{HR} + \frac{\sigma \sdiff}{H\sqrt{MKR}}$
    \\\hline
    \begin{tabular}{l} Local SGD \\ {\small Theorem \ref{thm:local-sgd-uppper-bound}} \end{tabular}
    & $\frac{HB^2}{\mathbf{K}R} + \frac{\sigma_* B}{\sqrt{MKR}} + \frac{\prn*{H\bar{\zeta}^2B^4}^{1/3}}{R^{2/3}} + \frac{\prn*{H\sigma^2B^4}^{1/3}}{K^{1/3}R^{2/3}}$
    \\\hline
    \begin{tabular}{l} Local SGD Lower Bound \\ {\small{Theorem \ref{thm:local-sgd-lower-bound}}} \end{tabular}
    & $\min\crl*{\frac{HB^2}{R},\, \frac{\prn*{H\sdiff^2 B^4}^{1/3}}{R^{2/3}}} + \frac{\sigma B}{\sqrt{MKR}} + \frac{\prn*{H\sigma^2B^4}^{1/3}}{K^{2/3}R^{2/3}}$
    \\\hline
    \begin{tabular}{l} Algorithm-Independent Lower Bound \\ {\small{Theorem \ref{thm:dzr-lower-bound}}} \end{tabular}
    & $\min\crl*{\frac{HB^2}{R^2},\, \frac{\sdiff^2}{HR^2}} + \frac{\sigma B}{\sqrt{MKR}}$
    \\\hline
    \hline 
\end{tabular}
\caption{Guarantees under the convex assumptions. See \eqref{eq:def-zetabar} for a definition and discussion of $\bar{\zeta}$.
\label{tab:prior-local-analysis-convex}}
\end{table}

\begin{table}
\renewcommand{\arraystretch}{1.2}
\centering
\begin{tabular}{ l l }
    \hline 
    \hline 
    Method/Analysis & Worst-Case Error (i.e.~$\E F(\hat{x}) - F^* \lesssim$)  \\ 
    \hline 
    \hline 
    \begin{tabular}{l} Minibatch SGD \\ {\small{Theorem \ref{thm:mbsgd-upper-bound}}} \end{tabular}
    & $\frac{H\Delta}{\lambda}\exp\prn*{\frac{-\lambda R}{H}} + \frac{\sigma_*^2}{\lambda MKR}$
    \\\hline
    \begin{tabular}{l} Accelerated Minibatch SGD \\ {\small{Theorem \ref{thm:mbsgd-upper-bound}}} \end{tabular}
    & $\Delta\exp\prn*{\frac{-\sqrt{\lambda}R}{\sqrt{H}}} + \frac{\sigma^2}{\lambda MKR}$
    \\\hline
    \begin{tabular}{l} Local SGD \\ {\small\citet{koloskova2020unified}} \end{tabular}
    & $\frac{\sigma_*^2}{\lambda MKR} + \frac{H\sdiff^2}{\lambda^2 R^2} + \frac{H\sigma_*^2}{\lambda^2 KR^2}$ $\quad$ for $R \geq \tilde{\Omega}\prn*{\frac{H}{\lambda}\log\Delta}$
    \\\hline
    \begin{tabular}{l} \textsc{SCAFFOLD} \\ \citet{karimireddy2019scaffold} \end{tabular}
    & $\prn*{H\Delta + \frac{\lambda\sdiff^2}{H^2}} \exp\prn*{\frac{-\lambda R}{H}} + \frac{\sigma^2}{\lambda MKR}$
    \\\hline
    \begin{tabular}{l} Local SGD \\ {\small Theorem \ref{thm:local-sgd-uppper-bound}} \end{tabular}
    & $\frac{HB^2}{HKR+\lambda K^2R^2} + \frac{\sigma_* B}{\sqrt{MKR}} + \frac{H\bar{\zeta}^2}{\lambda^2R^2} + \frac{H\sigma^2}{\lambda^2 KR^2}$
    \\\hline
    \begin{tabular}{l} Local SGD Lower Bound \\ {\small{Theorem \ref{thm:local-sgd-lower-bound}}} \end{tabular}
    & $\min\crl*{\Delta\exp\prn*{\frac{-\lambda R}{H}},\, \frac{H\sdiff^2}{\lambda^2R^2}} + \frac{\sigma^2}{\lambda MKR} + \min\crl*{\Delta,\,\frac{H\sigma^2}{\lambda^2 K^2R^2}} $
    \\\hline
    \begin{tabular}{l} Algorithm-Independent \\ Lower Bound {\small{Theorem \ref{thm:local-sgd-lower-bound}}} \end{tabular}
    & $\min\crl*{\frac{\Delta \sqrt{\lambda}}{\sqrt{H}},\, \frac{\lambda\sdiff^2}{H^2} }\exp\prn*{-\frac{\sqrt{\lambda}R}{\sqrt{H}}} + \frac{\sigma^2}{\lambda MKR}$
    \\\hline
    \hline
\end{tabular}
\caption{Guarantees under the strongly convex assumptions, with log factors omitted. See \eqref{eq:def-zetabar} for a definition and discussion of $\bar{\zeta}$. \label{tab:prior-local-analysis-strongly-convex}}
\end{table}

In the convex case, this lower bound closely resembles the upper bound of \citet{koloskova2020unified}. Focusing on the case $H = B = \sigma^2 = 1$ to emphasize the role of $\sdiff^2$, the only gaps are between \textbf{(i)} a term $1/(K^{1/3}R^{2/3})$ vs $1/(K^{2/3}R^{2/3})$---a gap which also exists in the homogeneous case \cite{woodworth2020local}---and \textbf{(ii)} another term $\max\cb{1/R,(\sdiff / R)^{2/3}}$ vs $\min\cb{1/R,\ (\sdiff / R)^{2/3}}$. Consequently, for $\sdiff^2 \geq 1/R \Rightarrow (\sdiff / R)^{2/3} \geq 1/R$, the lower bound shows that Local SGD has error at least $1/R + 1/\sqrt{MKR}$ and thus performs strictly worse than Minibatch, regardless of $K$. This is quite surprising---Local SGD is often suggested as an improvement over Minibatch SGD for the heterogeneous setting, yet we see that even a small degree of heterogeneity can make it much worse. Furthermore, increasing the duration of each round, $K$, is often thought of as more beneficial for Local SGD than Minibatch SGD, but the lower bound indicates it does little to help Local SGD in the heterogeneous setting. 

Similarly, in the strongly convex case, the lower bound from Theorem \ref{thm:local-sgd-lower-bound} nearly matches the upper bound of \citeauthor{koloskova2020unified}.
Focusing on the case $H=B=\sigma=1$ in order to emphasize the role of $\sdiff$, the only differences are between a term $1/(KR^2)$ versus $1/(K^2R^2)$---a gap which also exists in the homogeneous case \cite{woodworth2020local}---and between $\exp(- \lambda R) + \sdiff^2/(\lambda^2R^2)$ and $\min\crl*{\exp(- \lambda R),\,\sdiff^2/(\lambda^2R^2)}$. The latter gap is somewhat more substantial than the convex case, but nevertheless indicates that the $\sdiff^2 / (\lambda^2R^2)$ rate cannot be improved until the number of rounds of communication is at least the condition number (or $\sdiff^2$ is very large).


Thus, the lower bounds indicate that it is not possible to radically improve over the \citeauthor{koloskova2020unified} analysis, and thus over Minibatch SGD for even moderate heterogeneity, without stronger assumptions.

In order to obtain an improvement over Minibatch SGD in a heterogeneous setting, at least with very low heterogeneity, we introduce a modification to the heterogeneity measure $\sdiff$ which bounds the difference between the local objectives' gradients everywhere (beyond just at $x^*$):
\begin{equation}\label{eq:def-zetabar}
\sup_x \max_{1\leq m \leq M} \nrm*{\nabla F_m(x) - \nabla F(x)}^2 \leq \bar{\zeta}^2
\end{equation}
This quantity precisely captures homogeneity since $\bar{\zeta}^2 = 0$ if and only if $F_m = F$ (up to an irrelevant additive constant). In terms of $\bar{\zeta}^2$, we are able to analyze Local SGD and see a smooth transition from the heterogeneous ($\bar{\zeta}^2$ large) to homogeneous ($\bar{\zeta}^2=0$) setting. 
\begin{restatable}{theorem}{localsgdupperbound}{\label{thm:local-sgd-uppper-bound}}
With the additional condition \eqref{eq:def-zetabar}, an average of the Local SGD iterates guarantees under the convex and strongly convex assumptions, respectively
\begin{align*}
\E F(\hat{x}) - F^* &\leq 
c\cdot\prn*{\frac{HB^2}{KR} + \frac{\prn*{H\bar{\zeta}^2B^4}^{1/3}}{R^{2/3}} + \frac{\prn*{H\sigma^2B^4}^{1/3}}{K^{1/3}R^{2/3}} + \frac{\sigma_*B}{\sqrt{MKR}}}, \\
\E F(\hat{x}) - F^* &\leq 
c\cdot\prn*{\frac{H^2 B^2}{HKR + \lambda K^2R^2} + \prn*{\frac{H\bar{\zeta}^2}{\lambda^2 R^2} + \frac{H\sigma^2}{\lambda^2 KR^2}} \log\prn*{\frac{H}{\lambda}+KR} + \frac{\sigma_*^2}{\lambda MKR}}.
\end{align*}
\end{restatable}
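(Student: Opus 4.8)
The natural route is a perturbed-iterate (``virtual sequence'') argument centered on the averaged iterate $\bar{x}_{r,k}:=\frac1M\sum_m x^m_{r,k}$. Since the averaging step in \eqref{eq:localsgd-updates} is linear, $\bar{x}$ obeys the plain recursion $\bar{x}_{r,k+1}=\bar{x}_{r,k}-\eta_{r,k}\bar{g}_{r,k}$ with $\bar{g}_{r,k}=\frac1M\sum_m g^m_{r,k}$, whose conditional mean is $\frac1M\sum_m\nabla F_m(x^m_{r,k})$ rather than $\nabla F(\bar{x}_{r,k})$. I would therefore treat $\bar{x}$ as running inexact SGD on $F$ with two error sources: (i) the martingale noise, whose conditional second moment is $\frac1{M^2}\sum_m\E\nrm*{g^m_{r,k}-\nabla F_m(x^m_{r,k})}^2\le\sigma^2/M$, and which evaluated near $x^*$ can be controlled by $\sigma_*^2/M$ plus smoothness remainders, producing the $\sigma_* B/\sqrt{MKR}$ (resp.\ $\sigma_*^2/(\lambda MKR)$) term; and (ii) the \emph{consensus error} $V_{r,k}:=\frac1M\sum_m\E\nrm*{x^m_{r,k}-\bar{x}_{r,k}}^2$, which quantifies how far the machines have drifted apart. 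Using $H$-smoothness and (strong) convexity of each $F_m$ together with \eqref{eq:def-zetabar} to absorb the gap between $\frac1M\sum_m\nabla F_m(x^m_{r,k})$ and $\nabla F(\bar{x}_{r,k})$, a standard one-step computation gives a recursion of the shape $\E\nrm*{\bar{x}_{r,k+1}-x^*}^2\le(1-\tfrac{\eta_{r,k}\lambda}{2})\E\nrm*{\bar{x}_{r,k}-x^*}^2-\eta_{r,k}\bigl(\E F(\bar{x}_{r,k})-F^*\bigr)+O(\eta_{r,k}^2\sigma^2/M)+O(\eta_{r,k}H)\,V_{r,k}$.

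The heart of the proof is bounding $V_{r,k}$. Within a round all machines share the starting point, so $x^m_{r,k}-\bar{x}_{r,k}=-\sum_{j<k}\eta_{r,j}(g^m_{r,j}-\bar{g}_{r,j})$; splitting $g^m_{r,j}-\bar{g}_{r,j}$ into its zero-mean stochastic part (second moment $O(\sigma^2)$) and its bias $\nabla F_m(x^m_{r,j})-\frac1M\sum_n\nabla F_n(x^n_{r,j})$, and adding and subtracting gradients evaluated at $\bar{x}_{r,j}$, the bias is at most $\bar{\zeta}$ (by \eqref{eq:def-zetabar}) plus $H$ times local deviations. Summing, the noise enters linearly in the number of local steps while the bias, being a coherent drift, accumulates quadratically, yielding a self-bounding recursion of the form $V_{r,k}\lesssim\eta^2 K\sigma^2+\eta^2 K^2\bar{\zeta}^2+\eta^2 H^2 K\sum_{j<k}V_{r,j}$ (for a round with constant stepsize $\eta$). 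Under the appropriate stepsize restriction this resolves to $\max_k V_{r,k}\lesssim\eta^2 K^2\bar{\zeta}^2+\eta^2 K\sigma^2$. This step is exactly where replacing $\sdiff$ by the uniform quantity $\bar{\zeta}$ is indispensable: a dissimilarity bound holding only at $x^*$ would leave a residual $\nrm*{\nabla F_m(\bar{x}_{r,j})}$ that must be traded for $\sdiff^2+H^2\nrm*{\bar{x}_{r,j}-x^*}^2$, coupling the consensus recursion to the descent recursion and forcing the $(\sdiff/R)^{2/3}$-type behaviour that Theorem~\ref{thm:local-sgd-lower-bound} shows is unavoidable; with $\bar{\zeta}$ the two recursions decouple. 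Note also that the consensus term genuinely needs the everywhere variance bound $\sigma^2$, since the local iterates sit at arbitrary points, which is why $\sigma$ (not $\sigma_*$) appears in the $(H\sigma^2 B^4)^{1/3}/(K^{1/3}R^{2/3})$ and $H\sigma^2/(\lambda^2 KR^2)$ terms.

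To finish, substitute the $V_{r,k}$ bound into the descent recursion and telescope over all $KR$ steps; because $V_{r,0}=0$ after every communication, the drift never compounds across rounds. Jensen's inequality applied to a suitably weighted average $\hat{x}$ of the $\bar{x}_{r,k}$ then gives, in the convex case, a bound of the form $\frac{B^2}{\eta KR}+O(\eta^2 H)(K^2\bar{\zeta}^2+K\sigma^2)+O(\eta\sigma_*^2/M)$; optimizing the admissible constant stepsize $\eta$ — with $\eta$ pushed to the boundary of the feasible range $\eta\lesssim 1/H$ to produce $HB^2/(KR)$, and the three balancing choices to produce $(H\bar{\zeta}^2 B^4)^{1/3}/R^{2/3}$, $(H\sigma^2 B^4)^{1/3}/(K^{1/3}R^{2/3})$, and $\sigma_* B/\sqrt{MKR}$ — yields the stated convex bound. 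For the strongly convex case I would instead use a geometrically decreasing / restarted stepsize schedule for smooth strongly convex SGD, which converts the optimization term into $H^2 B^2/(HKR+\lambda K^2R^2)$ and, when weighting the consensus contributions against the contraction factor $(1-\eta\lambda/2)$, produces the extra $\log(H/\lambda+KR)$ factor multiplying $H\bar{\zeta}^2/(\lambda^2R^2)+H\sigma^2/(\lambda^2 KR^2)$. The main obstacle throughout is the consensus-error recursion — pinning down the correct exponents of $K$ (quadratic for $\bar{\zeta}$, linear for $\sigma$), keeping $\sigma_*$ and $\sigma$ separated across the two families of terms, and matching the stepsize schedule to the strongly convex rate; once $V_{r,k}$ is controlled, the descent telescoping and stepsize tuning are routine.
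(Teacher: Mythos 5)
Your proposal follows essentially the same route as the paper's proof: a perturbed-iterate descent lemma for the averaged sequence with the stochastic noise controlled at $x^*$ (giving the $\sigma_*$ terms) plus a consensus-error term, a drift lemma bounding $\frac{1}{M}\sum_{m}\E\nrm*{x_t^m-\bar{x}_t}^2\lesssim K\sigma^2\eta^2+K^2\bar{\zeta}^2\eta^2$ using \eqref{eq:def-zetabar}, and then telescoping and the same stepsize balancing (the paper proves the drift lemma via pairwise differences with co-coercivity and a Young-inequality contraction rather than your self-bounding recursion, but the resulting bound and $K$-exponents are identical). The only minor deviation is the strongly convex schedule: the paper uses $\eta_t=\Theta\prn*{\tfrac{1}{\lambda(H/\lambda+t)}}$ with linear weights, which is what actually produces the $\tfrac{H^2B^2}{HKR+\lambda K^2R^2}$ term and the $\log$ factor, whereas a geometric/restarted schedule would target an exponential leading term that the paper explicitly notes is difficult to obtain from this recursion.
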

We prove the Theorem in Appendix \ref{app:zeta-everywhere-upper-bound}.  This is the first analysis of Local SGD, or any other method for heterogeneous distributed optimization, which shows any improvement over Minibatch SGD in any heterogeneous regime\footnote{\citet{karimireddy2019scaffold} establish a guarantee for  \textsc{SCAFFOLD} that improves over Minibatch SGD in a setting where only a random subset of the machines are available in each iteration and $\sdiff$ is sufficiently small. Here we refer to the distributed optimization setting of Section \ref{sec:setup}, where all machines are used in each iteration.}.  When $\bar{\zeta}=0$, Theorem \ref{thm:local-sgd-uppper-bound} reduces to the homogeneous analysis of Local SGD of \citet[Theorem 2]{woodworth2020local}, which already showed that in that case, we see improvement when $K \gtrsim R$.  Theorem \ref{thm:local-sgd-uppper-bound} degrades smoothly when $\bar{\zeta}>0$, and shows improvement for Local SGD over Minibatch SGD also when $\bar{\zeta}^2 \lesssim 1/R$ in the convex case, 
i.e.~with low, yet positive, heterogeneity. It is yet unclear whether this rate of convergence can be ensured in terms of $\sdiff^2$ rather than $\bar{\zeta}^2$.

\paragraph{Experimental evidence}
Finally, while Theorem \ref{thm:local-sgd-lower-bound} proves that Local SGD is worse than Minibatch SGD unless $\sdiff^2$ is very small \emph{in the worst case}, one might hope that for ``normal'' heterogeneous problems, Local SGD might perform better than its worst case error suggests. However, a simple binary logistic regression experiment on MNIST indicates that this behavior likely extends significantly beyond the worst case. The results, depicted in Figure \ref{fig:experiments}, show that Local SGD performs worse than Minibatch SGD unless both $\sdiff$ is very small and $K$ is large.
Finally, we also observe that Minibatch SGD's performance is essentially unaffected by $\sdiff$ empirically as predicted by theory.

\begin{figure}
\centering
\includegraphics[width=0.7\textwidth]{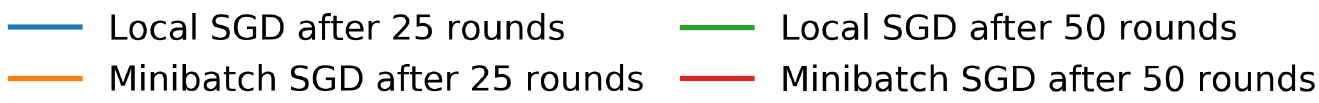}
\includegraphics[width=\textwidth]{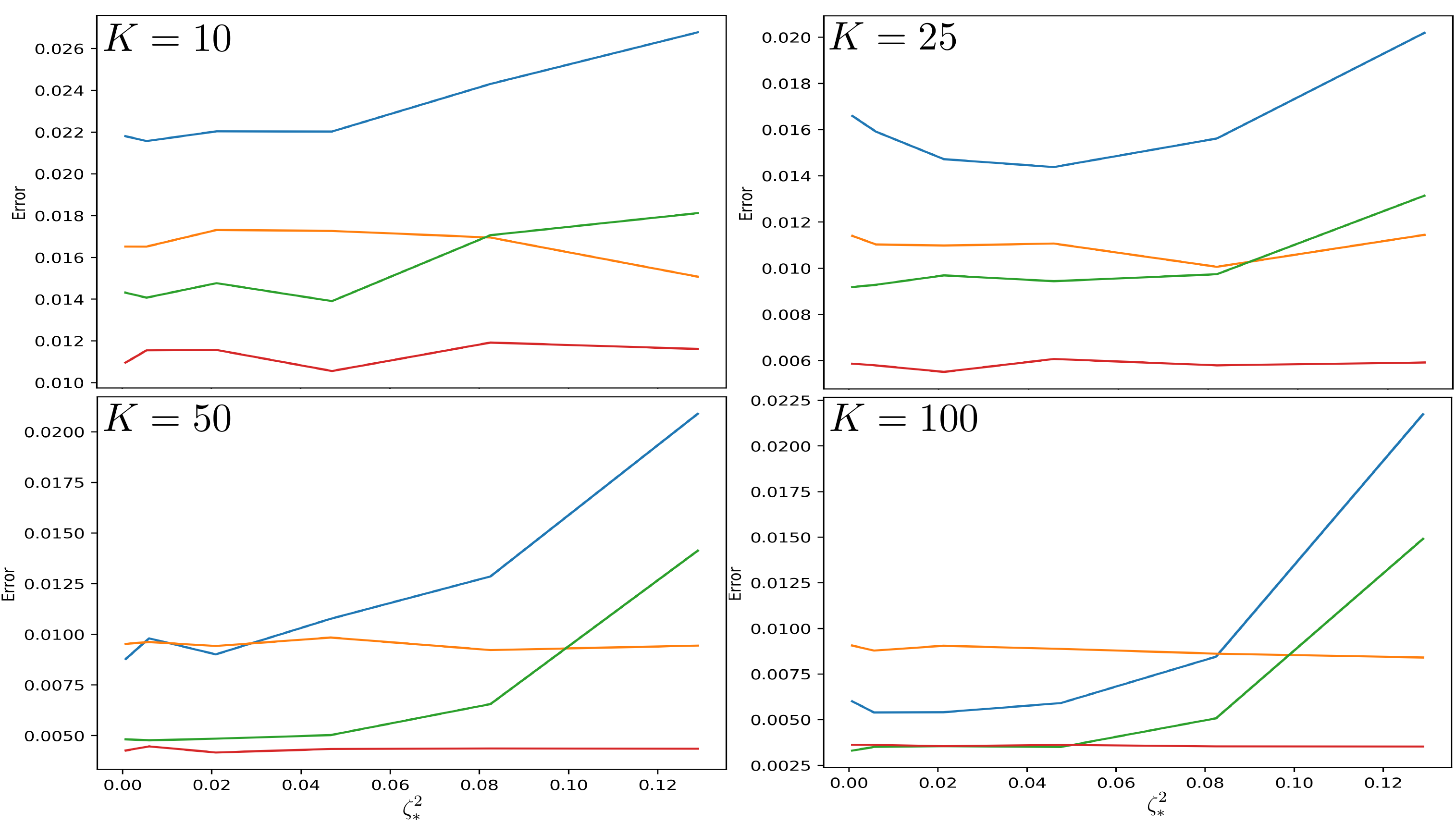}
\small \caption{\small Binary logistic regression between even vs odd digits of MNIST. Twenty-five ``tasks'' were constructed, one for each combination of $i$ vs $j$ for even $i$ and odd $j$. For $p \in \{0,20,40,60,80,100\}$, we assigned to each of $M=25$ machines $p\%$ data from task $m$, and $(100-p)\%$ data from a mixture of all tasks. For several choices of $R$ and $K$, we plot the error (averaged over four runs) versus the value of $\sdiff^2$ resulting from each choice of $p$. For both algorithms, we used the best fixed stepsize for each choice of $K$, $R$, and $\sdiff$ individually. Additional details are provided in Appendix \ref{app:experiments}.
\label{fig:experiments}}
\end{figure}

\section{Accelerated Minibatch SGD is optimal for highly heterogeneous data}\label{sec:ambsgd-optimal}

In the previous section, we showed that Local SGD is strictly worse than Minibatch SGD whenever $\sdiff^2 \geq HB/R$ (in the convex case). But perhaps a different method can improve over Accelerated Minibatch SGD? After all, Accelerated Minibatch SGD's convergence rate can only partially be improved by increasing $K$, the amount of local computation per round of communication. Even when $K \to \infty$, Accelerated Minibatch SGD is only guaranteed suboptimality $1/R^2$ or $\exp(-\lambda R)$. Can this rate be improved? Now, we show that the answer depends on the level of heterogeneity---unless $\sdiff^2$ is sufficiently small, no algorithm can improve over Accelerated Minibatch SGD, which is optimal. 
To state the lower bound, we follow \citet{carmon2017lower1} and define:
\begin{definition}[Distributed zero-respecting algorithm]
Let $\supp(v) = \crl*{i \in [d]\,:\, v_i \neq 0}$ and $\pi_m(t,m')$ be the last time before $t$ when machines $m$ and $m'$ communicated. We say that an optimization algorithm is distributed zero-respecting if the $t$th query on the $m$th machine, $x_t^m$ satisfies,
\[
\supp(x_t^m) \subseteq \bigcup_{s < t} \supp(\nabla f(x_s^m;z_s^m)) \cup \bigcup_{m' \neq m}\;\bigcup_{s \leq \pi_m(t,m')}\supp(\nabla f(x_s^{m'};z_s^{m'})).
\]
\end{definition}
That is, the coordinates of a machine's iterates are non-zero only where gradients seen by that machine were non-zero. This encompasses many first-order optimization algorithms, including Minibatch SGD, Accelerated Minibatch SGD, Local SGD, coordinate descent methods, etc. 
\begin{restatable}{theorem}{dzrlowerbound}\label{thm:dzr-lower-bound}
For any $M$, $K$, and $R$, there exist two quadratic objectives satisfying the convex and strongly convex assumptions (for $H \geq 7\lambda$) such that the output of any distributed zero-respecting algorithm will have suboptimality in the convex and strongly convex case respectively,
\begin{align*}
F(\hat{x}) - F^* &\geq c\prn*{\min\crl*{\frac{\sdiff^2}{HR^2}, \frac{HB^2}{R^2}} + \frac{\sigma B}{\sqrt{MKR}}}, \\
F(\hat{x}) - F^* &\geq c\prn*{\min\crl*{\frac{\lambda \sdiff^2}{H^2}, \frac{\Delta\sqrt{\lambda}}{\sqrt{H}}} \exp\prn*{-\frac{8R\sqrt{\lambda}}{\sqrt{H}}} + \frac{\sigma^2}{\lambda MKR}}.
\end{align*}
\end{restatable}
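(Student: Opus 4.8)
The plan is to establish the two terms of each displayed inequality via two separate hard instances supported on orthogonal blocks of coordinates, so that the per-block suboptimalities add; the final objectives are $F_m = F_m^{\mathrm{opt}} + F^{\mathrm{stat}}$, which are quadratic and separable across the blocks. For the \emph{optimization block} (the first term) I would build on the chain constructions of \citet{nesterov2004introductory} and \citet{carmon2017lower1}: take $d_1 = \Theta(R)$ coordinates and two quadratics $F_1^{\mathrm{opt}}, F_2^{\mathrm{opt}}$ whose average has the tridiagonal ``path'' Hessian of Nesterov's worst function, but split so that $F_1^{\mathrm{opt}}$ involves only the odd edges of the path and $F_2^{\mathrm{opt}}$ only the even edges, together with a linear term placing the minimizer at a point with a long, slowly decaying tail. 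Assign $F_1^{\mathrm{opt}}$ to half the machines and $F_2^{\mathrm{opt}}$ to the other half, so $\nabla F(x^*) = 0$ while $\nrm*{\nabla F_1^{\mathrm{opt}}(x^*)} = \nrm*{\nabla F_2^{\mathrm{opt}}(x^*)} = \sdiff$, matching \eqref{eq:heterogeneity-parameter}. The key combinatorial lemma is a support-growth bound: for any distributed zero-respecting algorithm, after $r$ rounds every machine's iterate is supported on the first $O(r)$ coordinates. This follows by induction on rounds, and within a round on the $K$ local steps --- a local step on a machine holding $F_1^{\mathrm{opt}}$ activates at most one new coordinate (the far endpoint of an odd edge), but the next coordinate lies across an even edge owned only by the other group, so no further progress is possible before the next communication. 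Given the lemma, the standard lower bound for optimizing the decaying tail over only $O(R)$ coordinates gives suboptimality $\gtrsim H\rho^2/R^2$ with $\rho$ a free scale of the instance; choosing $\rho$ as large as allowed ($\rho \lesssim B$ from $\nrm*{x^*}\le B$, and $\rho \lesssim \sdiff/H$ from heterogeneity) yields $\min\crl*{HB^2/R^2,\ \sdiff^2/(HR^2)}$. For the strongly convex case I would add $\tfrac{\lambda}{2}\nrm*{\cdot}^2$ to each $F_m$; the relevant tail subproblem then has condition number $\Theta(H/\lambda)$, so the bound becomes geometric, of order $\min\crl*{\Delta\sqrt{\lambda/H},\ \lambda\sdiff^2/H^2}\exp(-cR\sqrt{\lambda/H})$, and the hypothesis $H \ge 7\lambda$ ensures the tridiagonal-plus-$\lambda I$ matrix retains the spectral structure needed both for the decay estimate and for the support-growth lemma to transfer.

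For the \emph{statistical block} (the second term) I would place on a disjoint set of coordinates a homogeneous ($\sdiff = 0$) instance: a simple $H$-smooth, $\lambda$-strongly convex (or merely convex) quadratic whose stochastic gradient oracle carries an independent noise term of variance $\sigma^2$ confined to that block. The classical minimax lower bound for stochastic (strongly) convex optimization --- via a reduction to distinguishing nearby instances given $MKR$ total samples --- then forces suboptimality $\gtrsim \sigma B/\sqrt{MKR}$ in the convex case and $\gtrsim \sigma^2/(\lambda MKR)$ in the strongly convex case; this part holds for all algorithms, regardless of the communication structure, and in particular for distributed zero-respecting ones. Because the two blocks live on orthogonal subspaces and $F$ is separable across them, the total suboptimality is at least the sum of the two block suboptimalities up to a universal constant, giving the stated bounds.

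The main obstacle I anticipate is the support-growth lemma in the presence of \emph{both} intermittent communication \emph{and} stochastic zero-respecting iterates: I must arrange the gradient noise to lie only on already-active coordinates (so it cannot accelerate support growth), and argue carefully that, despite $K$ local steps per round, each round still buys only a constant number of new chain coordinates on each machine. The remaining work --- fixing the scale $\rho$ and the regularization so that $H$-smoothness, $\lambda$-strong convexity, $\nrm*{x^*}\le B$ / $\Delta$, and the prescribed value of $\sdiff$ all hold simultaneously, and passing from an expected-over-instances bound to a fixed pair of objectives --- is routine bookkeeping once the lemma is in place.
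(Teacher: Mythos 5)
Your proposal matches the paper's proof essentially step for step: the same odd/even-edge split of a Nesterov-type chain quadratic across two groups of machines, the same round-indexed support-growth lemma for distributed zero-respecting algorithms (one new coordinate per communication round, independent of $K$), the same tuning of the instance scale against the constraints $\nrm{x^*}\le B$ (or $\Delta$) and the heterogeneity level to produce the $\min$, and the same appeal to classical $\sigma$-dependent minimax lower bounds for the statistical terms. The obstacle you anticipate about stochastic iterates interfering with support growth does not arise in the paper's argument, since the heterogeneous chain instance is taken noiseless and the variance terms are handled entirely by the cited classical bounds.
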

This is proven in Appendix \ref{app:alg-independent-lower-bound} using techniques similar to those of \citet{woodworth16tight} and \citet{arjevani2015communication}. However, care must be taken to control the parameter $\sdiff$ for the hard instance and to see how this affects the final lower bound.

\begin{restatable}{corollary}{ambsgdoptimal}\label{cor:ambsgd-optimal}
In the convex case, Accelerated Minibatch SGD is optimal when $\sdiff \geq HB$, and in the strongly convex case, it is optimal up to log factors when $\sdiff^2 \geq H^{3/2}/\sqrt{\lambda}$.
\end{restatable}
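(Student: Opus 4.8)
The plan is to read the corollary off directly from the Accelerated Minibatch SGD upper bound (Theorem~\ref{thm:mbsgd-upper-bound}) and the algorithm-independent lower bound (Theorem~\ref{thm:dzr-lower-bound}), checking that the stated conditions on $\sdiff$ force the minimum in the lower bound to collapse onto exactly the term appearing in the upper bound. Since Accelerated Minibatch SGD is itself a distributed zero-respecting algorithm (as noted immediately after the definition), the lower bound of Theorem~\ref{thm:dzr-lower-bound} applies to it; matching it with the corresponding upper bound then establishes optimality within the class of distributed zero-respecting algorithms.

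\textbf{Convex case.} Theorem~\ref{thm:mbsgd-upper-bound} gives $\E F(\hat x) - F^* \lesssim HB^2/R^2 + \sigma B/\sqrt{MKR}$, while Theorem~\ref{thm:dzr-lower-bound} gives $F(\hat x) - F^* \gtrsim \min\crl*{\sdiff^2/(HR^2),\, HB^2/R^2} + \sigma B/\sqrt{MKR}$. When $\sdiff \geq HB$ we have $\sdiff^2/(HR^2) \geq HB^2/R^2$, so the minimum equals $HB^2/R^2$ and the lower bound reduces to $\Omega\prn*{HB^2/R^2 + \sigma B/\sqrt{MKR}}$, which matches the upper bound term-by-term up to the universal constant. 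Note that the variance terms line up with no loss, since $\sigma$ (not $\sigma_*$) appears both in the accelerated upper bound and in the lower bound.

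\textbf{Strongly convex case.} Here Theorem~\ref{thm:mbsgd-upper-bound} gives $\E F(\hat x) - F^* \lesssim \Delta\exp(-\sqrt{\lambda}\,R/(c_3\sqrt{H})) + \sigma^2/(\lambda MKR)$, and Theorem~\ref{thm:dzr-lower-bound} gives $F(\hat x) - F^* \gtrsim \min\crl*{\lambda\sdiff^2/H^2,\, \Delta\sqrt{\lambda}/\sqrt{H}}\exp(-8R\sqrt{\lambda}/\sqrt{H}) + \sigma^2/(\lambda MKR)$. The minimum collapses onto $\Delta\sqrt{\lambda}/\sqrt{H}$ exactly when $\lambda\sdiff^2/H^2 \geq \Delta\sqrt{\lambda}/\sqrt{H}$, i.e.\ when $\sdiff^2 \geq \Delta H^{3/2}/\sqrt{\lambda}$ (which is the stated condition $\sdiff^2 \geq H^{3/2}/\sqrt{\lambda}$ once $\Delta$ is treated as an $O(1)$ constant, or after rescaling the objective). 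In that regime the statistical terms $\sigma^2/(\lambda MKR)$ agree exactly, while the two optimization terms, $\Delta\exp(-\sqrt{\lambda}R/(c_3\sqrt{H}))$ and $\Delta\sqrt{\lambda}/\sqrt{H}\cdot\exp(-8R\sqrt{\lambda}/\sqrt{H})$, have matching exponential rate $\exp(-\Theta(\sqrt{\lambda/H}\,R))$ but different exponent constants; translated into the number of rounds $R$ needed to bring the optimization error below a target $\epsilon$, this discrepancy costs only a constant factor, hence a logarithmic factor in the accuracy --- which is precisely what the ``up to log factors'' qualifier absorbs.

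I do not expect any genuine obstacle here: the corollary is a bookkeeping consequence of Theorems~\ref{thm:mbsgd-upper-bound} and~\ref{thm:dzr-lower-bound}. The only points that require a moment's care are (i) confirming the variance parameters match on both sides (they do, as noted), and (ii) handling the $\Delta$ factor and the mismatched exponent constant in the strongly convex comparison, which is exactly the content of the ``up to log factors'' caveat; one should also recall that Theorem~\ref{thm:dzr-lower-bound} requires $H \geq 7\lambda$, which we assume throughout the strongly convex statements.
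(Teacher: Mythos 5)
Your proposal is correct and follows essentially the same route as the paper's own proof: read the corollary off by comparing the Accelerated Minibatch SGD upper bounds of Theorem~\ref{thm:mbsgd-upper-bound} with the lower bounds of Theorem~\ref{thm:dzr-lower-bound}, observing that the stated thresholds on $\sdiff$ make the $\min$ collapse onto the matching term, with the strongly convex prefactor/exponent mismatch absorbed into the ``up to log factors'' qualifier. If anything, your handling of the $\Delta$ factor in the strongly convex threshold is slightly more explicit than the paper's, which silently treats it as the stated condition.
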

Thus, Accelerated Minibatch SGD is optimal for large $\sdiff$, but when $\sdiff$ is smaller, the lower bound does not match, and it might be possible to improve. Indeed, focusing on the convex case, there is a substantial regime $HB/\sqrt{R} \leq \sdiff \leq HB$ in which it may or may not be possible to improve over Accelerated Minibatch SGD. Is it possible to improve in this regime, and if so, with what algorithm? From the previous section, we know that Local SGD is certainly \emph{not} such an algorithm.
Thus, an important question for future work is \textbf{what can be done when $\mathbf{\sdiff}$ is bounded, but not insignificant,} and what new algorithms may be able to improve over Accelerated Minibatch SGD in this regime?

\section{Inner and outer stepsizes}\label{sec:inner-outer}

In understanding the relationship between Minibatch SGD and Local SGD, and thinking of how to improve over them, it is useful to consider a unified method that interpolates between them.  
This involves taking SGD steps locally with one stepsize, and then when the machines communicate, they take a step in the resulting direction with a second, different stepsize.  Such a dual-stepsize approach was already presented and analyzed as \textsc{FedAvg} by \citet{karimireddy2019scaffold}.
We will refer to these two stepsizes as ``inner'' and ``outer'' stepsizes, respectively, and consider\removed{\footnote{The parametrization of the step-sizes here is different than in \citet{karimireddy2019scaffold} in order to emphasize the connection to Minibatch SGD. \citeauthor{karimireddy2019scaffold}'s \textsc{FedAvg} is equivalent to \eqref{eq:inner-outer-updates} with their $\eta_l=\eta_{\textrm{inner}}$ and their $\eta_g = \eta_{\textrm{outer}}/\eta_{\textrm{inner}}$.  Our presentation allows $\eta_{\textrm{inner}}=0$, while in terms of their $\eta_g,\eta_l$, one can only approach this as $\eta_l\rightarrow 0$.}}
\begin{equation}\label{eq:inner-outer-updates}
\begin{aligned}
x_{r,k}^m &= x_{r,k-1}^m - \eta_{\textrm{inner}}\nabla f(x_{r,k-1}^m; z_{r,k-1}^m) &\quad\forall_{m\in [M], k \in [K]}\\
x_{r+1,0}^m &= x_{r,0}^m - \eta_{\textrm{outer}}\frac{1}{M}\sum_{n=1}^M \sum_{k=1}^{K}\nabla f(x_{r,k-1}^m; z_{r,k-1}^m)  &\quad\forall_{m\in [M], r \in [R]}
\end{aligned}
\end{equation}
Choosing $\eta_{\textrm{inner}} = 0$, this is equivalent to Minibatch SGD with stepsize $\eta_{\textrm{outer}}$, and choosing $\eta_{\textrm{inner}} = \eta_{\textrm{outer}}$ recovers Local SGD. Therefore, when the stepsizes are chosen optimally, this algorithm is always at least as good as both Minibatch and Local SGD. I.e.~we have that using \eqref{eq:inner-outer-updates} with optimal stepsizes, under the weakly convex assumptions:
\begin{equation}\label{eq:inner-outer-bound}
\begin{aligned}
\E F(\hat{x}) - F^* \leq \min \bigg\{ &O\left( \frac{HB^2}{R} + \frac{\sigma_* B}{\sqrt{MKR}}\right),  \\
& O\left( \frac{HB^2}{KR} + \frac{\prn*{H\bar{\zeta}^2B^4}^{1/3}}{R^{2/3}} + \frac{\prn*{H\sigma^2B^4}^{1/3}}{K^{1/3}R^{2/3}} + \frac{\sigma_*B}{\sqrt{MKR}} \right) \; \bigg\}
\end{aligned}
\end{equation}
where the first option in the min is obtained by choosing $\eta_{\textrm{inner}}=0$ and the second by choosing $\eta_{\textrm{inner}}=\eta_{\textrm{outer}}$.  We can also get a similar minimum of Theorems \ref{thm:mbsgd-upper-bound} and \ref{thm:local-sgd-uppper-bound} for the strongly convex case. This analysis \eqref{eq:inner-outer-bound} improves over  \citeauthor{karimireddy2019scaffold}'s analysis of \textsc{FedAvg} as we discuss in Appendix \ref{app:scaffold}. \removed{\citeauthor{karimireddy2019scaffold} also analyze an algorithm \textsc{SCAFFOLD}, which introduces variance reduction ``control variates'' to the updates in order to match, but not improve over \eqref{eq:inner-outer-bound} in the regime $\sdiff \leq HB$, which we also discuss in Appendix \ref{app:scaffold}.} An important question is whether this method might be able to \emph{improve} over both alternatives by choosing $0 \ll \eta_{\textrm{inner}} \ll \eta_{\textrm{outer}}$. Unfortunately, existing analysis does not resolve this question.

\section{Using a Subset of Machines in Each Round}
\label{sec:subset}

An interesting modification of our problem setting, recently formulated and studied by \citet{karimireddy2019scaffold}, is a case in which only a random subset of $S \leq M$ machines are used in each round, as is the typical case in Federated Learning \cite{kairouz2019advances}.  In the homogeneous case this makes no difference, since all machines are identical anyway.  However, in a heterogeneous setting, having all machines participate at each round means that at each round we can obtain estimates of {\em all} components of the objective \eqref{eq:objective}, which is very different from only seeing a different $S<M$ components each round, while still wanting to minimize the average of all $M$ components.  

We could still consider Minibatch SGD in this setting, but in each round we would thus use a gradient estimate based on $SK$ stochastic gradients, $K$ from each of $S$ machines chosen uniformly at random (without replacement).  This is still an unbiased estimator of the overall gradient $\nabla F(x_r)$, with variance $\frac{\sigma^2}{SK}+(1-\frac{S}{M})\frac{\bar{\zeta}^2}{S}$ (and at $x^*$ the variance is $\frac{\sigma_*^2}{SK}+(1-\frac{S}{M})\frac{\sdiff^2}{S}$), resulting in the following guarantee under the convex assumptions
\begin{equation}\label{eq:S-upper-bound}
    \E F(\hat{x}) - F^* \leq O\prn*{\frac{HB^2}{R} + \frac{\sigma_* B}{\sqrt{SKR}}
    + \sqrt{1-\frac{S}{M}}\cdot \frac{\sdiff B}{\sqrt{SR}}
    },
\end{equation}
and under the strongly convex assumptions
\begin{equation}\label{eq:S-upper-bound-sc}
    \E F(\hat{x}) - F^* \leq O\prn*{\lambda B^2\exp\prn*{\frac{-\lambda R}{H}} + \frac{\sigma_*^2 }{\lambda SKR}
    + \prn*{1-\frac{S}{M}}\cdot \frac{\sdiff^2}{\lambda SR}
    }.
\end{equation}
Similar guarantees are also available for Accelerated Minibatch SGD with $\sigma$ and $\bar{\zeta}$ replacing $\sigma_*$ and $\sdiff$.  The guarantees \eqref{eq:S-upper-bound} and \eqref{eq:S-upper-bound-sc} are valid also for Minibatch SGD (i.e.~with $\eta_\textrm{inner}=0$) and thus for the inner/outer stepsize updates \eqref{eq:inner-outer-updates}. These guarantees improve over what was shown for the inner/outer stepsize variant by \citeauthor{karimireddy2019scaffold}, but \citeauthor{karimireddy2019scaffold} also presents \textsc{SCAFFOLD} which shows benefits over Minibatch SGD in some regimes under this setting (see discussion in Appendix \ref{app:scaffold})

\removed{
Furthermore, this guarantee is the same as Minibatch SGD's plus an additional term (the second one), which means that this cannot even show that \textsc{FedAvg} matches Minibatch SGD. They prove a lower bound in order to argue that some dependence on $\sdiff$ is necessary, however this lower bound applies only for the special case $\eta_{\textrm{outer}} = \eta_{\textrm{inner}}$, in which case \textsc{FedAvg} is the same as Local SGD. To summarize, for optimally chosen stepsizes, \textsc{FedAvg} is certainly at least as good as both Local and Minibatch SGD (which are special cases), but existing analysis shows no improvement over even just Minibatch SGD.
Finally, improvement over Local and Minibatch SGD is \emph{not} precluded by either our lower bound for Local SGD or by \citeauthor{karimireddy2019scaffold}'s lower bound, since both of these apply only for the particular case $\eta_{\textrm{inner}} = \eta_{\textrm{outer}}$, i.e.~Local SGD.}




\section{Discussion}

Despite extensive efforts to analyze Local SGD both in homogeneous and heterogeneous settings, nearly all efforts have failed to show that Local SGD improves over Minibatch SGD in any situation. In fact, as far as we are aware, only \citet{woodworth2020local} were able to show any improvement over Minibatch SGD in any regime, and their work is confined to the easier homogeneous setting. This raised the question of what happens in the heterogeneous case, which is substantially more difficult than the homogeneous setting, and where it is plausibly necessary to deviate from the quite na\"ive approach of (Accelerated) Minibatch SGD. In this paper, we conducted a careful analysis and comparison of Local and Minibatch SGD in the heterogeneous case and showed that moving from the homogeneous to heterogeneous setting significantly harms the performance of Local SGD.
For instance, in the convex case  Local SGD is strictly worse than Minibatch SGD unless $\sdiff^2 < H^2B^2/R$.  This indicates that any benefits of Local over Minibatch SGD actually lie in the homogeneous or near-homogenous setting, and not in highly hetrogenous settings (unless additional assumptions are made). We also show similar benefits for Local SGD as those shown by \citeauthor{woodworth2020local}, extending them to a near-homogeneous regime, whenever $\bar{\zeta}^2$ is bounded and less than $1/R$.  This is the first analysis that shows any benefit for Local SGD over Minibatch SGD in any non-homogeneous regime.  At the other extreme, with high heterogeneity $\sdiff > HB$, we show that Accelerated Mini-Batch SGD is optimal.  But this leaves open the regime of medium heterogeneity, of $HB/\sqrt{R} \ll \sdiff \ll HB$, where it might be possible to improve over (Accelerated) Minibatch SGD, but {\em not} using Local SGD---can other methods be devised for this regime?

\paragraph{Acknowledgements}
This work is partially supported by NSF/BSF award 1718970, NSF-DMS 1547396, and a Google Faculty Research Award. BW is supported by a Google PhD Fellowship.

\bibliographystyle{plainnat}
\bibliography{bibliography}

\appendix

\section{Proof of Theorem \ref{thm:mbsgd-upper-bound}}\label{app:mbsgd-upper-bound-proofs}
In this Appendix, we prove Theorem \ref{thm:mbsgd-upper-bound}, starting with an analysis of Minibatch SGD, and proceeding to analyze Accelerated Minibatch SGD.
\subsection{Minibatch SGD for heterogeneous objectives}\label{sec:mbsgd_rates} 

For the proof, we will use the following standard property of convex functions:
\begin{lemma}[Co-Coercivity of the Gradient]\label{lem:co-coercivity}
For any $H$-smooth and convex $F$, and any $x$, and $y$,
\[
\nrm*{\nabla F(x) - \nabla F(y)}^2 \leq H\inner{\nabla F(x) - \nabla F(y)}{x - y},
\]
and
\[
\nrm*{\nabla F(x) - \nabla F(y)}^2 \leq 2H\prn*{F(x) - F(y) - \inner{\nabla F(y)}{x-y}}.
\]
\end{lemma}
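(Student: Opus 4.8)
The plan is to establish the second (function-value) inequality first, and then derive the first (inner-product) inequality from it by a symmetrization argument. The central device for the second inequality is the auxiliary function $\phi(z) \defeq F(z) - \inner{\nabla F(y)}{z}$. Because $\phi$ differs from $F$ only by a linear term, it inherits convexity and $H$-smoothness from $F$, and its gradient $\nabla\phi(z) = \nabla F(z) - \nabla F(y)$ vanishes at $z = y$. Hence $y$ is a global minimizer of the convex function $\phi$, and this optimality is the key structural fact I will exploit.

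First I would apply the $H$-smoothness descent inequality $\phi(x^+) \leq \phi(x) + \inner{\nabla\phi(x)}{x^+ - x} + \frac{H}{2}\nrm*{x^+ - x}^2$ at the gradient-step point $x^+ \defeq x - \frac{1}{H}\nabla\phi(x)$, which after simplification gives $\phi(x^+) \leq \phi(x) - \frac{1}{2H}\nrm*{\nabla\phi(x)}^2$. Combining this with the optimality bound $\phi(y) \leq \phi(x^+)$ yields $\frac{1}{2H}\nrm*{\nabla\phi(x)}^2 \leq \phi(x) - \phi(y)$. Unwinding the definition of $\phi$ gives $\phi(x) - \phi(y) = F(x) - F(y) - \inner{\nabla F(y)}{x-y}$ and $\nabla\phi(x) = \nabla F(x) - \nabla F(y)$, so multiplying through by $2H$ produces exactly the second inequality of the lemma.

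For the first inequality, I would instantiate the second inequality once as written and once with the roles of $x$ and $y$ interchanged, obtaining $\frac{1}{2H}\nrm*{\nabla F(x) - \nabla F(y)}^2 \leq F(x) - F(y) - \inner{\nabla F(y)}{x-y}$ and $\frac{1}{2H}\nrm*{\nabla F(x) - \nabla F(y)}^2 \leq F(y) - F(x) - \inner{\nabla F(x)}{y-x}$. Adding the two cancels the function-value difference and leaves $\frac{1}{H}\nrm*{\nabla F(x) - \nabla F(y)}^2 \leq \inner{\nabla F(x) - \nabla F(y)}{x-y}$, which rearranges to the claimed co-coercivity bound.

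The only genuinely nontrivial step is the first one: pairing the smoothness descent inequality for a single gradient step with the global optimality of $y$ for $\phi$. Everything else is bookkeeping, so I expect that step to carry essentially all the content of the proof.
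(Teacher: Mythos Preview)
Your proof is correct and follows the classical textbook argument for co-coercivity (the auxiliary function $\phi(z)=F(z)-\inner{\nabla F(y)}{z}$, descent lemma plus optimality of $y$, then symmetrization). There is nothing to compare against, however: the paper does not prove this lemma at all. It introduces it as ``the following standard property of convex functions'' and states it without proof, so your write-up in fact supplies an argument the paper omits.
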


Also note the following result from \citet{stich2019unified}, which is useful for optimizing the step-sizes.   
\begin{lemma}[\citet{stich2019unified}, Lemma 3]\label{lem:sc-exp-rate}
Consider non-negative sequences$\{r_t\}_{t\geq 0}$ and $\{s_t\}_{t\geq 0}$, which satisfy:
\begin{align}
    r_{t+1} \leq (1-a\eta_t)r_t - b \eta_ts_t + c\eta_t^2,
\end{align}
for non-negative step-sizes $\eta_t\leq \frac{1}{d}, \forall t$, for a parameter $d\geq a$, $d>0$. Then there exist a choice of step-sizes $\eta_t$ and averaging weights $w_t$, such that:
\begin{align}
    \frac{b}{W_T} \sum_{t=0}^{T} sw_t + ar_{T+1} \leq 32dr_0\exp\sb{-\frac{aT}{2d}} + \frac{36c}{aT},
\end{align}
for $W_T:=\sum_{t=0}^{T} w_t$.
\end{lemma}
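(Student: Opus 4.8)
The plan is to prove the lemma directly, by a telescoping argument with geometrically growing weights followed by a two-regime choice of step sizes. Throughout I use that $\eta_t \le 1/d \le 1/a$ (since $d\ge a$), so each factor $1-a\eta_t$ lies in $[0,1)$; the degenerate case $d=a$, $\eta_t=1/a$ can be perturbed away. The only elementary facts needed are $(1-x)\le e^{-x}$, the nonnegativity of $r_t,s_t,\eta_t$, and the geometric estimate $\sum_{j\ge 0}(1-a\eta)^j\le 1/(a\eta)$.

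First I would introduce weights $w_t$ defined by $w_{-1}=1$ and $w_t = w_{t-1}/(1-a\eta_t)$, so that $w_t(1-a\eta_t)=w_{t-1}$. Multiplying the hypothesis $r_{t+1}\le(1-a\eta_t)r_t - b\eta_t s_t + c\eta_t^2$ by $w_t$ converts the $r$-terms into a telescoping difference, $w_t r_{t+1}\le w_{t-1}r_t - b\eta_t w_t s_t + c\eta_t^2 w_t$. Summing over $t=0,\dots,T$ collapses the $r$-terms and gives
\begin{equation*}
b\sum_{t=0}^T \eta_t w_t s_t + w_T r_{T+1} \;\le\; r_0 + c\sum_{t=0}^T \eta_t^2 w_t .
\end{equation*}
Dividing by $w_T$ and using $1/w_T=\prod_{t=0}^T(1-a\eta_t)\le\exp\prn*{-a\sum_{t=0}^T\eta_t}$ turns the initial-condition term into an exponential in the accumulated step size, while the variance contribution $(c/w_T)\sum_t\eta_t^2 w_t$ is controlled through the identity $w_t-w_{t-1}=a\eta_t w_t$, which lets me compare $\eta_t^2 w_t$ against the telescoping increments of the weights. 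For a constant step $\eta$ this already yields a bound of the shape $\tfrac{b}{W_T}\sum_t w_t s_t + a\,r_{T+1}\le 2a r_0\exp(-a\eta T)+2c\eta$ (after placing the weight mass appropriately), which is the object to be tuned.

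The step sizes are then chosen in two regimes governed by $T$ versus $d/a$. When $T\lesssim d/a$, the constant step $\eta_t\equiv 1/d$ suffices: the product $\prod(1-a/d)\le\exp(-aT/d)\le\exp(-aT/(2d))$ supplies the claimed exponential factor, and since $1/d\le 2/(aT)$ in this regime the variance term is automatically $O(c/(aT))$. When $T\gtrsim d/a$ I would split the horizon into a burn-in phase $t\le T/2$ with the large step $\eta_t=1/d$ — which drives the bias down by exactly $\exp(-aT/(2d))$, explaining the factor $1/(2d)$ in the exponent — followed by a decreasing phase $\eta_t\asymp 1/(a(t-T/2))$ that lowers the variance floor to $O(c/(aT))$ while only further decreasing $r_t$; concentrating the averaging weights $w_t$ on the decreasing phase then produces the stated $\tfrac{b}{W_T}\sum_t w_t s_t + a\,r_{T+1}$ form.

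The main obstacle is obtaining the exponential bias decay $\exp(-aT/(2d))$ and the \emph{log-free} variance term $36c/(aT)$ simultaneously. A single constant step cannot achieve both: optimizing $2a r_0\exp(-a\eta T)+2c\eta$ over $\eta\le 1/d$ forces $\eta\asymp \ln(a^2 r_0 T/c)/(aT)$ and leaves a spurious $\log T$ factor on the variance. This is precisely why the burn-in-then-decreasing schedule is required, and the delicate part is tracking the weights $w_t$ across the phase transition — so that the surviving constants collapse to the stated $32$ and $36$ rather than the larger ones a naive two-case split would produce.
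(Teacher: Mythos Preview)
The paper does not actually prove this lemma --- it is quoted from \citet{stich2019unified} and used as a black box. The only information the paper adds is the explicit step-size and weight schedule, stated at the end of the Minibatch SGD proof: for $T\le d/a$ take constant $\eta_t=1/d$ with $w_t=(1-a/d)^{-(t+1)}$; for $T>d/a$ take $\eta_t=1/d$ and $w_t=0$ on the burn-in segment $t<t_0=\lceil T/2\rceil$, then $\eta_t=2/(a(\kappa+t-t_0))$ with $w_t=(\kappa+t-t_0)^2$ for $t\ge t_0$, where $\kappa=2d/a$. Your plan --- telescoping with weights satisfying $w_t(1-a\eta_t)=w_{t-1}$, splitting into the regimes $T\lesssim d/a$ and $T\gtrsim d/a$, and in the second regime running a constant-step burn-in for half the horizon followed by a $1/t$-type decay to eliminate the log factor --- is exactly this schedule; indeed your recursion applied to $\eta_t=2/(a(\kappa+t-t_0))$ produces weights proportional to $(\kappa+t-t_0)(\kappa+t-t_0-1)$, which are the quadratic weights above. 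One small bookkeeping point: your telescoped left-hand side carries $\eta_t w_t s_t$, not $w_t s_t$, so the averaging weights appearing in the lemma's conclusion are $\eta_t w_t$ (or one multiplies the recursion directly by the quadratic weights in the decreasing phase, as Stich does), but this is cosmetic and your sketch is otherwise the intended argument.
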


Finally we can prove the following result for Minibatch SGD in this setting. 
\begin{theorem} 
Under the convex assumptions,
the average of the iterates of Minibatch SGD guarantees for a universal constant $c$,
\[
\E F(\hat{x}) - F^* \leq c\cdot\frac{H B^2}{R} + c\cdot\frac{\sigma_* B}{\sqrt{MKR}}.
\]
Under the strongly convex assumptions, a weighted average of its iterates guarantees
\[
\E F(\hat{x}) - F^* \leq c\cdot \frac{H\Delta}{\lambda}\exp\sb{-\frac{\lambda R}{8H}} + c\cdot\frac{\sigma_*^2}{\lambda MKR}.
\]
\end{theorem}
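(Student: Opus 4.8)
The plan is to leverage the observation made immediately before Theorem~\ref{thm:mbsgd-upper-bound}: the aggregated minibatch gradient $g_r$ is an \emph{unbiased} estimate of $\nabla F(x_r)$, so Minibatch SGD is nothing but $R$ steps of vanilla SGD applied to the $H$-smooth objective $F$ with a reduced-variance oracle. The proof then reduces to invoking the textbook SGD analysis, with one preliminary step needed because we want the guarantee in terms of $\sigma_*$ (variance only at $x^*$) rather than $\sigma$.

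The first step is to upgrade the variance-at-the-optimum bound \eqref{eq:def-sigma-star} to a suboptimality-dependent bound valid at every iterate. Decomposing $\nabla f(x_r;z^m)-\nabla F_m(x_r)$ as $\bigl(\nabla f(x^*;z^m)-\nabla F_m(x^*)\bigr)$ plus the centered increment $\bigl(\nabla f(x_r;z^m)-\nabla f(x^*;z^m)\bigr)-\bigl(\nabla F_m(x_r)-\nabla F_m(x^*)\bigr)$, using $\nrm*{a+b}^2\le 2\nrm*{a}^2+2\nrm*{b}^2$, $\E\nrm*{X-\E X}^2\le\E\nrm*{X}^2$, and the co-coercivity estimate $\E\nrm*{\nabla f(x_r;z^m)-\nabla f(x^*;z^m)}^2\le 2H\bigl(F_m(x_r)-F_m(x^*)-\inner{\nabla F_m(x^*)}{x_r-x^*}\bigr)$ from Lemma~\ref{lem:co-coercivity}, then averaging over $m$ and $k$ and using $\nabla F(x^*)=0$, one obtains $\E\nrm*{g_r-\nabla F(x_r)}^2\le \frac{2\sigma_*^2}{MK}+\frac{4H}{MK}\bigl(F(x_r)-F^*\bigr)$ (and the cleaner $\sigma^2/(MK)$ bound when one only wants to use \eqref{eq:def-sigma}).

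For the convex case I would run the standard one-step inequality $\E\nrm*{x_{r+1}-x^*}^2\le\nrm*{x_r-x^*}^2-2\eta\,\E\bigl(F(x_r)-F^*\bigr)+\eta^2\,\E\nrm*{g_r}^2$, bound $\E\nrm*{g_r}^2\le 2\E\nrm*{g_r-\nabla F(x_r)}^2+2\nrm*{\nabla F(x_r)}^2$, and use co-coercivity together with the variance bound above to rewrite the right-hand side purely in terms of $F(x_r)-F^*$ and $\sigma_*^2/(MK)$. Taking $\eta\le 1/(cH)$ keeps the net coefficient of $F(x_r)-F^*$ negative and of order $-\eta$; telescoping over $r=0,\dots,R-1$, dividing by $\eta R$, and using Jensen for the uniformly averaged iterate gives $\E F(\hat{x})-F^*\lesssim \nrm*{x^*}^2/(\eta R)+\eta\sigma_*^2/(MK)$, and optimizing $\eta$ over $(0,1/(cH)]$ yields $HB^2/R+\sigma_*B/\sqrt{MKR}$. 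For the strongly convex case I would feed the same recursion (now with the $(1-\lambda\eta_r)$ contraction from $\lambda$-strong convexity of $F$) into Lemma~\ref{lem:sc-exp-rate} with $r_r=\E\nrm*{x_r-x^*}^2$, $s_r=\E(F(x_r)-F^*)$, $a=\lambda$, $c\asymp\sigma_*^2/(MK)$, $d\asymp H$; the lemma directly produces a weighted average $\hat{x}$ with $\E F(\hat{x})-F^*\lesssim H\nrm*{x_0-x^*}^2\exp(-\lambda R/(cH))+\sigma_*^2/(\lambda MKR)$, and substituting $\nrm*{x_0-x^*}^2\le\frac{2}{\lambda}(F(0)-F^*)\le\frac{2\Delta}{\lambda}$ with the constant in $d$ chosen so the exponent becomes $-\lambda R/(8H)$ completes the proof.

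The step I expect to require the most care is the first one: extracting a clean suboptimality-dependent variance bound from the weak assumption \eqref{eq:def-sigma-star}, and then verifying that the stepsize cap $\eta\le 1/(cH)$ already forced by smoothness suffices to absorb the extra $\frac{H}{MK}(F(x_r)-F^*)$ term into the negative descent term (it does, since $MK\ge 1$). Everything else is routine bookkeeping in the standard SGD templates; the accelerated half of Theorem~\ref{thm:mbsgd-upper-bound} follows in the same way by plugging the variance bound (in terms of $\sigma$, per the footnote) into the AC-SA and multi-stage AC-SA analyses of \citet{ghadimi2012optimal,ghadimi2013optimal}.
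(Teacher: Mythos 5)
Your proposal is correct and follows essentially the same route as the paper's proof: the one-step SGD recursion, a decomposition of the stochastic gradients through $x^*$ combined with co-coercivity (Lemma \ref{lem:co-coercivity}) to express the second moment in terms of $\sigma_*^2/(MK)$ plus a suboptimality term absorbed by the stepsize cap, telescoping with Jensen's inequality in the convex case, and Lemma \ref{lem:sc-exp-rate} with $\nrm{x_0-x^*}^2 \le 2\Delta/\lambda$ in the strongly convex case. The only cosmetic difference is that the paper bounds $\E\nrm{g_r}^2$ directly rather than splitting off the variance first, which only affects universal constants (e.g.\ the precise constant in the exponent).
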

\begin{proof}
By the $\lambda$-strong convexity of $F$, where $\lambda$ might be equal to zero:
\begin{align}
\E\nrm*{x_{t+1} - x^*}^2 
&= \E\nrm*{x_t - \eta_t\frac{1}{KM}\sum_{m=1}^M\sum_{k=1}^K\nabla f(x_t;z_t^{m,k}) - x^*}^2, \\
&= \E\nrm*{x_t - x^*}^2 - 2\eta_t\E\inner{\nabla F(x_t)}{x_t - x^*} \nonumber\\
&\qquad + \eta_t^2\E\nrm*{\frac{1}{KM}\sum_{m=1}^M\sum_{k=1}^K\nabla f(x_t;z_t^{m,k})}^2,  \\
&\leq \prn*{1-\lambda\eta_t}\E\nrm*{x_t - x^*}^2 - 2\eta_t\E\brk*{F(x_t) - F^*} \nonumber\\
&\qquad + \eta_t^2\E\nrm*{\frac{1}{KM}\sum_{m=1}^M\sum_{k=1}^K\nabla f(x_t;z_t^{m,k})}^2\label{eq:homogeneous-mbsgd-left-off}.
\end{align}
By the $H$-smoothness of $f(\cdot\,;z)$, we can bound the final term with
\begin{align}
&\E\nrm*{\frac{1}{KM}\sum_{m=1}^M\sum_{k=1}^K\nabla f(x_t;z_t^{m,k})}^2 \nonumber\\
&= \E\nrm*{\frac{1}{KM}\sum_{m=1}^M\sum_{k=1}^K\brk*{\nabla f(x_t;z_t^{m,k}) - \nabla f(x^*; z_t^{m,k}) + \nabla f(x^*;z_t^{m,k})}}^2, \\
&\leq 2\E\nrm*{\frac{1}{KM}\sum_{m=1}^M\sum_{k=1}^K\brk*{\nabla f(x_t;z_t^{m,k}) - \nabla f(x^*; z_t^{m,k})}}^2 \nonumber\\
&\qquad + 2\E\nrm*{\frac{1}{KM}\sum_{m=1}^M\sum_{k=1}^K\nabla f(x^*;z_t^{m,k})}^2, \\
&\leq \frac{2}{KM}\sum_{m=1}^M\sum_{k=1}^K\E\nrm*{\nabla f(x_t;z_t^{m,k}) - \nabla f(x^*; z_t^{m,k})}^2 \nonumber\\
&\qquad + 2\E\nrm*{\frac{1}{KM}\sum_{m=1}^M\sum_{k=1}^K\nabla f(x^*;z_t^{m,k}) - \nabla F_m(x^*)}^2, \\
&\leq \frac{4H}{KM}\sum_{m=1}^M\sum_{k=1}^K\E\brk*{f(x_t;z_t^{m,k}) - f(x^*;z_t^{m,k}) - \inner{\nabla f(x^*;z_t^{m,k})}{x_t - x^*}} + \frac{2\sigma_*^2}{MK}, \\
&= 4H\E\brk*{F(x_t) - F^*} + \frac{2\sigma_*^2}{MK}.
\end{align}
Where, for the third inequality we used Lemma \ref{lem:co-coercivity}.
Plugging this back into \eqref{eq:homogeneous-mbsgd-left-off}, then for $\eta_t \leq \frac{1}{4H}$,
\begin{align}
\E\nrm*{x_{t+1} - x^*}^2 
&\leq \prn*{1-\lambda\eta_t}\E\nrm*{x_t - x^*}^2 - 2\eta_t\prn*{1 - 2H\eta_t}\E\brk*{F(x_t) - F^*} + \frac{2\eta_t^2\sigma_*^2}{MK}, \\
&\leq \prn*{1-\lambda\eta_t}\E\nrm*{x_t - x^*}^2 - \eta_t\E\brk*{F(x_t) - F^*} + \frac{2\eta_t^2\sigma_*^2}{MK} \label{eq:recursion},\\
\E\brk*{F(x_t) - F^*}
&\leq \prn*{\frac{1}{\eta_t}-\lambda}\E\nrm*{x_t - x^*}^2 - \frac{1}{\eta_t}\E\nrm*{x_{t+1} - x^*}^2 + \frac{2\eta_t\sigma_*^2}{MK}.
\end{align}
Now we look at $\lambda = 0$ and $\lambda > 0$ separately.
\paragraph{Convex case ($\lambda = 0$):}
Choose a constant step-size, 
\begin{equation}
    \eta_t = \eta = \min\crl*{\frac{1}{4H},\, \frac{B\sqrt{MK}}{\sigma_*\sqrt{T}}}.
\end{equation}
Then the averaged iterate $\bar{x}_R = \frac{1}{R}\sum_{t=1}^{R} x_t$ satisfies:
\begin{align}
\E F(\bar{x}_R) - F^* 
&\leq \frac{1}{R}\sum_{t=1}^R\E F(x_t) - F^*, \\
&\leq \frac{1}{R}\sum_{t=1}^R \frac{1}{\eta}\E\nrm*{x_t - x^*}^2 - \frac{1}{\eta}\E\nrm*{x_{t+1} - x^*}^2 + \frac{2\eta\sigma_*^2}{MK}, \\
&\leq \frac{\nrm*{x_0 - x^*}^2}{\eta R} + \frac{2\eta\sigma_*^2}{MK}, \\
&\leq \max\crl*{\frac{4HB^2}{R},\, \frac{\sigma_* B}{\sqrt{MKR}}} + \frac{2\sigma_* B}{\sqrt{MKR}}, \\
&\leq \frac{4H B^2}{R} + \frac{3\sigma_* B}{\sqrt{MKR}}.
\end{align}

\paragraph{Strongly convex case ($\lambda > 0$):}
Rewriting \eqref{eq:recursion}, 
\begin{align}
\E\nrm*{x_{t+1} - x^*}^2 & \leq \prn*{1-\lambda\eta_t}\E\nrm*{x_t - x^*}^2 - \eta_t\E\brk*{F(x_t) - F^*} + \frac{2\eta_t^2\sigma_*^2}{MK},
\end{align}
we note that it satisfies the conditions for Lemma \ref{lem:sc-exp-rate} for the specific assignment:
\begin{align}
    r_t &= \E\nrm*{x_t - x^*}^2,\ s_t = \E\brk*{F(x_t) - F^*},\\
    a &= \lambda,\ b = 1,\ c = \frac{2\sigma_*^2}{MK},\ d=4H,\ T = R.
\end{align}
Thus using Lemma \ref{lem:sc-exp-rate}, and applying Jensen's inequality we can guarantee the following convergence rate for the averaged iterate $\hat{x}_R =\frac{1}{W_R}\sum_{t=1}^{R}w_t x_t $,
\begin{align}
    \E\brk*{F(\hat{x}_R) - F^*} \leq 128H\nrm*{x_0 - x^*}^2\exp\sb{-\frac{\lambda R}{8H}} + \frac{72\sigma_*^2}{\lambda MKR},  
\end{align}
using step-size $\eta_t$ and $w_t$ given by,
\begin{align*}
    &if\ R\leq \frac{4H}{\lambda}, && \eta_t = \frac{1}{4H}, &&w_t = (1-\lambda\eta_t)^{-(t+1)},\\
    &if\ R> \frac{4H}{\lambda}\ and\ t<t_0, && \eta_t = \frac{1}{4H}, &&w_t = 0,\\
    &if\ R> \frac{4H}{\lambda}\ and\ t\geq t_0, && \eta_t = \frac{2}{\lambda(\kappa + t - t_0)}, &&w_t = (\kappa + t - t_0)^2,
\end{align*}
where $\kappa = \frac{8H}{\lambda}$ and $t_0 = \ceil{\frac{R}{2}}$. We conclude by observing that $HB^2 \leq \frac{H\Delta}{\lambda}$.
\end{proof}

\subsection{Accelerated Minibatch SGD for heterogeneous objectives}\label{sec:ac_mbsgd_rates}
We first recall some classical results from \citet{ghadimi2012optimal, ghadimi2013optimal} for accelerated variants of minibatch SGD. These results are for minimizing $F(x):=\E_{z\sim\mc{D}}f(x,z)$ where $F$ is $H$-smooth and $\lambda (\geq 0)$-strongly convex. The algorithms use unbiased stochastic gradients $\{g_t\}_{t\in [T]}$, i.e., for all $t$, $\E \sb{g_t(x)} = \nabla F(x)$ which have bounded variance for all $x$, i.e., $\E\nrm*{g_t(x) - \nabla F(x)}^2 \leq \sigma^2$. 

First consider the AC-SA algorithm \citet[c.f., Sec 3.1,][]{ghadimi2012optimal}), with step-size parameters $\{\alpha_t\}_{t\geq 1}$ and $\{\gamma_t\}_{t\geq 1}$ s.t.~$\alpha_1=1$, $\alpha_t\in(0,1)$ for any $t\geq 2$ and $\gamma_t>0$ for any $t\geq 1$. The algorithm maintains three intertwined sequences $\{x_t\}$, $\{x_t^{ag}\}$, and $\{x_t^{md}\}$, updated as follows:   
\begin{enumerate}
    \item Set the initial points $x_0^{ag}=x_0 \in X$ and $t=1$;
    \item Set $x_t^{md} = \frac{(1-\alpha_t)(\lambda+\gamma_t)}{\gamma_t + (1-\alpha^2_t)\lambda}x^{ag}_{t-1} + \frac{\alpha_t[(1-\alpha_t)\mu + \gamma_t]}{\gamma_t + (1-\alpha^2_t)\lambda}x_{t-1}$;
    \item Call the stochastic oracle to get the gradient $g_t$ at the point $x_t^{md}$;
    \item Set $x_t = \argmin_{x\in X}\left\{ \alpha_t[\inner{g_t}{x} + \frac{\lambda}{2}\nrm*{x_t^{md}-x}^2] + [(1-\alpha_t)\frac{\lambda}{2} + \frac{\gamma_t}{2}]\nrm*{x_{t-1}-x}^2\right\}$;
    \item Set $x_t^{ag} = \alpha_tx_t + (1-\alpha_t)x_{t-1}^{ag}$;
    \item Set $t\leftarrow t+1$ and go to step 1. 
\end{enumerate}

We have the following (almost optimal) convergence rate for strongly convex functions using AC-SA (see, Sec 3.1 in \cite{ghadimi2012optimal}). 
\begin{lemma} (\citet{ghadimi2012optimal}, Proposition 9)\label{lem:ac_sgd_sc}
Let $\hat{x}^{ag}$ be computed by $T$ steps of AC-SA using stochastic gradients of variance $\sigma^2$, then for a universal constant $c$,
\begin{align*}
    \E\sb{F(x^{ag}) - F(x^\star)}  \leq c\cdot \frac{H\nrm*{x_0-x^\star}^2}{T^2} + c\cdot\frac{\sigma^2}{\lambda T}.
\end{align*}
\end{lemma}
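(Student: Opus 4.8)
The cleanest route is to invoke Proposition~9 of \citet{ghadimi2012optimal} essentially verbatim: the AC-SA scheme written out in steps 1--6 above is exactly theirs specialized to $X=\R^d$, so the constrained minimization in step~4 has a closed form and no projection is needed, and the stated rate is precisely the strongly convex guarantee proved there. For completeness, I sketch the structure of the underlying argument one would reproduce.

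First I would write $g_t = \nabla F(x_t^{md}) + \delta_t$, noting that $x_t^{md}$ is determined by $x_{t-1}$ and $x_{t-1}^{ag}$ and hence is $\mathcal{F}_{t-1}$-measurable, so that $\E\brk*{\delta_t \mid \mathcal{F}_{t-1}} = 0$ and $\E\nrm*{\delta_t}^2 \le \sigma^2$. The one-step estimate then comes from combining three ingredients: (i) $H$-smoothness of $F$ at $x_t^{ag}$ relative to $x_t^{md}$, $F(x_t^{ag}) \le F(x_t^{md}) + \inner{\nabla F(x_t^{md})}{x_t^{ag} - x_t^{md}} + \frac{H}{2}\nrm*{x_t^{ag} - x_t^{md}}^2$; (ii) the identity $x_t^{ag} = \alpha_t x_t + (1-\alpha_t)x_{t-1}^{ag}$ together with convexity and $\lambda$-strong convexity of $F$, which converts $F(x_t^{md})$ into a convex combination of $F(x_{t-1}^{ag})$ and a lower model around $x_t^{md}$ evaluated at an arbitrary comparator $x$; (iii) the optimality condition for the quadratic subproblem defining $x_t$, which yields the three-point inequality bounding $\alpha_t\inner{g_t}{x_t - x}$ by $\bigl[(1-\alpha_t)\tfrac{\lambda}{2}+\tfrac{\gamma_t}{2}\bigr]\bigl(\nrm*{x_{t-1}-x}^2 - \nrm*{x_t-x}^2\bigr)$ minus a $\nrm*{x_t - x_{t-1}}^2$ term (which absorbs the smoothness quadratic once $\gamma_t \gtrsim H\alpha_t$) plus the strong-convexity bonus $\tfrac{\lambda}{2}\nrm*{x_t^{md}-x_t}^2$.

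Setting $x = x^\star$ and rearranging produces, with $\Gamma_t := \prod_{s=2}^{t}(1-\alpha_s)$ and $c_t := \Gamma_t^{-1}\bigl(\gamma_t + (1-\alpha_t^2)\lambda\bigr)/\alpha_t$, a telescoping recursion of the form
\[
\tfrac{1}{\Gamma_t}\bigl(F(x_t^{ag}) - F^\star\bigr) + \tfrac{c_t}{2}\nrm*{x_t - x^\star}^2 \;\le\; \tfrac{1}{\Gamma_{t-1}}\bigl(F(x_{t-1}^{ag}) - F^\star\bigr) + \tfrac{c_{t-1}}{2}\nrm*{x_{t-1} - x^\star}^2 + L_t + \tfrac{\alpha_t\,\nrm*{\delta_t}^2}{2\,\Gamma_t(\gamma_t - H\alpha_t)},
\]
where $L_t$ is linear in $\delta_t$ with $\E\brk*{L_t \mid \mathcal{F}_{t-1}} = 0$. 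Taking expectations kills $L_t$; summing $t=1,\dots,T$ gives $\E\brk*{F(x_T^{ag}) - F^\star} \le \Gamma_T\cdot(\text{initialization}) + \Gamma_T\sum_{t=1}^{T}\tfrac{\alpha_t}{2\Gamma_t(\gamma_t - H\alpha_t)}\sigma^2$. The last step is to plug in the Proposition~9 stepsize policy — $\alpha_t = 2/(t+1)$, so $\Gamma_t \asymp 1/t^2$, and $\gamma_t$ chosen on the order of $\max\crl*{H/T^2,\ \lambda t}$ — which makes the initialization term $\asymp H\nrm*{x_0-x^\star}^2/T^2$ and the accumulated-noise sum telescope to $\asymp \sigma^2/(\lambda T)$ (the weights $\alpha_t/(\Gamma_t\gamma_t)\asymp t^3/(\lambda t) = t^2/\lambda$ sum to $\asymp T^3/\lambda$ against $\Gamma_T^{-1}\asymp T^2$). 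Finally, Jensen/convexity is not needed here since $x_T^{ag}$ is itself the reported point.

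The main obstacle — and the reason it is preferable to cite rather than re-derive — is verifying that this single $(\alpha_t,\gamma_t)$ schedule simultaneously (a) makes \emph{every} term in the recursion telescope exactly, (b) keeps $\gamma_t - H\alpha_t$ bounded below so the per-step noise coefficient is controlled, (c) delivers the accelerated $\Gamma_T \asymp 1/T^2$ decay, and (d) yields precisely $\sigma^2/(\lambda T)$ rather than a logarithmically inflated or $\sqrt{T}$-type statistical term. This is a delicate but routine bookkeeping exercise carried out in full in \citet{ghadimi2012optimal}, so for our purposes it suffices to quote it.
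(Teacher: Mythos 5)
Your proposal matches the paper exactly: the paper gives no proof of this lemma at all, simply quoting Proposition~9 of \citet{ghadimi2012optimal} for the AC-SA scheme specialized to $X=\R^d$, which is precisely your primary route. Your accompanying sketch of the underlying telescoping argument is a reasonable outline (though the stepsize bookkeeping is slightly off---Ghadimi and Lan take $\gamma_t \propto 1/(t(t+1))$ with the strong-convexity parameter entering the noise denominator through the $(1-\alpha_t)\lambda$ prox term in the subproblem, not via $\gamma_t \asymp \lambda t$), but since the result is invoked by citation this does not affect correctness.
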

It can be adapted to the weakly convex case by noting that, if $\tilde{F}(x) := F(x) +\frac{\lambda}{2}\nrm*{x_0-x}^2$ for any $\lambda, x_0$, and $x^\star = \argmin F(x)$ then,
\begin{align*}
    &\min_y\E\sb{\tilde{F}(y)} \leq \E\sb{F(x^\star) + \frac{\lambda}{2}\nrm*{x_0-x^\star}^2},\\  
    &\Rightarrow -\E\sb{F(x^\star)} \leq -\E\sb{\min_y\tilde{F}(y)} + \frac{\lambda}{2}\nrm*{x_0-x^\star}^2,\\
    &\Rightarrow \E\sb{F(x) - F(x^\star)} \leq \E\sb{\tilde{F}(x) - \min_y \tilde{F}(y)} + \frac{\lambda}{2}\nrm*{x_0-x^\star}^2, \forall x.
\end{align*}
This also holds if we optimize the right hand side w.r.t. $\lambda$. In other words, a guarantee for the strongly convex case, can be converted to the weakly convex case, by regularizing with $\frac{\lambda}{2}\nrm*{x_0-x^\star}^2$ with optimal value of $\lambda$. This gives the following result, 
\begin{lemma}\label{lem:ac_sgd}
Let $\hat{x}^{ag}$ be computed by $T$ steps of AC-SA on the regularized objective $\tilde{F}(x) = F(x) + \frac{\sigma}{2\nrm{x_0 - x^*}\sqrt{T}}\nrm{x - x_0}^2$, where the stochastic gradients have variance $\sigma^2$, then for a constant $c$
\begin{align*}
    \E\sb{F(\hat{x}^{ag}) - F(x^\star)}  \leq c\cdot \frac{H\nrm*{x_0-x^\star}^2}{T^2} + c\cdot\frac{\sigma\nrm*{x_0-x^\star}}{\sqrt{T}}.
\end{align*}
\end{lemma}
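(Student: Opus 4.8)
The plan is to obtain Lemma~\ref{lem:ac_sgd} from Lemma~\ref{lem:ac_sgd_sc} by exactly the regularization reduction outlined just above the statement, with the regularization strength chosen to balance the resulting terms. Fix $\mu = \frac{\sigma}{2\nrm*{x_0-x^\star}\sqrt{T}}$, set $\tilde{F}(x) = F(x) + \frac{\mu}{2}\nrm*{x-x_0}^2$, and let $\tilde{x}^\star = \argmin_y \tilde{F}(y)$. First I would check that running AC-SA on $\tilde{F}$ is legitimate and falls under Lemma~\ref{lem:ac_sgd_sc}: the function $\tilde{F}$ is $\mu$-strongly convex and $(H+\mu)$-smooth, and a stochastic gradient of $\tilde{F}$ at $x$ is $g_t(x) + \mu(x-x_0)$, which is unbiased for $\nabla\tilde{F}(x)$ and, since the correction $\mu(x-x_0)$ is deterministic, still has variance at most $\sigma^2$. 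So AC-SA configured for the $(H+\mu)$-smooth, $\mu$-strongly convex problem $\tilde F$ is covered by Lemma~\ref{lem:ac_sgd_sc}.

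Second, I would control the distance to the regularized optimum. From $\tilde{F}(\tilde{x}^\star) \leq \tilde{F}(x^\star)$ and $F(\tilde{x}^\star) \geq F(x^\star)$ we get $\frac{\mu}{2}\nrm*{x_0-\tilde{x}^\star}^2 \leq \frac{\mu}{2}\nrm*{x_0-x^\star}^2$, hence $\nrm*{x_0-\tilde{x}^\star} \leq \nrm*{x_0-x^\star}$. Applying Lemma~\ref{lem:ac_sgd_sc} to $\tilde{F}$ and using this bound gives
\[
\E\sb{\tilde{F}(\hat{x}^{ag}) - \tilde{F}(\tilde{x}^\star)} \leq c\cdot\frac{(H+\mu)\nrm*{x_0-x^\star}^2}{T^2} + c\cdot\frac{\sigma^2}{\mu T}.
\]
Then I would invoke the inequality $\E\sb{F(x) - F(x^\star)} \leq \E\sb{\tilde{F}(x) - \min_y\tilde{F}(y)} + \frac{\mu}{2}\nrm*{x_0-x^\star}^2$ derived in the excerpt (noting $\min_y\tilde F(y) = \tilde F(\tilde x^\star)$), evaluated at $x = \hat{x}^{ag}$, to conclude
\[
\E\sb{F(\hat{x}^{ag}) - F(x^\star)} \leq c\cdot\frac{(H+\mu)\nrm*{x_0-x^\star}^2}{T^2} + c\cdot\frac{\sigma^2}{\mu T} + \frac{\mu}{2}\nrm*{x_0-x^\star}^2.
\]

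Finally I would substitute the chosen $\mu$. The term $c\,\sigma^2/(\mu T)$ becomes $2c\,\sigma\nrm*{x_0-x^\star}/\sqrt{T}$, the term $\frac{\mu}{2}\nrm*{x_0-x^\star}^2$ becomes $\sigma\nrm*{x_0-x^\star}/(4\sqrt{T})$, and the $\mu$-part of the first term, $c\,\mu\nrm*{x_0-x^\star}^2/T^2 = c\,\sigma\nrm*{x_0-x^\star}/(2\sqrt{T}\,T^2)$, is of strictly lower order and is absorbed into the other $\sigma\nrm*{x_0-x^\star}/\sqrt{T}$ terms; what remains is $c\,H\nrm*{x_0-x^\star}^2/T^2$. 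Collecting constants gives the stated bound. I do not expect any genuine obstacle: the reduction is standard, and the only points needing a line of care are the variance-preservation of the regularized stochastic gradient, the bound $\nrm*{x_0-\tilde{x}^\star}\leq\nrm*{x_0-x^\star}$ that keeps the leading term from degrading, and verifying that with the stated $\mu$ the added smoothness and the added $\frac{\mu}{2}\nrm*{x_0-x^\star}^2$ contribute only terms of order $\sigma\nrm*{x_0-x^\star}/\sqrt{T}$ (which is precisely why this $\mu$, the balance point up to a constant, is the right choice).
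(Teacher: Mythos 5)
Your proposal is correct and is essentially the paper's own argument: the paper obtains this lemma by exactly the regularization reduction sketched just above it (apply the strongly convex AC-SA guarantee of Lemma~\ref{lem:ac_sgd_sc} to $\tilde F$, add the $\frac{\mu}{2}\nrm{x_0-x^\star}^2$ penalty, and balance with $\mu \asymp \sigma/(\nrm{x_0-x^\star}\sqrt{T})$); you merely spell out the details the paper leaves implicit (variance preservation, $\nrm{x_0-\tilde x^\star}\le\nrm{x_0-x^\star}$, and absorbing the extra $\mu$ in the smoothness constant). The only nitpick is that your $\mu$ is half the coefficient appearing in the lemma's stated regularizer, a factor of two that changes nothing beyond the universal constant.
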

This is minimax optimal for weakly convex functions. Next we consider the multi-stage accelerated SGD algorithm \citet[c.f., Sec 3,][]{ghadimi2013optimal} which uses the above AC-SA algorithm. Let $p_0\in X$, have bounded sub-optimality $F(p_0) - F(x^\star) \leq \Delta$, then for $k=1,2, \dots$,
\begin{enumerate}
    \item Run $N_k$ iterations of the generic AC-SA by using $x_0=p_{k-1}$, $\{\alpha_t\}_{t\geq 1}$, and $\{\gamma_t\}_{t\geq 1}$, with relevant definitions as follows,
    \begin{align*}
        N_k &=  \ceil{\max\left\{ 4\sqrt{\frac{2H}{\lambda}},\frac{128\sigma^2}{3\lambda\Delta 2^{-(k+1)}}\right\}},\\
        \alpha_t &= \frac{2}{t+1}, \gamma_t = \frac{4\phi_k}{t(t+1)},\\
        \phi_k &= \max\left\{2H, \left[\frac{\lambda\sigma^2}{3\Delta 2^{-(k-1)}N_k(N_{k}+1)(N_{k}+2)}\right]^{1/2}\right\};
    \end{align*}
    \item Set $p_k = x_{N_k}^{ag}$ where $x_{N_k}^{ag}$ is the solution obtained in the previous step.   
\end{enumerate}
We have following optimal rate for strongly convex functions for this algorithm,
\begin{lemma} (\citet{ghadimi2013optimal}, Proposition 7)\label{lem:mac_sgd}
Let $\hat{x}^{ag}$ be computed by $T$ steps of multi-stage AC-SA using stochastic gradients of variance $\sigma^2$, then for a universal constant $c$,
\begin{align*}
    \E\sb{F(\hat{x}^{ag}) - F(x^\star)}  \leq c\cdot \Delta \exp\sb{-\sqrt{\frac{\lambda}{H}}T} + c\cdot\frac{\sigma^2}{\lambda T}.
\end{align*}
\end{lemma}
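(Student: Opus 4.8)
The plan is to prove the bound by chaining the single-stage AC-SA guarantee of Lemma~\ref{lem:ac_sgd_sc} across the stages of the multi-stage algorithm, showing that each stage halves the expected suboptimality, and then converting the number of completed stages into an iteration-count bound. Write $\delta_k \defeq \E\sb{F(p_k) - F^*}$, so $\delta_0 \leq \Delta$ by assumption. First I would condition on all the randomness through the end of stage $k-1$: stage $k$ is exactly $N_k$ iterations of AC-SA initialized at $p_{k-1}$, so applying Lemma~\ref{lem:ac_sgd_sc} conditionally on the filtration $\mathcal{F}_{k-1}$ gives $\E\sb{F(p_k) - F^* \mid \mathcal{F}_{k-1}} \leq c H \nrm*{p_{k-1} - x^\star}^2 / N_k^2 + c \sigma^2 / (\lambda N_k)$. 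Using $\lambda$-strong convexity of $F$ in the form $\tfrac{\lambda}{2}\nrm*{p_{k-1} - x^\star}^2 \leq F(p_{k-1}) - F^*$ and taking a full expectation, $\delta_k \leq 2cH\delta_{k-1}/(\lambda N_k^2) + c\sigma^2/(\lambda N_k)$.

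The inductive claim is then $\delta_k \leq \Delta 2^{-k}$. Assuming $\delta_{k-1} \leq \Delta 2^{-(k-1)}$, I would check that the schedule $N_k = \Theta\prn*{\max\crl*{\sqrt{H/\lambda},\ \sigma^2/(\lambda \Delta 2^{-(k-1)})}}$ used by the algorithm makes the first term at most $\Delta 2^{-k}/2$ (the $\sqrt{H/\lambda}$ branch forces $2cH\delta_{k-1}/(\lambda N_k^2) \lesssim \delta_{k-1} \lesssim \Delta 2^{-k}$) and the second term at most $\Delta 2^{-k}/2$ (the noise branch forces $c\sigma^2/(\lambda N_k) \lesssim \Delta 2^{-k}$), so that $\delta_k \leq \Delta 2^{-k}$. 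This is purely a constant-matching exercise; the settings of $\phi_k$, $\alpha_t$, and $\gamma_t$ quoted from \citet{ghadimi2013optimal} are precisely what is needed to instantiate the inner AC-SA step sizes at the right initialization-distance scale for this implication to hold.

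Finally I would count iterations. After $\bar{k}$ completed stages the total number of stochastic-gradient steps is $T = \sum_{k=1}^{\bar{k}} N_k$; since $N_k = O\prn*{\sqrt{H/\lambda}} + O\prn*{\sigma^2 2^{k-1}/(\lambda\Delta)}$ and $\sum_{k=1}^{\bar{k}} 2^{k-1} = 2^{\bar{k}} - 1$, this gives $T = O\prn*{\bar{k}\sqrt{H/\lambda}} + O\prn*{\sigma^2 2^{\bar{k}}/(\lambda\Delta)}$. Given a budget $T$, the algorithm completes the largest $\bar{k}$ consistent with this relation and outputs $p_{\bar{k}}$, so $\E\sb{F(\hat{x}^{ag}) - F^*} = \delta_{\bar{k}} \leq \Delta 2^{-\bar{k}}$. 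Taking $\bar{k}$ of order $\min\crl*{\sqrt{\lambda/H}\,T,\ \log_2(\lambda\Delta T/\sigma^2)}$ makes both terms above at most $T/2$, so at least that many stages fit: when the $\sqrt{H/\lambda}$ term dominates, $\bar{k} = \Omega(\sqrt{\lambda/H}\,T)$ and $\Delta 2^{-\bar{k}} \leq \Delta\exp\prn*{-c'\sqrt{\lambda/H}\,T}$ (using that $H \geq \lambda$, so $1/(1+\sqrt{H/\lambda}) \gtrsim \sqrt{\lambda/H}$); when the noise term dominates, $2^{\bar{k}} = \Theta(\lambda\Delta T/\sigma^2)$ and $\Delta 2^{-\bar{k}} = \Theta(\sigma^2/(\lambda T))$. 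Adding the two regime bounds yields $\delta_{\bar{k}} \leq c\Delta\exp\prn*{-\sqrt{\lambda/H}\,T} + c\sigma^2/(\lambda T)$, as claimed.

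I expect this last step to be the main obstacle: cleanly inverting the relation between the iteration budget $T$ and the number of completed stages $\bar{k}$ through the ceilings in the $N_k$ and across the two regimes, so that a single expression of the stated form emerges with a genuinely universal constant. A secondary subtlety is the conditioning in the first step — the per-stage AC-SA guarantee holds only in expectation over that stage's own randomness, so one must invoke it conditionally on $\mathcal{F}_{k-1}$ (using that $N_k$ is a fixed, data-independent schedule rather than a function of the realized $p_{k-1}$) and only then take the tower expectation, rather than naively composing bounds as though the iterates were deterministic.
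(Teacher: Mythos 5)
This lemma is not proved in the paper at all: it is quoted verbatim as an external result (\citet{ghadimi2013optimal}, Proposition~7), so there is no in-paper argument to compare against. Your reconstruction is the standard restart analysis and is essentially the proof Ghadimi and Lan themselves give: apply the single-stage guarantee of Lemma~\ref{lem:ac_sgd_sc} conditionally on the past (legitimate here because the stage lengths $N_k$ are a fixed schedule), convert the distance term via $\lambda$-strong convexity, prove $\E[F(p_k)-F^*]\le \Delta 2^{-k}$ by induction using the two branches of the $N_k$ definition, and then invert $T=\sum_k N_k$ across the two regimes. The outline is sound and the constant-matching you defer to does go through with the stated $N_k$ and $\phi_k$. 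The one caveat worth flagging is the final form: your inversion yields $\Delta\exp\prn*{-c'\sqrt{\lambda/H}\,T}$ with a universal constant $c'<1$ \emph{inside} the exponent (coming from $N_k\gtrsim\sqrt{H/\lambda}$ per stage and the base-$2$ halving), whereas the lemma as restated places the constant only outside. This is an artifact of the paper's loose restatement rather than a flaw in your argument --- note that where the bound is actually used (Theorem~\ref{thm:mbsgd-upper-bound}) the paper writes $\exp\prn*{-\sqrt{\lambda}R/(c_3\sqrt{H})}$, i.e.\ it does allow a constant in the exponent --- but if you want the displayed inequality literally as stated you should either absorb the constant the same way or remark that it is implicit.
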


\begin{theorem}
Under the convex assumptions, performing AC-SA on the regularized objective $\tilde{F}(x) = F(x) + \frac{\sigma}{2B\sqrt{MKR}}\nrm{x}^2$ guarantees for a universal constant $c$
\[
\E F(\hat{x}) - F^* \leq c\cdot\frac{HB^2}{R^2} + c\cdot\frac{\sigma B}{\sqrt{MKR}}.
\]
Under the strongly convex assumptions, the multi-stage AC-SA algorithm guarantees
\[
\E F(\hat{x}) - F^* \leq c\cdot \Delta \exp\sb{-\sqrt{\frac{\lambda}{H}}R} + c\cdot\frac{\sigma^2}{\lambda MKR}.
\]
\end{theorem}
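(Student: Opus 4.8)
The plan is to reduce $R$ rounds of Accelerated Minibatch SGD to $R$ iterations of a generic accelerated SGD method supplied with a low-variance stochastic gradient oracle, and then invoke the off-the-shelf rates recorded in Lemmas~\ref{lem:ac_sgd_sc}, \ref{lem:ac_sgd}, and~\ref{lem:mac_sgd}. The enabling observation is exactly the variance computation already carried out in Section~\ref{sec:mbsgd}: within round $r$, each machine forms $K$ stochastic gradients at a common point, the $MK$ of them are averaged into $g_r$, and this $g_r$ is an unbiased estimate of $\nabla F$ at that point with $\E\nrm{g_r-\nabla F}^2 \le \sigma^2/(MK)$ everywhere (and $\le\sigma_*^2/(MK)$ at $x^*$). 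Since each round uses fresh independent samples, the oracle calls across the $R$ rounds are mutually independent, so running AC-SA (resp.\ multi-stage AC-SA) for $T=R$ iterations against this oracle \emph{is} Accelerated Minibatch SGD. Thus it suffices to substitute $\sigma^2 \leftarrow \sigma^2/(MK)$ and $T \leftarrow R$ into the appropriate convergence guarantee; note in particular that the relevant iteration count for the accelerated rate is the number of communication rounds $R$, not $T=KR$.

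For the strongly convex assumptions this is immediate. The objective $F$ is $H$-smooth and $\lambda$-strongly convex with $F(0)-F^*\le\Delta$, so applying Lemma~\ref{lem:mac_sgd} to $F$ with variance $\sigma^2/(MK)$ and $T=R$ yields $\E F(\hat x)-F^* \le c\,\Delta\exp(-\sqrt{\lambda/H}\,R) + c\,\sigma^2/(\lambda MKR)$, where $\hat x$ is the output $p_R$ of multi-stage AC-SA. This is exactly the claimed bound.

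For the convex assumptions I would apply the strongly convex accelerated rate (Lemma~\ref{lem:ac_sgd_sc}) to the regularized objective $\tilde F(x) = F(x) + \frac{\mu}{2}\nrm{x}^2$ with $\mu = \frac{\sigma}{B\sqrt{MKR}}$; this $\tilde F$ is $\mu$-strongly convex and $(H+\mu)$-smooth, and the same $g_r$ (plus the exact term $\mu x_r^{md}$) is an oracle for $\nabla\tilde F$ with variance $\sigma^2/(MK)$. Lemma~\ref{lem:ac_sgd_sc} then gives $\E[\tilde F(\hat x) - \min\tilde F] \le c\frac{(H+\mu)\nrm{x_0-\tilde x^*}^2}{R^2} + c\frac{\sigma^2}{\mu MKR}$. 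Two short facts finish the proof: (i) the minimizer $\tilde x^*$ of $\tilde F$ satisfies $\nrm{\tilde x^*}\le\nrm{x^*}\le B$ — compare $\tilde F(\tilde x^*)\le\tilde F(x^*)$ and use $F(x^*)\le F(\tilde x^*)$ — so with $x_0=0$ we have $\nrm{x_0-\tilde x^*}\le B$; and (ii) the elementary regularization inequality $F(x)-F^* \le \tilde F(x) - \min\tilde F + \frac{\mu}{2}\nrm{x^*}^2$ already recorded before Lemma~\ref{lem:ac_sgd}. Substituting $\mu = \frac{\sigma}{B\sqrt{MKR}}$ collapses every term: $\frac{\mu}{2}\nrm{x^*}^2 \le \frac{\mu B^2}{2} = \frac{\sigma B}{2\sqrt{MKR}}$, $\frac{\sigma^2}{\mu MKR} = \frac{\sigma B}{\sqrt{MKR}}$, and $\frac{(H+\mu)B^2}{R^2} \le \frac{HB^2}{R^2} + \frac{\sigma B}{R^2\sqrt{MKR}} \le \frac{HB^2}{R^2} + \frac{\sigma B}{\sqrt{MKR}}$, giving the stated rate with $\hat x$ the AC-SA output on $\tilde F$.

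The only genuinely delicate point is the bookkeeping in the convex case: the theorem writes the regularizer with the \emph{a priori} bound $B$ rather than the exact, unknown quantity $\nrm{x_0-x^*}=\nrm{x^*}$ that Lemma~\ref{lem:ac_sgd} tunes against, so one must verify (via facts (i) and (ii) above) that this slight suboptimality only affects constants rather than re-deriving an optimally tuned bound. Everything else is direct substitution. I would also remark, as in the theorem's footnote, that the convex guarantee is in terms of the uniform variance $\sigma^2$ rather than $\sigma_*^2$, because Lemmas~\ref{lem:ac_sgd_sc} and~\ref{lem:ac_sgd} require control of the oracle variance at every query point, not only at the optimum; the unaccelerated analysis in Appendix~\ref{app:mbsgd-upper-bound-proofs} avoided this by bounding the noise via co-coercivity around $x^*$, a step that does not transfer cleanly to existing accelerated-SGD analyses.
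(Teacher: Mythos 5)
Your proposal is correct and follows essentially the same route as the paper: compute that the round-averaged gradient is an unbiased estimate of $\nabla F$ with variance $\sigma^2/(MK)$, then plug $T=R$ into the Ghadimi--Lan rates (multi-stage AC-SA for the strongly convex case, and the regularization reduction for the weakly convex case). Your explicit re-derivation of the convex case from Lemma~\ref{lem:ac_sgd_sc} with the $B$-tuned regularizer $\frac{\sigma}{2B\sqrt{MKR}}\nrm{x}^2$ is in fact slightly more careful than the paper's one-line appeal to Lemma~\ref{lem:ac_sgd} (which is stated with the unknown quantity $\nrm{x_0-x^*}$ in the regularizer), but it is the same underlying argument and only affects constants.
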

\begin{proof}
In order to use the previous lemmas, first note that the stochastic gradient at time $t$ at point $x$ is given by $\frac{1}{MK}\sum_{m=1}^M\sum_{k=1}^K \nabla f(x;z^t_{m,k})$, where $z^t_{m,k}\sim^{i.i.d.} \mc{D}^m$ for all machines. Fortunately, its still an unbiased gradient estimate, i.e.,  $\E\sb{\frac{1}{MK}\sum_{m=1}^M\sum_{k=1}^K \nabla f(x;z^t_{m,k})} = \nabla F(x)$ since the iterates on each machine are sampled i.i.d. Its variance is given by,
\begin{align}
    &\E\nrm*{\frac{1}{MK}\sum_{m=1}^M\sum_{k=1}^K \nabla f(x;z^t_{m,k}) - \nabla F(x)}^2 \\
    &= \frac{1}{M^2K^2}\sum_{m=1}^M\sum_{k=1}^K\E\nrm*{\nabla f(x;z^t_{m,k})-\nabla F_m(x)}^2\\
    &\leq \frac{1}{M^2K^2}\sum_{m=1}^M\sum_{k=1}^K\sigma^2\\
    &= \frac{\sigma^2}{MK}
\end{align}
Plugging this into Lemmas \ref{lem:ac_sgd} and \ref{lem:mac_sgd} completes the proof.
\end{proof}

\section{Proof of Theorem
\ref{thm:local-sgd-lower-bound}}\label{app:local-sgd-lower-bound}
Consider the following function $F:\R^4\rightarrow\R$:
\begin{equation}
F(x) = \frac{1}{2}\prn*{F_1(x) + F_2(x)} = \frac{1}{2}\prn*{\E_{z^1\sim\mc{D}^1}f(x;z^1) + \E_{z^2\sim\mc{D}^2}f(x;z^2)}
\end{equation}
The distribution $z^1 \sim \mc{D}^1$ is described by $z^1 = (1, z)$ for $z \sim \mc{N}(0,\sigma^2)$. Similarly, $z^2 \sim \mc{D}^2$ is specified by $z^2 = (2, z)$ for $z \sim \mc{N}(0,\sigma^2)$. The lower bound construction will be based on just two functions. For $M > 2$ machines, we simply assign the first $\lfloor M/2 \rfloor$ machines $F_1$ and the next $\lfloor M/2 \rfloor$ machines $F_2$. This diminishes the lower bound by at most a $(M-1)/M$ factor. Therefore, we continue with the case $M=2$.

Following \citet{woodworth2020local}, we define the local functions $F_1$ and $F_2$ via the auxiliary function
\begin{gather}
g(x_1,x_2,x_3,z) = \frac{\mu}{2}\prn*{x_1 - c}^2 + \frac{H}{2}\prn*{x_2-\frac{\sqrt{\mu} c}{\sqrt{H}}}^2 + \frac{H}{8}\prn*{x_3^2 + \pp{x_3}^2} + z^\top x_3 \\
G(x_1,x_2,x_3) = \E_z g(x_1,x_2,x_3,z)
\end{gather}
where $c > 0$ and $\mu \in \brk*{\lambda, \frac{H}{16}}$ are parameters to be determined later, and where $\pp{x} := \max\crl{x,0}$.
Then, we define
\begin{gather}
f(x;(1,z)) = g(x_1,x_2,x_3,z)  + \frac{Lx_4^2}{2} + \sdiff x_4  \\
f(x;(2,z)) = g(x_1,x_2,x_3,z)  + \frac{\lambda x_4^2}{2} - \sdiff x_4 
\end{gather}
for a parameter $L \in [\lambda, H]$ to be determined later. Therefore,
\begin{gather}
F_1(x) = \E_{z^1\sim\mc{D}^1}f(x;z^1) = G(x_1,x_2,x_3)  + \frac{Lx_4^2}{2} + \sdiff x_4 \\
F_2(x) = \E_{z^2\sim\mc{D}^2}f(x;z^2) = G(x_1,x_2,x_3)  + \frac{\lambda x_4^2}{2} - \sdiff x_4 
\end{gather}
It is clear from inspection that both $F_1$ and $F_2$, and consequently $F$, are $H$-smooth and $\lambda$-strongly convex. Furthermore, the variance of the gradients is bounded by $\sigma^2$ for both $\mc{D}^1$ and $\mc{D}^2$.

It is clear that $G$ attains its minimum of zero at $\brk*{c,\frac{\sqrt{\mu} c}{\sqrt{H}},0}$ so $\nabla G\prn*{c,\frac{\sqrt{\mu} c}{\sqrt{H}},0} = 0$, and thus
\begin{equation}
\nabla F\prn*{c,\frac{\sqrt{\mu} c}{\sqrt{H}},0,0} = \nabla G\prn*{c,\frac{\sqrt{\mu} c}{\sqrt{H}},0} + \prn*{\frac{\sdiff}{2} - \frac{\sdiff}{2}}e_4 = 0
\end{equation}
From now on, we use $x^* = \brk*{c,\frac{\sqrt{\mu} c}{\sqrt{H}},0,0}$ to denote the minimizer of $F$, which has norm
\begin{equation}
\nrm{x^*}^2 = \prn*{1 + \frac{\mu}{H}}c^2 \leq 2c^2
\end{equation}
We can therefore ensure $\nrm{x^*}^2 \leq B^2$ by choosing $c^2 \leq \frac{B^2}{2}$.
Furthermore, the initial suboptimality
\begin{align}
F(0,0,0,0) - F^* = \mu c^2
\end{align}
Therefore, we can ensure $F(0,0,0,0) - F^* \leq \Delta$ by choosing $c^2 \leq \frac{\Delta}{\mu}$.
We conclude by showing that for this objective, $\sdiff^2$ bounded by
\begin{align}
\frac{1}{2}\sum_{m=1}^2 \nrm*{\nabla F_m(x^*)}^2
= \nrm*{\nabla F_2(x^*)}^2 
= \nrm*{\nabla F_1(x^*)}^2 
= \sdiff^2
\end{align}
Therefore, this objective has the desired level of heterogeneity.

Therefore, we have shown that the objective satisfies all of the necessary conditions for the lower bound. All that remains is to lower bound the error of Local SGD with a constant stepsize $\eta$ applied to this function.

\begin{lemma}\label{lem:fourth-coordinate}
For $\mu \leq 2L$, then Local SGD with any constant stepsize $\eta \leq \frac{1}{L}$ applied to $F_1$ and $F_2$ after being initialized at zero results in $\hat{x}_4$ such that
\[
\frac{(L+\mu)\hat{x}_4^2}{4} \geq 
\frac{\sdiff^2(L+\mu)}{16\mu^2}\prn*{\frac{L-\mu}{L} - \prn*{1-\mu\eta}^{K}}^2\indicatorb{\eta \leq \frac{1}{L}}\indicatorb{\prn*{1-\mu\eta}^{K} \leq \frac{L-\mu}{L}}
\] 
\end{lemma}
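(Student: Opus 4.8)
The plan is to exploit the fact that in this construction the fourth coordinate decouples completely from the other three and from the stochastic noise. The $x_4$-part of $f(\cdot;(1,z))$ is $\tfrac{L}{2}x_4^2 + \sdiff x_4$ and of $f(\cdot;(2,z))$ is $\tfrac{\mu}{2}x_4^2 - \sdiff x_4$ (reading the coefficient on the second machine as $\mu$), neither of which involves $z$ or $x_1,x_2,x_3$; hence the sequence of fourth coordinates produced by Local SGD is \emph{deterministic} and evolves on its own. First I would write the within-round dynamics: if a round starts from the consensus value $s$, then after $K$ local steps with stepsize $\eta$ machine $1$ is at $a_K = -\tfrac{\sdiff}{L} + \alpha\prn*{s + \tfrac{\sdiff}{L}}$ with $\alpha := (1-\eta L)^K$ (the fixed point of $a \mapsto (1-\eta L)a - \eta\sdiff$ being $-\sdiff/L$), and machine $2$ is at $b_K = \tfrac{\sdiff}{\mu} + \beta\prn*{s - \tfrac{\sdiff}{\mu}}$ with $\beta := (1-\eta\mu)^K$. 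Averaging, the consensus value obeys the affine recursion
\[
x_4^{(r+1)} = p\,x_4^{(r)} + q, \qquad p := \tfrac{\alpha+\beta}{2}, \qquad q := \tfrac{\sdiff}{2}\prn*{\tfrac{1-\beta}{\mu} - \tfrac{1-\alpha}{L}},
\]
initialized at $x_4^{(0)} = 0$; the hypotheses $\eta \le \tfrac1L$ and $\mu \le 2L$ give $\alpha \in [0,1]$ and $|\beta| \le 1$, keeping the recursion controlled.

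Next I would solve the recursion and lower bound $q$. From $x_4^{(0)}=0$ one gets $x_4^{(r)} = q\,(1 + p + \dots + p^{r-1})$, and since $\alpha \ge 0$ forces $\tfrac{1-\alpha}{L} \le \tfrac1L$,
\[
q \;\ge\; \tfrac{\sdiff}{2}\prn*{\tfrac{1-\beta}{\mu} - \tfrac1L} \;=\; \tfrac{\sdiff}{2\mu}\prn*{\tfrac{L-\mu}{L} - (1-\mu\eta)^K}.
\]
On the event $\crl*{(1-\mu\eta)^K \le \tfrac{L-\mu}{L}}$ this makes $q \ge 0$, so every consensus iterate $x_4^{(r)}$ with $r \ge 1$ is at least $q$ (the partial geometric sum is $\ge 1$ when $p \ge 0$); therefore the averaged output $\hat x_4$, being a convex combination of these iterates, satisfies $\hat x_4 \ge q$. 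Squaring and multiplying by $\tfrac{L+\mu}{4}$ gives
\[
\tfrac{(L+\mu)\hat x_4^2}{4} \;\ge\; \tfrac{(L+\mu)\sdiff^2}{16\mu^2}\prn*{\tfrac{L-\mu}{L} - (1-\mu\eta)^K}^2,
\]
and attaching the two indicator factors ($\indicatorb{\eta \le 1/L}$ holds by hypothesis, and when $\indicatorb{(1-\mu\eta)^K \le (L-\mu)/L}$ fails the right-hand side is zero) reproduces the claimed inequality.

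Everything above is routine once the decoupling is recognized; the step I expect to need the most care is the passage ``$\hat x_4 \ge q$''. It requires the consensus fourth-coordinates to be (at least) all $\ge q$, which in turn needs $p \ge 0$, i.e.\ $\beta \ge -\alpha$. This is automatic when $\eta\mu \le 1$ (then $\beta \ge 0$), which I expect to be the operative regime; in the leftover range $1 < \eta\mu \le 2$ one can either note that $x_4^{(1)} = q$ already and that the later-iterate-weighted averaging used to define the output does not undershoot $q$, or simply check that this regime is not needed for the parameter choices made in the proof of Theorem~\ref{thm:local-sgd-lower-bound}. I would also make explicit that the remaining coordinates $x_1,x_2,x_3$ are irrelevant to this lemma — the auxiliary term $G$ and the noise $z$ touch only those — so the scalar deterministic recursion above really is the entire content.
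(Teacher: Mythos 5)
Your proposal is correct and follows essentially the same route as the paper's proof: both exploit the decoupling of the fourth coordinate, reduce Local SGD to the scalar per-round affine recursion for the consensus value with drift term $q \ge \frac{\sdiff}{2\mu}\prn*{\frac{L-\mu}{L} - (1-\mu\eta)^K}$, argue the consensus iterates stay at or above this level, and square. The one subtlety you flag (needing $p\ge 0$, i.e.\ $(1-\mu\eta)^K\ge -\alpha$, when $\eta\mu>1$ and $K$ is odd) is implicitly assumed in the paper's induction as well—it drops the term $(1-\mu\eta)^K x_{0,r}/2$ as if nonnegative—and is immaterial for the parameter choices in Theorem~\ref{thm:local-sgd-lower-bound}, where $\mu\eta\le 1/8$.
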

\begin{proof}
Since the coordinates of $F_1$ and $F_2$ are completely decoupled, the behavior of the fourth coordinate of the iterates can be analyzed separately from the others. 

Let $x_{k,r}^{(1)}$ denote the fourth coordinate of machine 1's iterate at the $k$th iteration of round $r$, and similarly for $x_{k,r}^{(2)}$. The local SGD dynamics give
\begin{gather}
x_{k+1,r}^{(1)} = x_{k,r}^{(1)} - \eta\prn*{L x_{k,r}^{(1)} + \sdiff} = - \frac{\sdiff}{L} + (1-L\eta)\prn*{x_{k,r}^{(1)} + \frac{\sdiff}{L}} \\
x_{k,r}^{(2)} = x_{k,r}^{(2)} - \eta\prn*{-\sdiff + \mu x_{k,r}^{(2)}} = \frac{\sdiff}{\mu} + \prn*{1-\mu\eta}\prn*{x_{k,r}^{(2)} - \frac{\sdiff}{\mu}}
\end{gather}
and $\hat{x}_4 = \frac{1}{2}\prn*{x_{K,R}^{(1)} +  x_{K,R}^{(2)}} = x_{0,R+1}$.
Unravelling this recursion, we have that 
\begin{equation}
x_{0,r+1} = x_{0,r+1}^{(1)} = x_{0,r+1}^{(2)} = \frac{1}{2}\prn*{\frac{\sdiff}{\mu} - \frac{\sdiff}{L} + \prn*{1-\mu\eta}^K\prn*{x_{0,r} - \frac{\sdiff}{\mu}} + \prn*{1-L\eta}^K \prn*{x_{0,r} + \frac{\sdiff}{L}}}
\end{equation}
Furthermore, if $\eta \leq \frac{1}{L}$ then $(1-L\eta) \geq 0$, so if $x_{0,r} \geq 0$ then
\begin{equation}
x_{0,r+1} \geq \frac{\sdiff}{2\mu} - \frac{\sdiff}{2L} + \prn*{1-\mu\eta}^K\prn*{\frac{x_{0,r}}{2} - \frac{\sdiff}{2\mu}} \geq \frac{\sdiff}{2\mu}\prn*{\frac{L-\mu}{L} - \prn*{1-\mu\eta}^{K}}
\end{equation}
Finally, since $x_{0,0} = 0 \geq 0$, the condition $x_{0,r} \geq 0$ will hold throughout optimization, so
\begin{equation}
\hat{x}_4 \geq \frac{\sdiff}{2\mu}\prn*{\frac{L-\mu}{L} - \prn*{1-\mu\eta}^{K}}
\end{equation}
Therefore, if $\eta \leq \frac{1}{L}$ and $\prn*{1-\mu\eta}^{K} \leq \frac{L-\mu}{L}$ then
\begin{align}
\frac{(L+\mu)\hat{x}_4^2}{4}
&\geq \frac{\sdiff^2(L+\mu)}{16\mu^2}\prn*{\frac{L-\mu}{L} - \prn*{1-\mu\eta}^{K}}^2 \label{eq:fourth-coord-lower-bound}
\end{align}
This completes the proof.
\end{proof}

We now prove the theorem:
\localSGDlowerbound*
\begin{proof}
Since the four different coordinates are completely decoupled from each other, it suffices to analyze each coordinate separately. 

In the course of proving \cite[Theorem 3][]{woodworth2020local}, \citeauthor{woodworth2020local} prove that 
\begin{multline}
\E G(\hat{x}_1, \hat{x}_2, \hat{x}_3) - G\prn*{c,\frac{\sqrt{\mu}c}{\sqrt{H}},0} \\
\geq 
\frac{\mu c^2\prn*{1-\mu\eta}^{KR}}{2} + \frac{\mu c^2}{2}\indicatorb{\eta > \frac{2}{H}} + \frac{H\eta^2\sigma^2}{18432}\indicatorb{\eta \leq \frac{2}{H}}\indicatorb{\eta \geq \frac{8}{HKR}}
\end{multline}

Furthermore, by Lemma \ref{lem:fourth-coordinate}
\begin{equation}
\frac{(L+\lambda)\hat{x}_4^2}{4} \geq 
\frac{\sdiff^2(L+\mu)}{16\mu^2}\prn*{\frac{L-\mu}{L} - \prn*{1-\mu\eta}^{K}}^2\indicatorb{\eta \leq \frac{1}{L}}\indicatorb{\prn*{1-\mu\eta}^{K} \leq \frac{L-\mu}{L}}
\end{equation}
Therefore, choosing $L = \frac{H}{2}$
\begin{align}
\E F(\hat{x}) - F^* 
&= \E G(\hat{x}_1, \hat{x}_2, \hat{x}_3) - G\prn*{c,\frac{\sqrt{\mu}c}{\sqrt{H}},0} + \frac{H+2\lambda}{8}\hat{x}_4^2 \\
&\geq \frac{\mu c^2\prn*{1-\mu\eta}^{KR}}{2} + \frac{\mu c^2}{2}\indicatorb{\eta > \frac{2}{H}} + \frac{H\eta^2\sigma^2}{18432}\indicatorb{\eta \leq \frac{2}{H}}\indicatorb{\eta \geq \frac{8}{HKR}} \nonumber\\
&\qquad+ \frac{\sdiff^2(H+2\mu)}{32\mu^2}\prn*{\frac{H-2\mu}{H} - \prn*{1-\mu\eta}^{K}}^2\indicatorb{\eta \leq \frac{2}{H}}\indicatorb{\prn*{1-\mu\eta}^{K} \leq \frac{H-2\mu}{H}}\label{eq:raw-lower-bound}
\end{align}

\subsection*{Stochastic terms}
First, we will show a lower bound in terms of $\sigma^2$ using solely the first three terms of \eqref{eq:raw-lower-bound}. Consider three cases:

\paragraph{Case 1 $\eta \geq \frac{2}{H}$:}
In this case, from the second term of \eqref{eq:raw-lower-bound} we see that 
\begin{equation}
\E F(\hat{x}) - F^* \geq \frac{\mu c^2}{2}
\end{equation}

\paragraph{Case 2 $\frac{1}{2\mu KR} \leq \eta \leq \frac{2}{H}$:}
In this case, the third term of \eqref{eq:raw-lower-bound} shows
\begin{align}
\E F(\hat{x}) - F^*
\geq  \frac{H\eta^2\sigma^2}{18432}
\end{align}
where we recalled that $\mu \leq \frac{H}{16}$, so $\eta \geq \frac{1}{2\mu KR} \geq \frac{8}{HKR}$. This is non-decreasing in $\eta$, so for any $\eta$
\begin{align}
\E F(\hat{x}) - F^* 
\geq \frac{H\sigma^2}{73728 \mu^2 K^2R^2}
\end{align}

\paragraph{Case 3 $\eta \leq \frac{2}{H}$ and $\eta \leq \frac{1}{2\mu KR}$:}
In this case, from the first term of \eqref{eq:raw-lower-bound},
\begin{align}
\E F(\hat{x}) - F^* 
\geq \frac{\mu c^2\prn*{1-\mu\eta}^{KR}}{2} 
\geq \frac{\mu c^2\prn*{1-\frac{1}{2KR}}^{KR}}{2} 
\geq \frac{\mu c^2}{4}
\end{align}

\paragraph{Combination:}
Combining these three cases, we conclude that for any $\eta$
\begin{align}
\E F(\hat{x}) - F^* 
\geq \min\crl*{\frac{\mu c^2}{2},\, \frac{H\sigma^2}{73728 \mu^2 K^2R^2},\, \frac{\mu c^2}{4}}
= \min\crl*{\frac{\mu c^2}{3},\, \frac{H\sigma^2}{73728 \mu^2 K^2R^2}}
\end{align}
This lower bound holds for any stepsize, and any $\mu \in \brk*{\lambda, \frac{H}{16}}$ and regardless of $\sdiff$. In the strongly convex case, we recall that $F(0) - F(x^*) = \mu c^2$, therefore, we choose $\mu = \lambda$, and $c^2 = \frac{\Delta}{\lambda}$ so the lower bound reads (for a universal constant $\beta$)
\begin{align}
\E F(\hat{x}) - F^* 
\geq \beta\cdot \min\crl*{\Delta,\, \frac{H\sigma^2}{\lambda^2 K^2R^2}}
\end{align}
To conclude, it is well known that any first-order method which accesses at most $MKR$ stochastic gradients with variance $\sigma^2$ for a $\lambda$-strongly convex objective will suffer error at least $\beta\frac{\sigma^2}{\lambda MKR}$ in the worst case \cite{nemirovskyyudin1983}.
Therefore, the strongly convex lower bound is
\begin{align}
\E F(\hat{x}) - F^* 
\geq \beta\cdot \min\crl*{\Delta,\, \frac{H\sigma^2}{\lambda^2 K^2R^2}} + \beta\cdot\frac{\sigma^2}{\lambda MKR}
\end{align}

In the convex case, we recall that $\nrm*{x^*}^2 \leq 2c^2$, so we choose $c^2 = \frac{B^2}{2}$, and set $\mu = \prn*{\frac{H\sigma^2}{B^2K^2R^2}}^{1/3}$ so the lower bound reads
\begin{align}
\E F(\hat{x}) - F^* 
\geq \beta\cdot \frac{\prn*{H\sigma^2B^4}}{K^{2/3}R^{2/3}}
\end{align}
To conclude, it is well known that any first-order method which accesses at most $MKR$ stochastic gradients with variance $\sigma^2$ for a convex objective with $\nrm{x^*}\leq B$ will suffer error at least $\beta\frac{\sigma B}{\sqrt{MKR}}$ in the worst case \cite{nemirovskyyudin1983}. Therefore, the convex lower bound is
\begin{align}
\E F(\hat{x}) - F^* 
\geq \beta\cdot \frac{\prn*{H\sigma^2B^4}}{K^{2/3}R^{2/3}} + \beta\cdot \frac{\sigma B}{\sqrt{MKR}}
\end{align}

\subsection*{Heterogeneity terms}
Next, we consider solely the first, second, and fourth terms of \eqref{eq:raw-lower-bound} in order to show a lower bound with respect to $\sdiff$. Again, we consider three cases:

\paragraph{Case 1 $\eta \geq \frac{2}{H}$:}
Again, in this case, from the second term of \eqref{eq:raw-lower-bound} we see that 
\begin{equation}
\E F(\hat{x}) - F^* \geq \frac{\mu c^2}{2} 
\end{equation}

\paragraph{Case 2 $\eta \leq \frac{2}{H}$ and $\prn*{1-\mu\eta}^{K} > \frac{H-2\mu}{H}$:}
In this case, from the first term of \eqref{eq:raw-lower-bound}, we have
\begin{align}
\E F(\hat{x}) - F^* 
&\geq \frac{\mu c^2\prn*{1-\mu\eta}^{KR}}{2} \\
&\geq \frac{\mu c^2}{2}\prn*{1-\frac{2\mu}{H}}^R \\
&\geq \frac{\mu c^2}{2}\prn*{\prn*{1-\frac{4\mu}{H}\prn*{1-\frac{1}{e}}}^{\frac{H}{4\mu}}}^{\frac{4\mu R}{H}} \\
&\geq \frac{\mu c^2}{2}\exp\prn*{-\frac{4\mu R}{H}}
\end{align}

\paragraph{Case 3 $\eta \leq \frac{2}{H}$ and $\prn*{1-\mu\eta}^{K} \leq \frac{H-2\mu}{H}$:}
In this case, from the first and fourth terms of \eqref{eq:raw-lower-bound}, we have
\begin{align}
\E F(\hat{x}) - F^* 
\geq \frac{\mu c^2}{2}\prn*{1-\mu\eta}^{KR} + \frac{\sdiff^2(H+2\mu)}{32\mu^2}\prn*{\frac{H-2\mu}{H} - \prn*{1-\mu\eta}^{K}}^2
\end{align}
Suppose that $\prn*{1-\mu\eta}^{K} \geq \frac{H-2\mu}{H} - \frac{1}{4R}$, then 
\begin{align}
\frac{\mu c^2}{2}\prn*{1-\mu\eta}^{KR}
\geq \frac{\mu c^2}{2}\prn*{1-\frac{2\mu}{H} - \frac{1}{4R}}^R
\end{align}
Then, if $R \geq \frac{H}{4\mu}$, then
\begin{align}
\frac{\mu c^2}{2}\prn*{1-\mu\eta}^{KR}
\geq \frac{\mu c^2}{2}\prn*{1-\frac{3\mu}{H}}^R 
\geq \frac{\mu c^2}{2}\prn*{\prn*{1-\frac{6\mu}{H}\prn*{1-\frac{1}{e}}}^{\frac{H}{6\mu}}}^{\frac{6\mu R}{H}} 
\geq \frac{\mu c^2}{2}\exp\prn*{-\frac{6\mu R}{H}}
\end{align}
Otherwise, if $R \leq \frac{H}{4\mu}$, then
\begin{align}
\frac{\mu c^2}{2}\prn*{1-\mu\eta}^{KR}
\geq \frac{\mu c^2}{2}\prn*{1-\frac{1}{2R}}^R \geq \frac{\mu c^2}{4} \geq \frac{\mu c^2}{4}\exp\prn*{-\frac{6\mu R}{H}}
\end{align}
Therefore, when $\prn*{1-\mu\eta}^{K} \geq \frac{H-2\mu}{H} - \frac{1}{4R}$,
\begin{equation}
\E F(\hat{x}) - F^* \geq \frac{\mu c^2}{4}\exp\prn*{-\frac{6\mu R}{H}}
\end{equation}

On the other hand, if $\prn*{1-\mu\eta}^{K} \leq \frac{H-2\mu}{H} - \frac{1}{4R}$, then
\begin{align}
\E F(\hat{x}) - F^* 
&\geq \frac{\sdiff^2(H+2\mu)}{32\mu^2}\prn*{\frac{H-2\mu}{H} - \prn*{1-\mu\eta}^{K}}^2 \\
&\geq \frac{\sdiff^2(H+2\mu)}{32\mu^2}\prn*{\frac{1}{4R}}^2 \\
&\geq\frac{H\sdiff^2}{512\mu^2R^2}
\end{align}

\paragraph{Combination:}
Combining these three cases, we conclude that
\begin{align}
\E F(\hat{x}) - F^* 
&\geq \min\crl*{\frac{\mu c^2}{4}\exp\prn*{-\frac{6\mu R}{H}},\, \frac{H\sdiff^2}{512\mu^2R^2}}
\end{align}

In the strongly convex case, we recall that $F(0) - F(x^*) = \mu c^2$, so we choose $\mu = \lambda$ and $c^2 = \frac{\Delta}{\lambda}$ so that the objective satisfies the strongly convex assumptions. Now, the lower bound reads (for a universal constant $\beta$)
\begin{align}
\E F(\hat{x}) - F^* 
&\geq \beta\cdot\min\crl*{\Delta\exp\prn*{-\frac{6\lambda R}{H}},\, \frac{H\sdiff^2}{512\lambda^2R^2}}
\end{align}

In the convex case, we recall that $\nrm{x^*}^2 \leq 2c^2$, so we choose $c^2 = \frac{B}{2}$ so that the convex assumptions are satisfied. We now have two options, if $R \leq \frac{H^2B^2}{\sdiff^2}$, then we pick $\mu = \prn*{\frac{H\sdiff^2}{B^2R^2}}^{1/3}$ so that the lower bound reads
\begin{align}
\E F(\hat{x}) - F^* 
&\geq \beta\cdot \frac{\prn*{H\sdiff^2B^4}^{1/3}}{R^{2/3}}\exp\prn*{-\frac{6\sdiff^{2/3}R^{1/3}}{H^{2/3}B^{2/3}}} \\
&\geq \beta\cdot \frac{\prn*{H\sdiff^2B^4}^{1/3}}{R^{2/3}}\exp\prn*{-6} \\
&\geq \beta'\cdot \frac{\prn*{H\sdiff^2B^4}^{1/3}}{R^{2/3}}
\end{align}
On the other hand, if $R \geq \frac{H^2B^2}{\sdiff^2}$, then we pick $\mu = \frac{H}{6R}$ so the lower bound reads
\begin{align}
\E F(\hat{x}) - F^* 
\geq \beta\cdot\min\crl*{\frac{HB^2}{R},\, \frac{\sdiff^2}{H}} 
= \beta\cdot\frac{HB^2}{R}
\end{align}
Consequently, 
\begin{align}
\E F(\hat{x}) - F^* 
\geq \beta\cdot\min\crl*{\frac{HB^2}{R},\, \frac{\prn*{H\sdiff^2 B^4}^{1/3}}{R^{2/3}}}
\end{align}
Combining these with the stochastic terms completes the proof.
\end{proof}

\section{Proof of Theorem \ref{thm:local-sgd-uppper-bound}}\label{app:zeta-everywhere-upper-bound}
We prove the theorem with the help of several technical lemmas.
\begin{lemma}\label{lem:zeta-everywhere-progress}
For any stepsize $\eta_t \leq \frac{1}{10H}$
\[
\E\brk*{F(\bx_t) - F^*} \leq \prn*{\frac{1}{\eta_t} - \lambda}\E\nrm*{\bx_t - x^*}^2 - \frac{1}{\eta_t}\E\nrm*{\bx_{t+1} - x^*}^2 + \frac{3\sigma_*^2\eta_t}{M} + \frac{2H}{M}\sum_{m=1}^M\E\nrm*{\bx_t - x_t^m}^2
\]
\end{lemma}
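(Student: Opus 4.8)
The plan is to run a ``virtual iterate'' (perturbed-iterate) analysis on the averaged sequence $\bx_t := \frac1M\sum_{m=1}^M x_t^m$. Whether or not step $t$ ends a round, averaging preserves the mean, so $\bx_{t+1} = \bx_t - \eta_t\bgt$ with $\bgt := \frac1M\sum_m\nabla f(x_t^m;z_t^m)$ and conditional mean $\E[\bgt\mid\mc{F}_t] = \bar\nabla_t := \frac1M\sum_m\nabla F_m(x_t^m)$, where $\mc{F}_t$ collects everything through step $t$. The first step is the standard expansion
\[
\E[\nrm*{\bx_{t+1}-x^*}^2\mid\mc{F}_t] = \nrm*{\bx_t-x^*}^2 - 2\eta_t\inner{\bar\nabla_t}{\bx_t-x^*} + \eta_t^2\,\E[\nrm*{\bgt}^2\mid\mc{F}_t],
\]
and everything reduces to a good lower bound on the inner product and a good upper bound on the second moment.

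For the inner product I would, for each $m$, split $\bx_t-x^* = (\bx_t-x_t^m)+(x_t^m-x^*)$, apply $H$-smoothness of $F_m$ to $\inner{\nabla F_m(x_t^m)}{\bx_t-x_t^m}$ (giving $\ge F_m(\bx_t)-F_m(x_t^m)-\tfrac{H}2\nrm*{\bx_t-x_t^m}^2$) and $\lambda$-strong convexity of $F_m$ to $\inner{\nabla F_m(x_t^m)}{x_t^m-x^*}$ (giving $\ge F_m(x_t^m)-F_m(x^*)+\tfrac\lambda2\nrm*{x_t^m-x^*}^2$). The $F_m(x_t^m)$ cancel upon averaging, and Jensen gives $\frac1M\sum_m\nrm*{x_t^m-x^*}^2\ge\nrm*{\bx_t-x^*}^2$, so $\inner{\bar\nabla_t}{\bx_t-x^*}\ge (F(\bx_t)-F^*)+\tfrac\lambda2\nrm*{\bx_t-x^*}^2-\tfrac H{2M}\sum_m\nrm*{\bx_t-x_t^m}^2$. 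For the second moment I would write $\E[\nrm*{\bgt}^2\mid\mc{F}_t]=\nrm*{\bar\nabla_t}^2+\frac1{M^2}\sum_m\E[\nrm*{\nabla f(x_t^m;z_t^m)-\nabla F_m(x_t^m)}^2\mid\mc{F}_t]$ (cross terms vanish by independence across machines). I would bound $\nrm*{\bar\nabla_t}^2\le\tfrac{2H^2}{M}\sum_m\nrm*{x_t^m-\bx_t}^2 + 4H(F(\bx_t)-F^*)$ by writing $\bar\nabla_t = \frac1M\sum_m(\nabla F_m(x_t^m)-\nabla F_m(\bx_t))+\nabla F(\bx_t)$ and using smoothness of each $F_m$ on the deviation part and co-coercivity (Lemma~\ref{lem:co-coercivity}, with $\nabla F(x^*)=0$) on $\nrm*{\nabla F(\bx_t)}^2$. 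For the per-machine variance I would center at $x^*$ exactly as in the proof of Theorem~\ref{thm:mbsgd-upper-bound}: $\nabla f(x_t^m;z_t^m)-\nabla F_m(x_t^m)$ equals a centered term (of second moment at most $\E\nrm*{\nabla f(x_t^m;z_t^m)-\nabla f(x^*;z_t^m)}^2\le 2HD_m$ by co-coercivity of the stochastic gradient, where $D_m := F_m(x_t^m)-F_m(x^*)-\inner{\nabla F_m(x^*)}{x_t^m-x^*}$) plus $\nabla f(x^*;z_t^m)-\nabla F_m(x^*)$ (second moment $\le\sigma_*^2$); a Young split with weight $1/2$ then gives $\E[\nrm*{\nabla f(x_t^m;z_t^m)-\nabla F_m(x_t^m)}^2\mid\mc{F}_t]\le 3HD_m+3\sigma_*^2$, so the variance is $\le \frac{3H}{M}\cdot\frac1M\sum_m D_m+\frac{3\sigma_*^2}{M}$.

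The crux --- and the step I expect to be the main obstacle --- is bounding $\frac1M\sum_m D_m$ \emph{without} reintroducing $\nrm*{\bx_t-x^*}^2$: the naive $D_m\le\tfrac H2\nrm*{x_t^m-x^*}^2$ would inflate the coefficient of $\nrm*{\bx_t-x^*}^2$ beyond $\tfrac1{\eta_t}-\lambda$ and break the lemma. Instead I would use $\frac1M\sum_m\inner{\nabla F_m(x^*)}{x_t^m-x^*}=\frac1M\sum_m\inner{\nabla F_m(x^*)}{x_t^m-\bx_t}$ (since $\sum_m\nabla F_m(x^*)=0$), then $F_m(x_t^m)\le F_m(\bx_t)+\inner{\nabla F_m(\bx_t)}{x_t^m-\bx_t}+\tfrac H2\nrm*{x_t^m-\bx_t}^2$, to get $\frac1M\sum_m D_m\le (F(\bx_t)-F^*)+\frac1M\sum_m\inner{\nabla F_m(\bx_t)-\nabla F_m(x^*)}{x_t^m-\bx_t}+\tfrac H{2M}\sum_m\nrm*{x_t^m-\bx_t}^2$; a Young split of the middle term (weight $H$) together with $\frac1M\sum_m\nrm*{\nabla F_m(\bx_t)-\nabla F_m(x^*)}^2\le 2H(F(\bx_t)-F^*)$ (averaged co-coercivity, again using $\nabla F(x^*)=0$) yields $\frac1M\sum_m D_m\le 2(F(\bx_t)-F^*)+\tfrac HM\sum_m\nrm*{x_t^m-\bx_t}^2$. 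Substituting all bounds back, the $(F(\bx_t)-F^*)$ coefficient becomes $-(2\eta_t-10H\eta_t^2)$ and the deviation coefficient $(H\eta_t+5H^2\eta_t^2)/M$; the condition $\eta_t\le\tfrac1{10H}$ makes the former $\le-\eta_t$ and the latter $\le 2H\eta_t/M$, while the noise term is $3\sigma_*^2\eta_t^2/M$. Taking total expectations and dividing through by $\eta_t$ then gives exactly the claimed inequality. Besides the structural point above, the remaining delicate part is the constant-chasing that pins the stepsize threshold at $1/(10H)$; I would also double-check that co-coercivity is invoked for $f(\cdot;z)$ in the same (convex-$f$) sense already used in the proof of Theorem~\ref{thm:mbsgd-upper-bound}.
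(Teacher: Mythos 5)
Your proof is correct and is essentially the paper's argument (itself adapted from Koloskova et al.): the same recursion for $\E\nrm*{\bx_{t+1}-x^*}^2$, the same smoothness/strong-convexity/Jensen bound on $\inner{\bar\nabla_t}{\bx_t-x^*}$, the same bound $\nrm*{\bar\nabla_t}^2\le \frac{2H^2}{M}\sum_m\nrm*{x_t^m-\bx_t}^2+4H(F(\bx_t)-F^*)$, and the identical combined coefficients $10H\eta_t^2-2\eta_t$ and $(H\eta_t+5H^2\eta_t^2)/M$ that pin the threshold at $\eta_t\le 1/(10H)$. The only local difference is the per-machine stochastic variance: the paper splits it three ways through $\bx_t$ (a $H^2\nrm*{x_t^m-\bx_t}^2$ term, a co-coercivity term $2H\bigl(F_m(\bx_t)-F_m(x^*)-\inner{\nabla F_m(x^*)}{\bx_t-x^*}\bigr)$ whose inner products vanish upon averaging over $m$, and $\sigma_*^2$), which yields $F(\bx_t)-F^*$ plus deviations directly, so your separate ``crux'' bound $\frac{1}{M}\sum_m D_m\le 2(F(\bx_t)-F^*)+\frac{H}{M}\sum_m\nrm*{x_t^m-\bx_t}^2$ is correct but an avoidable detour, and your final caveat is fine since the paper's own proofs likewise invoke co-coercivity for $f(\cdot;z)$.
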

\begin{proof}
This lemma and its proof are nearly identical to \cite[Lemma 8][]{koloskova2020unified}. We include a proof here in order to keep the paper self-contained.

Let $\bx_{t+1} = \frac{1}{M}\sum_{m=1}^Mx_t^m$ be the average of the machines' local iterates at time $t$. Then,
\begin{align}
\E\nrm*{\bx_{t+1} - x^*}^2
&= \E\nrm*{\bx_t - \frac{\eta_t}{M}\sum_{m=1}^M\nabla F_m(x_t^m) - x^*}^2 + \eta_t^2\E\nrm*{\frac{1}{M}\sum_{m=1}^M\nabla f(x_t^m;z_t^m) - \nabla F_m(x_t^m)}^2 \label{eq:koloskova8-1}
\end{align}
Beginning with the first term of \eqref{eq:koloskova8-1}:
\begin{align}
\E&\nrm*{\bx_t - \frac{\eta_t}{M}\sum_{m=1}^M\nabla F_m(x_t^m) - x^*}^2\nonumber\\
&= \E\nrm*{\bx_t - x^*}^2 + \eta_t^2\E\nrm*{\frac{1}{M}\sum_{m=1}^M\nabla F_m(x_t^m)}^2 - \frac{2\eta_t}{M}\sum_{m=1}^M\E\inner{\bx_t - x^*}{\nabla F_m(x_t^m)} \label{eq:koloskova8-2}
\end{align}
We can bound the second term of \eqref{eq:koloskova8-2} with:
\begin{align}
\eta_t^2&\E\nrm*{\frac{1}{M}\sum_{m=1}^M\nabla F_m(x_t^m)}^2 \nonumber\\
&\leq 2\eta_t^2\E\nrm*{\frac{1}{M}\sum_{m=1}^M\nabla F_m(x_t^m) - \nabla F_m(\bx_t)}^2 + 2\eta_t^2\E\nrm*{\frac{1}{M}\sum_{m=1}^M\nabla F_m(\bx_t) - \nabla F_m(x^*)}^2 \\
&\leq \frac{2\eta_t^2}{M}\sum_{m=1}^M\E\nrm*{\nabla F_m(x_t^m) - \nabla F_m(\bx_t)}^2 + 2\eta_t^2\E\nrm*{\nabla F(\bx_t) - \nabla F(x^*)}^2 \\
&\leq \frac{2H^2\eta_t^2}{M}\sum_{m=1}^M\E\nrm*{x_t^m - \bx_t}^2 + 4H\eta_t^2\E\brk*{F(\bx_t) - F(x^*)}
\end{align}
For the third term of \eqref{eq:koloskova8-2}:
\begin{align}
-&\frac{2\eta_t}{M}\sum_{m=1}^M\E\inner{\bx_t - x^*}{\nabla F_m(x_t^m)} \nonumber\\
&= -\frac{2\eta_t}{M}\sum_{m=1}^M\E\inner{x_t^m - x^*}{\nabla F_m(x_t^m)} + \frac{2\eta_t}{M}\sum_{m=1}^M\E\inner{x_t^m - \bx_t}{\nabla F_m(x_t^m)} \\
&\leq -\frac{2\eta_t}{M}\sum_{m=1}^M\E\brk*{F_m(x_t^m) - F_m(x^*) + \frac{\lambda}{2}\nrm*{x_t^m - x^*}^2} \nonumber\\
&\qquad+ \frac{2\eta_t}{M}\sum_{m=1}^M\E\brk*{F_m(x_t^m) - F_m(\bx_t) + \frac{H}{2}\nrm*{x_t^m - \bx_t}^2} \\
&\leq -2\eta_t\E\brk*{F(\bx_t) - F(x^*) + \frac{\lambda}{2}\nrm*{\bx_t - x^*}^2} +  \frac{H\eta_t}{M}\sum_{m=1}^M\nrm*{x_t^m - \bx_t}^2
\end{align}
Finally, for the second term of \eqref{eq:koloskova8-1}
\begin{align}
\eta_t^2&\E\nrm*{\frac{1}{M}\sum_{m=1}^M\nabla f(x_t^m;z_t^m) - \nabla F_m(x_t^m)}^2\nonumber\\
&= \frac{\eta_t^2}{M^2}\sum_{m=1}^M\E\nrm*{\nabla f(x_t^m;z_t^m) - \nabla F_m(x_t^m)}^2 \\
&\leq \frac{3\eta_t^2}{M^2}\sum_{m=1}^M\bigg[\E\nrm*{\nabla f(x_t^m;z_t^m) - \nabla f(\bx_t;z_t^m)}^2 + \E\nrm*{\nabla f(\bx_t;z_t^m) - \nabla f(x^*;z_t^m)}^2 \nonumber\\
&\qquad\qquad+ \E\nrm*{\nabla f(x^*;z_t^m) - \nabla F_m(x^*)}^2\bigg] \\
&\leq \frac{3\eta_t^2}{M^2}\sum_{m=1}^M\brk*{H^2\E\nrm*{x_t^m - \bx_t}^2 + 2H\E\brk*{F_m(\bx_t) - F_m(x^*)} + \sigma_*^2} \\
&\leq \frac{3\eta_t^2}{M}\sum_{m=1}^M\brk*{H^2\E\nrm*{x_t^m - \bx_t}^2 + 2H\E\brk*{F(\bx_t) - F(x^*)} + \frac{\sigma_*^2}{M}}
\end{align}
Combining all these results back into \eqref{eq:koloskova8-1}, we have
\begin{align}
\E\nrm*{\bx_{t+1} - x^*}^2
&\leq \prn*{1-\lambda\eta_t}\E\nrm*{\bx_t - x^*}^2 + \frac{H\eta_t + 5H^2\eta_t^2}{M} \sum_{m=1}^M \E\nrm*{x_t^m - \bx_t}^2 \nonumber\\
&\quad+ (10H\eta_t^2 - 2\eta_t)\E\brk*{F(\bx_t) - F(x^*)} + \frac{3\eta_t^2\sigma_*^2}{M} \\
&\leq \prn*{1-\lambda\eta_t}\E\nrm*{\bx_t - x^*}^2 + \frac{2H\eta_t}{M} \sum_{m=1}^M \E\nrm*{x_t^m - \bx_t}^2 \nonumber\\
&\quad - \eta_t\E\brk*{F(\bx_t) - F(x^*)} + \frac{3\eta_t^2\sigma_*^2}{M}
\end{align}
where for the final line we used that $\eta_t \leq \frac{1}{10H}$. Rearranging completes the proof.
\end{proof}

\begin{lemma}\label{lem:zeta-everywhere-divergence}
If $\sup_{x,m}\nrm*{\nabla F_m(x) - \nabla F(x)}^2 \leq \bar{\zeta}^2$, then for any fixed stepsize $\eta$
\[
\frac{1}{M}\sum_{m=1}^M \E\nrm*{x_t^m - \bx_t}^2 \leq 3K\sigma^2\eta^2 + 6K^2\eta^2\bar{\zeta}^2
\]
Similarly, the decreasing stepsize $\eta_t = \frac{2}{\lambda(a + t + 1)}$ for any $a$
\[
\frac{1}{M}\sum_{m=1}^M \E\nrm*{x_t^m - \bx_t}^2 \leq 3K\sigma^2\eta_{t-1}^2 + 6K^2\bar{\zeta}^2\eta_{t-1}^2
\]
\end{lemma}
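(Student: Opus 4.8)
The plan is to argue round by round: at the start of each round every machine holds the common iterate $\bx_{t_c}$ (where $t_c$ is the time of the last communication), so the quantity $\frac1M\sum_m\E\|x^m-\bx\|^2$ is zero there, and it then suffices to bound how far the $M$ local trajectories drift apart over the $\le K$ local steps of a round. Reindex the local iterates of one round as $x_k^m$, $k=0,\dots,K$, with $x_0^m=\bx_{t_c}$ common, and set $\Psi_k:=\frac1M\sum_m\E\|x_k^m-\bx_k\|^2$ (so $\Psi_0=0$ and we must bound $\Psi_k$ for all $k\le K$). Unrolling \eqref{eq:localsgd-updates} gives $x_k^m-\bx_k=-\sum_{i=0}^{k-1}\eta\big(\nabla f(x_i^m;z_i^m)-\tfrac1M\sum_n\nabla f(x_i^n;z_i^n)\big)$, and since $\frac1M\sum_m\|a^m-\bar a\|^2\le\frac1M\sum_m\|a^m\|^2$ for the mean $\bar a$, it is enough to bound $\frac1M\sum_m\E\big\|\sum_{i=0}^{k-1}\eta q_i^m\big\|^2$ for any $q_i^m$ that differs from $\nabla f(x_i^m;z_i^m)$ by a term common to all machines.

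The key step is the decomposition $\nabla f(x_i^m;z_i^m)=\xi_i^m+\rho_i^m+\nabla F(\bx_i)$, where $\xi_i^m:=\nabla f(x_i^m;z_i^m)-\nabla F_m(x_i^m)$ and $\rho_i^m:=\nabla F_m(x_i^m)-\nabla F(\bx_i)$. The crucial point is that the third term $\nabla F(\bx_i)$ involves only the machine-independent average iterate $\bx_i$, hence is common across machines and is annihilated by the mean subtraction above; this is precisely what prevents any $\|\nabla F(\bx_i)\|^2$ term (which the hypotheses cannot control) from appearing. Taking $q_i^m=\xi_i^m+\rho_i^m$, I would then bound the two pieces separately: $\{\xi_i^m\}_i$ is a martingale-difference sequence with $\E\|\xi_i^m\|^2\le\sigma^2$, so by orthogonality of the increments $\frac1M\sum_m\E\|\sum_i\eta\xi_i^m\|^2\le k\eta^2\sigma^2$; and, using $H$-smoothness of $F_m$ together with \eqref{eq:def-zetabar}, $\|\rho_i^m\|^2\le 2\|\nabla F_m(x_i^m)-\nabla F_m(\bx_i)\|^2+2\|\nabla F_m(\bx_i)-\nabla F(\bx_i)\|^2\le 2H^2\|x_i^m-\bx_i\|^2+2\bar\zeta^2$, so by Cauchy--Schwarz $\frac1M\sum_m\E\|\sum_i\eta\rho_i^m\|^2\le k\eta^2\sum_{i<k}(2H^2\Psi_i+2\bar\zeta^2)$. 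With $\|a+b\|^2\le2\|a\|^2+2\|b\|^2$ and $k\le K$ this yields the recursion $\Psi_k\le 2K\eta^2\sigma^2+4K^2\eta^2\bar\zeta^2+4KH^2\eta^2\sum_{i=0}^{k-1}\Psi_i$.

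I would close this by induction on $k$: supposing $\Psi_i\le 3K\eta^2\sigma^2+6K^2\eta^2\bar\zeta^2=:C$ for all $i<k$ gives $\sum_{i<k}\Psi_i\le KC$, and the recursion then gives $\Psi_k\le 2K\eta^2\sigma^2+4K^2\eta^2\bar\zeta^2+4K^2H^2\eta^2C\le C$ as long as $4K^2H^2\eta^2\le\tfrac13$, i.e.\ under the small-stepsize restriction $\eta\lesssim 1/(HK)$ that is in force for this theorem anyway. For the decreasing schedule $\eta_t=\tfrac{2}{\lambda(a+t+1)}$ the argument is unchanged except that the in-round stepsizes $\eta_i$ are not equal; since $a$ is chosen $\gtrsim K$ the schedule varies by at most a constant factor over one round, so each $\eta_i$ can be replaced by $2\eta_{t-1}$ with the extra constant absorbed, giving the stated bound with $\eta_{t-1}$.

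The part I expect to be the main obstacle is finding the right decomposition: recognizing that the gradient must be split so that the "bulk" term $\nabla F(\bx_i)$ is literally identical across machines and hence killed by the mean subtraction. A naive split such as $\nabla f(x_i^m;z_i^m)=\xi_i^m+\nabla F_m(x_i^m)$ leaves an uncontrolled $\frac1M\sum_m\|\nabla F_m(x_i^m)\|^2$, and it is exactly the \emph{everywhere} heterogeneity bound \eqref{eq:def-zetabar} (rather than heterogeneity only at $x^*$) that makes the clean split possible. A secondary, more routine point is verifying the martingale orthogonality and checking that the induction's stepsize requirement is consistent with the rest of the analysis.
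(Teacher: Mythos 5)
There is a genuine gap, and it is exactly at the point where you close the induction. Your recursion is $\Psi_k\le 2K\eta^2\sigma^2+4K^2\eta^2\bar\zeta^2+4KH^2\eta^2\sum_{i<k}\Psi_i$, and the induction only closes when the feedback coefficient satisfies $4K^2H^2\eta^2\lesssim 1$, i.e.\ $\eta\lesssim 1/(HK)$. You assert this restriction ``is in force for this theorem anyway,'' but it is not: in the proof of Theorem \ref{thm:local-sgd-uppper-bound} the stepsize is only constrained by $\eta\le \frac{1}{10H}$ (together with terms involving $\sigma$, $\sigma_*$, $\bar\zeta$ that can all be inactive, e.g.\ when the noise and heterogeneity are small), and in the strongly convex case $\eta_t=\frac{2}{\lambda(a+t+1)}$ with $a=20H/\lambda$ starts at roughly $\frac{1}{10H}$. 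Indeed the whole point of the lemma is to permit $\eta\sim 1/H$, since the leading term $HB^2/(KR)$ --- the improvement over Minibatch SGD --- comes from $B^2/(\eta KR)$ with $\eta\sim 1/H$; if the deviation bound only held for $\eta\lesssim 1/(HK)$ the theorem would degrade back to $HB^2/R$. And with $\eta\sim 1/H$ your recursion genuinely blows up (the coefficient is $\sim K$, so iterating gives growth like $K^K$), so this is not a constant-chasing issue: the step that converts the smooth part into the additive term $2H^2\nrm{x_i^m-\bx_i}^2$ and then applies Cauchy--Schwarz over the $K$ in-round steps is too lossy in the relevant stepsize regime.

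The paper avoids this by a different decomposition. It bounds $\frac1M\sum_m\E\nrm{x_t^m-\bx_t}^2\le\frac{1}{M^2}\sum_{m,n}\E\nrm{x_t^m-x_t^n}^2$ and tracks \emph{pairs} of machines, writing the one-step difference as $x^m-x^n-\eta\prn{\nabla F(x^m)-\nabla F(x^n)}$ plus the heterogeneity perturbation $\eta\prn{\nabla F(x^m)-\nabla F_m(x^m)-\nabla F(x^n)+\nabla F_n(x^n)}$ plus noise. Because the position-dependent part involves the \emph{common} function $F$ at both points, co-coercivity (Lemma \ref{lem:co-coercivity}) makes that map non-expansive (in fact a $(1-\lambda\eta)$-contraction) for $\eta\lesssim 1/H$ --- so the smoothness part never feeds back multiplicatively with an extra $K\eta^2H^2$ factor; only the noise and the pointwise-$\bar\zeta$ terms accumulate additively, and the Young factor $\prn{1+\frac{1}{K-1}}$ per step costs only $e\le 3$ over a round. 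This is what lets the bound hold for $\eta\sim 1/H$, not just $\eta\lesssim 1/(HK)$. A secondary problem in your sketch is the decreasing-stepsize case: within a round the ratio of stepsizes is not bounded by a constant unless $a\gtrsim K$, and the theorem's choice $a=20H/\lambda$ need not satisfy this; the paper instead telescopes the $(1-\lambda\eta_j)$ contraction factors against the $\eta_i^2$ weights, using $\eta_i^2\prod_{j=i+1}^{t-1}(1-\lambda\eta_j)\le\eta_{t-1}^2$, which is what actually yields the stated $\eta_{t-1}^2$ bound.
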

\begin{proof}
By Jensen's inequality
\begin{equation}
\E\nrm*{x_t^m - \bx_t}^2 \leq \frac{1}{M}\sum_{n=1}^M\E\nrm*{x_t^m - x_t^n}^2
\end{equation}
Therefore, it suffices to bound $\E\nrm*{x_t^m - x_t^n}^2$, which we do now:
\begin{align}
&\E\nrm*{x_t^m - x_t^n}^2 \nonumber\\
&\leq \E\left\|x_{t-1}^m - x_{t-1}^n - \eta_{t-1}\prn*{\nabla F(x_{t-1}^m) - \nabla F(x_{t-1}^n)} \right.\nonumber\\
&+ \left.\eta_{t-1}\prn*{\nabla F(x_{t-1}^m) - \nabla F_m(x_{t-1}^m) - \nabla F(x_{t-1}^n) + \nabla F_n(x_{t-1}^n)}\right\|^2 + \eta_{t-1}^2\sigma^2 \\
&\leq \inf_{\gamma > 0} \prn*{1 + \frac{1}{\gamma}}\E\nrm*{x_{t-1}^m - x_{t-1}^n - \eta_{t-1}\prn*{\nabla F(x_{t-1}^m) - \nabla F(x_{t-1}^n)}}^2 \nonumber\\
&\qquad+ \prn*{1+\gamma}\eta_{t-1}^2\E\nrm*{\nabla F(x_{t-1}^m) - \nabla F_m(x_{t-1}^m) - \nabla F(x_{t-1}^n) + \nabla F_n(x_{t-1}^n)}^2 + \eta_{t-1}^2\sigma^2 \\
&\leq  \inf_{\gamma > 0} \prn*{1 + \frac{1}{\gamma}}\prn*{1 - \lambda\eta_{t-1}}\E\nrm*{x_{t-1}^m - x_{t-1}^n}^2 + \eta_{t-1}^2\sigma^2 \nonumber\\
&+ \prn*{1+\gamma}\eta_{t-1}^2\E\nrm*{\nabla F(x_{t-1}^m) - \nabla F_m(x_{t-1}^m)}^2 \nonumber\\
&+ \prn*{1+\gamma}\eta_{t-1}^2\E\nrm*{\nabla F(x_{t-1}^n) - \nabla F_n(x_{t-1}^n)}^2 \nonumber\\
&- 2\prn*{1+\gamma}\eta_{t-1}^2\E\inner{\nabla F(x_{t-1}^m) - \nabla F_m(x_{t-1}^m)}{\nabla F(x_{t-1}^n) - \nabla F_n(x_{t-1}^n)} 
\end{align}
For the third inequality we used Lemma \ref{lem:co-coercivity}. Therefore,
\begin{align}
\frac{1}{M^2}&\sum_{m=1}^M\sum_{n=1}^M \E\nrm*{x_t^m - x_t^n}^2 \nonumber\\
&\leq \frac{1}{M^2}\sum_{m=1}^M\inf_{\gamma > 0} \prn*{1 + \frac{1}{\gamma}}\prn*{1 - \lambda\eta_{t-1}}\E\nrm*{x_{t-1}^m - x_{t-1}^n}^2 + \eta_{t-1}^2\sigma^2 + 2\prn*{1+\gamma}\eta_{t-1}^2\bar{\zeta}^2
\end{align}
We will unroll this recurrence, using that $x_{t_0}^m = x_{t_0}^n$ for all $m,n$ where $t_0$ is the most recent time that the iterates were synchronized, so $t - t_0 \leq K-1$. 
Taking $\gamma = K-1$, we have
\begin{align}
\frac{1}{M^2}\sum_{m=1}^M\sum_{n=1}^M \E\nrm*{x_t^m - x_t^n}^2
&= \sum_{i=t_0}^{t-1} \prn*{\eta_i^2\sigma^2 + 2(1+\gamma)\eta_i^2\bar{\zeta}^2}\prod_{j=i+1}^{t-1}\prn*{1+\frac{1}{\gamma}}\prn*{1-\lambda\eta_j}\\
&\leq \sum_{i=t_0}^{t-1} \prn*{\eta_i^2\sigma^2 + 2K\eta_i^2\bar{\zeta}^2}\prod_{j=i+1}^{t-1}\prn*{1+\frac{1}{K-1}}\prn*{1-\lambda\eta_j} \\
&\leq \sum_{i=t_0}^{t-1} \prn*{\eta_i^2\sigma^2 + 2K\eta_i^2\bar{\zeta}^2}\prn*{1+\frac{1}{K-1}}^{K-1} \prod_{j=i+1}^{t-1}\prn*{1-\lambda\eta_j} \\
&\leq 3\prn*{\sigma^2 + 2K\bar{\zeta}^2}\sum_{i=t_0}^{t-1} \eta_i^2\prod_{j=i+1}^{t-1}\prn*{1-\lambda\eta_j}
\end{align}
For a constant stepsize $\eta$, 
\begin{align}
\frac{1}{M^2}\sum_{m=1}^M\sum_{n=1}^M \E\nrm*{x_t^m - x_t^n}^2
&\leq 3\prn*{\sigma^2 + 2K\bar{\zeta}^2}\sum_{i=t_0}^{t-1} \eta^2 \\
&\leq 3K\prn*{\sigma^2 + 2K\bar{\zeta}^2}\eta^2
\end{align}
For decreasing stepsize $\eta_t = \frac{2}{\lambda(a+t+1)}$
\begin{align}
\frac{1}{M^2}\sum_{m=1}^M\sum_{n=1}^M \E\nrm*{x_t^m - x_t^n}^2
&\leq 3\prn*{\sigma^2 + 2K\bar{\zeta}^2}\sum_{i=t_0}^{t-1} \eta_i^2\prod_{j=i+1}^{t-1}\frac{a+j-1}{a+j+1} \\
&= 3\prn*{\sigma^2 + 2K\bar{\zeta}^2}\sum_{i=t_0}^{t-1} \eta_i^2\frac{(a+i)(a+i+1)}{(a+t)(a+t+1)} \\
&= 3\prn*{\sigma^2 + 2K\bar{\zeta}^2}\sum_{i=t_0}^{t-1} \eta_i^2\frac{\eta_{t-1}\eta_t}{\eta_{i-1}\eta_i} \\
&\leq 3\prn*{\sigma^2 + 2K\bar{\zeta}^2}\sum_{i=t_0}^{t-1} \eta_i^2\frac{\eta_{t-1}^2}{\eta_i^2} \\
&= 3K\prn*{\sigma^2 + 2K\bar{\zeta}^2}\eta_{t-1}^2
\end{align}

\end{proof}

\localsgdupperbound*
\begin{proof}
By Lemma \ref{lem:zeta-everywhere-progress}, for any $\eta_t \leq \frac{1}{10H}$
\begin{equation}
\E\brk*{F(\bx_t) - F^*} \leq \prn*{\frac{1}{\eta_t} - \lambda}\E\nrm*{\bx_t - x^*}^2 - \frac{1}{\eta_t}\E\nrm*{\bx_{t+1} - x^*}^2 + \frac{3\sigma_*^2\eta_t}{M} + \frac{2H}{M}\sum_{m=1}^M\E\nrm*{\bx_t - x_t^m}^2
\end{equation}
By Lemma \ref{lem:zeta-everywhere-divergence}, when $\eta_t=\eta$ is constant then
\begin{equation}
\frac{1}{M}\sum_{m=1}^M \E\nrm*{x_t^m - \bx_t}^2 \leq 3K\sigma^2\eta^2 + 6K^2\eta^2\bar{\zeta}^2
\end{equation}
and when $\eta_t = \frac{2}{\lambda(a + t + 1)}$
\begin{equation}
\frac{1}{M}\sum_{m=1}^M \E\nrm*{x_t^m - \bx_t}^2 \leq 3K\sigma^2\eta_{t-1}^2 + 6K^2\bar{\zeta}^2\eta_{t-1}^2
\end{equation}
We now consider the convex and strongly convex cases separately:

\paragraph{Convex case:}
In the convex case, we use a constant stepsize $\eta$, so 
\begin{align}
\E&\brk*{F(\bx_t) - F^*} \nonumber\\
&\leq \frac{1}{\eta}\E\nrm*{\bx_t - x^*}^2 - \frac{1}{\eta}\E\nrm*{\bx_{t+1} - x^*}^2 + \frac{3\sigma_*^2\eta}{M} + \frac{2H}{M}\sum_{m=1}^M\E\nrm*{\bx_t - x_t^m}^2 \\
&\leq \frac{1}{\eta}\E\nrm*{\bx_t - x^*}^2 - \frac{1}{\eta}\E\nrm*{\bx_{t+1} - x^*}^2 + \frac{3\sigma_*^2\eta}{M} + 6HK\sigma^2\eta^2 + 12HK^2\eta^2\bar{\zeta}^2
\end{align}
Therefore, by the convexity of $F$
\begin{align}
\E\brk*{F\prn*{\frac{1}{KR}\sum_{t=1}^{KR} \bx_t} - F^*} 
&\leq \frac{1}{KR}\sum_{t=1}^{KR}\E\brk*{F(\bx_t) - F^*} \\
&\leq \frac{B^2}{\eta KR} + \frac{3\sigma_*^2\eta}{M} + 6HK\sigma^2\eta^2 + 12HK^2\eta^2\bar{\zeta}^2
\end{align}
Choosing
\begin{equation}
\eta = \min\crl*{\frac{1}{10H},\, \frac{B\sqrt{M}}{\sigma_*\sqrt{KR}},\, \prn*{\frac{B^2}{HK^2\sigma^2}}^{1/3},\, \prn*{\frac{B^2}{HK^2\bar{\zeta}^2}}^{1/3}}
\end{equation}
then ensures
\begin{align}
\E\brk*{F\prn*{\frac{1}{KR}\sum_{t=1}^{KR} \bx_t} - F^*} 
&\leq \frac{10HB^2}{KR} \frac{13\prn*{H\bar{\zeta}^2B^4}^{1/3}}{R^{2/3}} + \frac{7\prn*{H\sigma^2B^4}^{1/3}}{K^{1/3}R^{2/3}} + \frac{4\sigma_*B}{\sqrt{MKR}}
\end{align}

\paragraph{Strongly convex case:}
In the strongly convex case, we take the stepsize $\eta_t = \frac{2}{\lambda(a+t+1)}$ for $a = 20H/\lambda$ which ensures $\eta_t \leq \frac{1}{10H}$. In addition, we define weights $w_t = (a+t)$ and define
\begin{equation}
\bar{x} = \frac{1}{W}\sum_{t=1}^{KR}w_t\bx_t
\end{equation}
where $W = \sum_{t=1}^{KR}w_t \geq \frac{1}{2}KR(a+KR)$.
By the convexity of $F$,
\begin{align}
\E& F(\bar{x}) - F^* \nonumber\\
&\leq \frac{1}{W}\sum_{t=1}^{KR} (a+t)\E F(\bx_t) - F^* \\
&\leq \frac{\lambda (a+1)(a+2)B^2}{2W} + \frac{1}{W}\sum_{t=1}^{KR}\brk*{\frac{6\sigma_*^2}{\lambda M} + \frac{2H(a+t)}{M}\sum_{m=1}^M\E\nrm*{\bx_t - x_t^m}^2} \\
&\leq \frac{\lambda (a+1)(a+2)B^2}{2W} + \frac{6\sigma_*^2KR}{W\lambda M} + \frac{6HK\sigma^2 + 12HK^2\bar{\zeta}^2}{W}\sum_{t=1}^{KR}(a+t)\eta_{t-1}^2 \\
&\leq \frac{\lambda (a+1)(a+2)B^2}{2W} + \frac{6\sigma_*^2KR}{W\lambda M} + \frac{6HK\sigma^2 + 12HK^2\bar{\zeta}^2}{\lambda^2 W}\prn*{1 + \log\prn*{a+KR}} \\
&\leq \frac{132H^2 B^2}{\lambda KR(10H/\lambda + KR)} + \prn*{\frac{12H\bar{\zeta}^2}{\lambda^2 R^2} + \frac{6H\sigma^2}{\lambda^2 KR^2}} \log\prn*{\frac{13H}{\lambda}+KR} + \frac{6\sigma_*^2}{\lambda MKR}
\end{align}
Note that in the strongly convex case, it is likely possible to achieve a first term scaling with $\exp(-KR)$ using a method similar to Lemma \ref{lem:sc-exp-rate}. However, the recurrence we derived here has a different form, and it is difficult to determine the correct stepsize and weighting schedule to achieve linear convergence.
\end{proof}

\section{Details of Experiments}\label{app:experiments}
The training set of MNIST (60,000 examples) was divided by digit into ten groups of equal size $n \approx 6,000$ (which required discarding some examples from the more common digits). PCA was used to reduce the dimensionality to 100, but no other preprocessing was used.

Then, for each of the 25 combinations ($i$,$j$) for even $i$ and odd $j$, a binary classification ``task'' was created, i.e.~classifying even ($+1$) versus odd ($-1$). These tasks were arbitrarily labelled task $1,2,\dots,25$.

For each $p \in [0.0, 0.2, 0.4, 0.6, 0.8, 1.0]$, machine $m$ was assigned data composed of $p\cdot 2n$ random examples from task $m$, and $(1-p)\cdot 2n$ random examples from a mixture of all the tasks. 

Local and Minibatch SGD were then used to optimize the logistic loss for each of the six described local datasets. The constant stepsize was tuned (from a log-scale grid of 10 points ranging from $e^{-6},\dots,e^{0}$ for Minibatch SGD, and a log-scale grid of 10 points ranging from $e^{-8},\dots,e^{-1}$ for Local SGD) for each value of $p$, $K$, and $R$ individually, and the average loss over four runs is reported for the best stepsize for each point in the plot. That is, each point in the plot represents the best possible performance of the algorithm for that $p$, $K$, and $R$ specifically.

Finally, we computed the value of $\zeta_*^2$ as a function of $p$ by using Newton's method to compute a very accurate estimate of the minimizer, and then explicitly calculating $\zeta_*^2(p)$ at that point.


\section{Proof of Theorem \ref{thm:dzr-lower-bound}}\label{app:alg-independent-lower-bound}
For this lower bound, the gradients will always be noiseless, so we simply define the expectation of the local functions. Furthermore, we will construct just two local functions $F_1$ and $F_2$. For the case $M > 2$, $F_1$ will be assigned to the first $\lfloor M/2\rfloor$ machines, and $F_2$ to the next $\lfloor M/2\rfloor$ machines. If there is an odd number of machines, we simply assign the last machine $F_3(x) = \frac{\lambda}{2}\nrm{x}^2$, which will reduce the lower bound by a factor of at most $\frac{M-1}{M}$. Therefore, we proceed by focusing on the case $M=2$. 

We define the following $H$-smooth and $\lambda$-strongly convex functions on $\R^d$ for even $d$:
\begin{align}
F(x) &= \frac{1}{2}\prn*{F_1(x) + F_2(x)} \label{eq:dzr-construction}\\
F_1(x) &= \frac{H-\lambda}{8}\prn*{x_1^2 - 2Cx_1 + \beta x_d^2 + \sum_{i=1}^{d/2-1}\prn*{x_{2i+1} - x_{2i}}^2} + \frac{\lambda}{2}\nrm*{x}^2 \\
F_2(x) &= \frac{H-\lambda}{8}\prn*{\sum_{i=1}^{d/2}\prn*{x_{2i} - x_{2i-1}}^2} + \frac{\lambda}{2}\nrm*{x}^2
\end{align}
Here, $\beta$ and $C$ are constants which will be chosen later.

These functions are identical to ones used by \citet{woodworth16tight} to prove lower bounds for finite sum optimization, and are very similar both to classic work by \citet{nesterov2004introductory} on lower bounds and to more closely related work by \citet{arjevani2015communication}. \citeauthor{arjevani2015communication} also prove lower bounds for distributed optimization algorithms, but their slightly different construction made it more difficult to tune $\sdiff^2$, which is necessary for our lower bound.

These functions have the following important property: let $E_k = \spn\crl{e_1,\dots,e_k}$ be the set of vectors whose $k+1,\dots,d$ coordinates are all zero, then for all $x_k \in E_k$ for even $k$
\begin{equation}
\nabla F_1(x_k) \in E_{k+1}\qquad\textrm{and}\qquad \nabla F_2(x_k) \in E_k\label{eq:grad-span-even}
\end{equation}
and for $x_k \in E_k$ for odd $k$
\begin{equation}
\nabla F_1(x_k) \in E_{k}\qquad\textrm{and}\qquad \nabla F_2(x_k) \in E_{k+1}\label{eq:grad-span-odd}
\end{equation}
For algorithms whose iterates, for example, remain in the span of previous gradients, the only way to access the next coordinate is to query the gradient of one of the two functions---$F_1$ if the next coordinate is odd, and $F_2$ if the next coordinate is even. Since each machine will only have access to one of the two functions throughout each round of communication, this means that each round of communication can only unlock a single new coordinate. We now formalize this.

Following \citet{carmon2017lower1}, we define:
\begin{definition}[Distributed zero-respecting algorithm]
For a vector $v$, let $\supp(v) = \crl*{i \in \crl{1,\dots,d}\,:\, v_i \neq 0}$. We say that an optimization algorithm is distributed zero-respecting if for all $t$ and $m$, the $t$th query on the $m$th machine, $x_t^m$ satisfies
\[
\supp(x_t^m) \subseteq \bigcup_{s < t} \supp(\nabla f(x_s^m;z_s^m)) \cup \bigcup_{m' \neq m}\bigcup_{s \leq \pi_m(t,m')}\supp(\nabla f(x_s^{m'};z_s^{m'}))
\]
where $\pi_m(t,m')$ is the most recent time before $t$ when machines $m$ and $m'$ communicated with each other.
\end{definition}
This definition captures a very wide variety of distributed optimization algorithms, including minibatch SGD, accelerated minibatch SGD, local SGD, coordinate descent methods, and many more. Algorithms which are \emph{not} distributed zero-respecting are those whose iterates have components in directions about which the algorithm has no information, meaning that in some sense, it is just ``wild guessing.'' Using techniques similar to \citet[Theorem 7][]{woodworth16tight} and \citet{carmon2017lower1}, it should be possible to extend this lower bound beyond distributed zero-respecting algorithms to arbitrary randomized algorithms. 

We now argue that the progress of distributed zero-respecting algorithms is controlled by the number of rounds of communication, $R$, regardless of $K$:
\begin{lemma}\label{lem:dzr-progress}
Let $\hat{x}$ be the output after $R$ rounds of communication of a distributed zero-respecting algorithm optimizing $F = \frac{1}{2}(F_1+F_2)$ as defined in \eqref{eq:dzr-construction}. Then,
\[
\supp(x_t^m) \in E_R
\]
\end{lemma}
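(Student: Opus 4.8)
The plan is to prove the lemma by induction on the number of rounds, exploiting the ``alternating'' coupling structure of $F_1$ and $F_2$ recorded in \eqref{eq:grad-span-even} and \eqref{eq:grad-span-odd}. The crucial observation is that $\nabla F_1$ can enlarge the support only when the current support reaches an \emph{even} index (revealing the next, odd, coordinate), while $\nabla F_2$ can do so only from an \emph{odd} index (revealing the next, even, coordinate); this is immediate from inspecting the two quadratics, since $F_1$ couples $x_{2i}$ with $x_{2i+1}$ (plus a linear term in $x_1$) and $F_2$ couples $x_{2i-1}$ with $x_{2i}$. Because each machine holds a fixed one of the two functions throughout a round and can only incorporate information from the other machines at the $R$ communication points, no single round can reveal more than one new coordinate --- this is precisely what converts the naive ``one coordinate per gradient step'' bound of $E_{KR}$ into the bound $E_R$.

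Concretely, I would assign $F_1$ to machines $1,\dots,\lfloor M/2\rfloor$ and $F_2$ to the next $\lfloor M/2\rfloor$ (an odd leftover machine runs $F_3(x)=\frac{\lambda}{2}\nrm{x}^2$, whose gradient $\lambda x$ never enlarges the support and is therefore irrelevant). Since all $F_1$-machines, resp.\ all $F_2$-machines, have identical coordinate behavior, it suffices to track one representative of each. I would then prove by induction on $r\in\crl{1,\dots,R}$ the claim: \emph{every iterate $x_t^m$ computed during rounds $1,\dots,r$ lies in $E_r$}. For the base case $r=1$, all machines start at $x_0=0\in E_0$; before any communication, the zero-respecting condition forces each round-$1$ iterate's support into the union of supports of gradients of its own function evaluated inside $E_0$, and \eqref{eq:grad-span-even} with $k=0$ gives $\nabla F_1(E_0)\subseteq E_1$ and $\nabla F_2(E_0)\subseteq E_0$, so all round-$1$ iterates lie in $E_1$. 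For the inductive step, at the $r$-th communication every shared iterate lies in $E_r$ by hypothesis, so at the start of round $r+1$ every machine's iterate and every gradient it has ever seen has support in $E_r$; by the zero-respecting condition, the only way a round-$(r+1)$ iterate on machine $m$ can acquire support outside $E_r$ is through a gradient computed during round $r+1$ on machine $m$. If $r$ is even, then $\nabla F_1(E_r)\subseteq E_{r+1}$ but $\nabla F_1(E_{r+1})\subseteq E_{r+1}$, and $\nabla F_2(E_r)\subseteq E_r$, so the $F_1$-machines reach at most $E_{r+1}$ and the $F_2$-machines stay in $E_r$; if $r$ is odd the roles are reversed. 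Either way all round-$(r+1)$ iterates lie in $E_{r+1}$, closing the induction, and taking $r=R$ gives $x_t^m\in E_R$ for every queried point, hence $\hat{x}\in E_R$.

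The main obstacle is the bookkeeping at the communication steps: one has to check that combining iterates across machines cannot escape $E_r$ --- this is where the inductive hypothesis and the second union in the zero-respecting definition are used --- and, more importantly, that the parity structure genuinely caps each round's progress at a single coordinate (the stalling facts $\nabla F_1(E_{k})\subseteq E_{k}$ for odd $k$ and $\nabla F_2(E_{k})\subseteq E_{k}$ for even $k$ are doing the real work). Once this ``at most one coordinate per round'' phenomenon is isolated, the remainder of the argument is routine, and the lower bound on $F(\hat{x})-F^*$ then follows by an appropriate choice of $\beta$ and $C$ and a standard lower bound on the suboptimality of an arbitrary point of $E_R$ for the quadratic $F$, as in \citet{nesterov2004introductory} and \citet{woodworth16tight}.
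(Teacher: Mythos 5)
Your proposal is correct and follows essentially the same route as the paper: induction on the round index, using the parity structure \eqref{eq:grad-span-even}--\eqref{eq:grad-span-odd} so that within each round only the machine holding the ``active'' function can reveal one new coordinate, while further local gradient queries stall at that coordinate until the next communication. The bookkeeping you flag (communication cannot escape $E_r$, and the stalling facts $\nabla F_1(E_k)\subseteq E_k$ for odd $k$, $\nabla F_2(E_k)\subseteq E_k$ for even $k$) is exactly what the paper's inductive argument verifies.
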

\begin{proof}
The definition of a zero-respecting algorithm requires that every machine's initial iterate $x_0^m = 0$. We will now prove the Lemma by induction on the round of communication.

As a base case, for the first iteration of the first round of communication:
\begin{equation}
\nabla F_1(x_0^1) = \nabla F_1(0) = \frac{(\lambda - H)C}{4}e_1 \in E_1\qquad\textrm{and}\qquad \nabla F_2(x_1^2) = \nabla F_2(0) = 0 \in E_0
\end{equation}
Therefore, by the distributed zero-respecting property, $x_2^1 \in E_1$ and $x_2^2 \in E_0$. Furthermore, for all $y_1 \in E_1$, $\nabla F_1(y_1) \in E_1$ and for all $y_0 \in E_0$, $\nabla F_2(y_0) \in E_0$. Therefore, further gradient queries on each machine will not change the set of coordinates that the distributed zero-respecting property allows to be non-zero. We conclude that $x_t^1 \in E_1$ and $x_t^2 \in E_0$ for all $t$ until machines $1$ and $2$ communicate with each other.

Now, suppose that after $r-1$ rounds of communication, $x_t^1, x_t^2 \in E_{r-1}$. If $r$ is even, then $\nabla F_1(x_t^1) \in E_{r-1}$ and $\nabla F_2(x_t^2) \in E_r$. Furthermore, additional gradient computations within the $r$th round of communication will not expand the set of coordinates that the distributed zero-respecting property will allow to be non-zero. Therefore, both machines' coordinates will remain in $E_r$ until the end of the $r$th round of communication. A similar argument can be made for odd $r$.
\end{proof}

Now, we will compute the minimizer of $F$. We note that by the definition of $F_1$ and $F_2$, 
\begin{equation}
F(x) = \frac{H-\lambda}{16}\prn*{x_1^2 - 2Cx_1 + \beta x_d^2 + \sum_{i=2}^{d }\prn*{x_{i} - x_{i-1}}^2} + \frac{\lambda}{2}\nrm*{x}^2
\end{equation}
Calculating the gradient of $F$, we see that $x^* = \argmin_x F(x)$ must satisfy
\begin{equation}
\begin{aligned}
C &= \prn*{2 + \frac{8\lambda}{H-\lambda}} x_1^* - x_2^* \\
0 &= \prn*{2 + \frac{8\lambda}{H-\lambda}} x_i^* - x_{i+1}^* - x_{i-1}^* \qquad\forall_{i\in\crl{2,\dots,d-1}} \\
0 &= \prn*{1 + \beta + \frac{8\lambda}{H-\lambda}} x_d^* - x_{d-1}^*\label{eq:dzr-solution-conditions}
\end{aligned}
\end{equation}
Let $q$ be the smaller solution of the quadratic equation
\begin{equation}\label{eq:q-quadratic-equation}
1 - \prn*{2 + \frac{8\lambda}{H-\lambda}}q + q^2 = 0
\end{equation}
That is,
\begin{align}
q 
&= 1 + \frac{4\lambda}{H-\lambda} - \sqrt{\frac{16\lambda^2}{(H-\lambda)^2} + \frac{8\lambda}{H-\lambda}} \\
&= 1 + \frac{4\lambda}{H-\lambda}\prn*{1 - \sqrt{1 + \frac{H-\lambda}{2\lambda}}} \\
&= 1 - \frac{2}{\prn*{1 - \sqrt{1 + \frac{H-\lambda}{2\lambda}}}\prn*{1 + \sqrt{1 + \frac{H-\lambda}{2\lambda}}}}\prn*{1 - \sqrt{1 + \frac{H-\lambda}{2\lambda}}} \\
&= \frac{\sqrt{1 + \frac{H-\lambda}{2\lambda}} - 1}{\sqrt{1 + \frac{H-\lambda}{2\lambda}} + 1}
\end{align}
Let $\alpha = \sqrt{1 + \frac{H-\lambda}{2\lambda}}$ so that $q = \frac{\alpha - 1}{\alpha + 1}$, and define $\beta = 1 - q$. Then it is straightforward to confirm that 
\begin{equation}
x^* = C\sum_{i=1}^d q^i e_i
\end{equation}
satisfies all of the conditions \eqref{eq:dzr-solution-conditions}, and is thus the minimizer of $F$. This point has value
\begin{align}
F(x^*)
&= \frac{C^2(H-\lambda)}{16}\prn*{q^2 - 2q + \beta q^{2d} + (1-q)^2\sum_{i=2}^d q^{2i-2} + \frac{8\lambda}{H-\lambda}\sum_{i=1}^d q^{2i}} \\
&= \frac{C^2(H-\lambda)}{16}\prn*{-1 + \beta q^{2d} + (1-q)^2\sum_{i=1}^d q^{2i-2} + \frac{8\lambda}{H-\lambda}\sum_{i=1}^d q^{2i}} \\
&= \frac{C^2(H-\lambda)}{16}\prn*{-1 + (1-q)q^{2d} + \frac{8\lambda}{H-\lambda}\sum_{i=1}^d q^{2i-1} + q^{2i}} \\
&= \frac{C^2(H-\lambda)}{16}\prn*{-1 + (1-q)q^{2d} + \frac{8\lambda}{H-\lambda}\prn*{\frac{q(1 - q^{2d})}{1-q^2} + \frac{q^2(1-q^{2d})}{1-q^2}}} \\
&= \frac{C^2(H-\lambda)}{16}\prn*{-1 + (1-q)q^{2d} + \frac{(1-q)^2}{q}\prn*{\frac{q(1 - q^{2d})}{1-q^2} + \frac{q^2(1-q^{2d})}{1-q^2}}} \\
&= \frac{C^2(H-\lambda)}{16}\prn*{-1 + (1-q)q^{2d} + (1-q)(1 - q^{2d})} \\
&= \frac{-qC^2(H-\lambda)}{16}\label{eq:C-for-function-value}
\end{align}
For the third equality, we used that \eqref{eq:q-quadratic-equation} implies $(1-q)^2 = \frac{8\lambda q}{H-\lambda}$. For the fifth inequality, we used that $\frac{8\lambda}{H-\lambda} = \frac{(1-q)^2}{q}$.
This solution has norm
\begin{align}
\nrm*{x^*}^2 
= C^2\sum_{i=1}^{d}q^{2i} 
= C^2\frac{q^2(1-q^{2d})}{1-q^2} 
\leq \frac{q^2C^2}{1-q^2} 
= \frac{C^2(\alpha - 1)^2}{4\alpha} 
\leq \frac{\alpha C^2}{4}\label{eq:C-for-norm}
\end{align}
Furthermore,
\begin{equation}
F(0) - F(x^*) = -F(x^*) = \frac{qC^2(H-\lambda)}{16}\label{eq:C-for-suboptimality}
\end{equation}
Finally, we evaluate the degree of heterogeneity:
\begin{align}
\sdiff^2 &= \frac{1}{2}\sum_{m=1}^2 \nrm*{\nabla F_m(x^*)}^2 = \nrm*{\nabla F_1(x^*)}^2 = \nrm*{\nabla F_2(x^*)}^2 \\
&= \frac{(H-\lambda)^2}{64}\nrm*{2\sum_{i=1}^{d/2}\prn*{x_{2i}^* - x_{2i-1}^*}\prn*{e_{2i} - e_{2i-1}} + \frac{4\lambda}{H-\lambda}x^*}^2 \\
&= \frac{(H-\lambda)^2}{16}\sum_{i=1}^{d/2}\brk*{\prn*{x_{2i-1}^*\prn*{-1 + \frac{4\lambda}{H-\lambda}}}^2 + \prn*{x_{2i}^*\prn*{1 + \frac{4\lambda}{H-\lambda}}}^2} \\
&= \frac{C^2(H-\lambda)^2}{16}\sum_{i=1}^{d/2}\brk*{q^{4i-2}\frac{(H-5\lambda)^2}{(H-\lambda)^2} + q^{4i}\frac{(H+3\lambda)^2}{(H-\lambda)^2}} \\
&\leq \frac{(H+3\lambda)^2}{16}\nrm{x^*}^2 \\
&\leq \frac{\alpha C^2 (H+3\lambda)^2}{64}\label{eq:C-for-sdiff}
\end{align}
With this, we are ready to prove the lower bound.

\dzrlowerbound*
\begin{proof}
By Lemma \ref{lem:dzr-progress}, the output of the algorithm $\hat{x} \in E_R$. Furthermore, since $F$ is $\lambda$-strongly convex,
$F(\hat{x}) - F^* \geq \frac{\lambda}{2}\nrm*{\hat{x} - x^*}^2$. Therefore,
\begin{align}
\frac{F(\hat{x}) - F^*}{F(0) - F^*} 
&\geq \frac{\frac{\lambda}{2}\nrm*{\hat{x} - x^*}^2}{\frac{qC^2(H-\lambda)}{16}} \\
&\geq \frac{8\lambda}{q(H-\lambda)}\sum_{i=R+1}^d q^{2i} \\
&= \frac{8\lambda q (q^{2R} - q^{2d})}{(H-\lambda)(1-q^2)} \\
&= \frac{(1-q)^2(q^{2R} - q^{2d})}{1-q^2} \\
&= \frac{(1-q)(q^{2R} - q^{2d})}{1+q} \\
&= \frac{q^{2R} - q^{2d}}{\alpha}
\end{align}
For the third equality we used that \eqref{eq:q-quadratic-equation} implies $\frac{8\lambda q}{H-\lambda} = (1-q)^2$. For the final equality, we used that $q = \frac{\alpha - 1}{\alpha + 1}$. Taking $d \geq R + \frac{1}{2\ln(1/q)}$ ensures that $q^{2d} \leq \frac{q^{2R}}{2}$ so
\begin{align}
\frac{F(\hat{x}) - F^*}{F(0) - F^*} 
&\geq \frac{q^{2R}}{2\alpha} = \frac{\prn*{1 - \frac{2}{\alpha+1}}^{2R}}{2\alpha}
\end{align}
Therefore, 
\begin{equation}
R \leq \frac{\ln\prn*{\frac{F(0) - F^*}{2\alpha\epsilon}}}{\ln\prn*{1 + \frac{2}{\alpha - 1}}} \implies F(\hat{x}) - F^* \geq \epsilon
\end{equation}
Using the fact that $\ln(1+x) \leq x$ and solving the above inequality on $R$ for $\epsilon$, we conclude that
\begin{align}
F(\hat{x}) - F^* 
&\geq \frac{F(0) - F^*}{2\alpha}\exp\prn*{-\frac{2R}{\alpha - 1}}\label{eq:alg-independent-almost-done}
\end{align}
In order to satisfy the strongly convex assumptions, we recall from \eqref{eq:C-for-sdiff} and \eqref{eq:C-for-suboptimality} that we must choose $C$ such that
\begin{align}
\frac{\alpha C^2(H+3\lambda)^2}{64} \leq \frac{\alpha C^2H^2}{16} &\leq \sdiff^2 \\
\frac{qC^2(H-\lambda)}{16} \leq \frac{C^2 H}{16} &\leq \Delta 
\end{align}
Therefore, we choose $C^2 = 16\min\crl*{\frac{\sdiff^2}{\alpha H^2},\, \frac{\Delta}{H}}$ meaning that
\begin{align}
F(\hat{x}) - F^* 
&\geq \frac{F(0) - F^*}{2\alpha}\exp\prn*{-\frac{2R}{\alpha - 1}} \\
&\geq \frac{\min\crl*{\frac{\sdiff^2}{\alpha H},\, \Delta}}{2\alpha}\exp\prn*{-\frac{2R}{\alpha - 1}} \\
&\geq \min\crl*{\frac{\lambda \sdiff^2}{H^2},\, \frac{\sqrt{\lambda}\Delta}{2\sqrt{H}}}\exp\prn*{-\frac{8\sqrt{\lambda}R}{\sqrt{H}}}
\end{align}

For the convex case, we note that in order to satisfy the convex assumptions, we must choose $C$ such that
\begin{align}
\frac{\alpha C^2(H+3\lambda)^2}{64} \leq \frac{\alpha C^2H^2}{16} &\leq \sdiff^2 \\
\frac{\alpha C^2}{4} &\leq B^2 
\end{align}
We therefore choose $C^2 = 4\min\crl*{\frac{\sdiff^2}{\alpha H^2},\, \frac{B^2}{\alpha}}$. Returning to \eqref{eq:alg-independent-almost-done}, this means
\begin{align}
F(\hat{x}) - F^* 
&\geq \frac{F(0) - F^*}{2\alpha}\exp\prn*{-\frac{2R}{\alpha - 1}} \\
&= \frac{qC^2(H-\lambda)}{32\alpha}\exp\prn*{-\frac{2R}{\alpha - 1}} \\
&\geq \frac{q(H-\lambda)\min\crl*{\frac{\sdiff^2}{\alpha H^2}\, \frac{B^2}{\alpha}}}{8\alpha}\exp\prn*{-\frac{8\sqrt{\lambda}R}{\sqrt{H}}} \\
&\geq q \min\crl*{\frac{\sdiff^2}{16\alpha^2 H},\, \frac{HB^2}{16\alpha^2}}\exp\prn*{-\frac{8\sqrt{\lambda}R}{\sqrt{H}}}
\end{align}
From here, we use that $H \geq 7\lambda$ implies $\alpha \geq 2$ so $q \geq 1/3$, so
\begin{align}
F(\hat{x}) - F^* 
&\geq \min\crl*{\frac{\lambda\sdiff^2}{48H^2},\, \frac{\lambda B^2}{48}}\exp\prn*{-\frac{8\sqrt{\lambda}R}{\sqrt{H}}}
\end{align}
Finally, this holds for any $\lambda \geq 0$, so it holds, in particular, for $\lambda = \frac{H}{64R^2}$ thus
\begin{align}
F(\hat{x}) - F^* 
&\geq c\cdot\min\crl*{\frac{\sdiff^2}{HR^2},\, \frac{HB^2}{R^2}}
\end{align}

Finally, it is well known that any first-order method which accesses at most $MKR$ stochastic gradients with variance $\sigma^2$ for a $\lambda$-strongly convex objective will suffer error at least $\beta \frac{\sigma^2}{\lambda MKR}$ in the worst case for a universal constant $\beta$ \cite{nemirovskyyudin1983}. Similarly, any first-order method which accesses at most $MKR$ stochastic gradients with variance $\sigma^2$ for a convex objective with $\nrm{x^*} \leq B$ will suffer error at least $\beta \frac{\sigma B}{\sqrt{MKR}}$ in the worst case for a universal constant $\beta$ \cite{nemirovskyyudin1983}.
\removed{
\begin{align}
F(\hat{x}) - F^* 
&\geq \frac{F(0) - F^*}{2\alpha}\exp\prn*{-\frac{4R}{\alpha - 1}} \\
&= \frac{qC^2(H-\lambda)}{32\alpha}\exp\prn*{-\frac{4R}{\alpha - 1}} \\
&= \frac{\frac{\alpha - 1}{\alpha + 1}C^2(H-\lambda)}{32\alpha}\exp\prn*{-\frac{4R}{\alpha - 1}} \\
&\geq \frac{HC^2}{112\alpha}\exp\prn*{-\frac{16R\sqrt{\lambda}}{\sqrt{H}}}
\end{align}
For the second inequality we used that $H \geq 7\lambda$, so $\alpha = \sqrt{1 + \frac{H-\lambda}{2\lambda}} \geq 2$.
Recalling from \eqref{eq:C-for-sdiff} that 
\begin{equation}
\sdiff^2 \leq \frac{\alpha C^2(H+3\lambda)^2}{64} \leq \frac{\alpha C^2H^2}{16}
\end{equation}
we have that
\begin{align}
F(\hat{x}) - F^* 
&\geq \frac{\sdiff^2}{7H\alpha^2}\exp\prn*{-\frac{16R\sqrt{\lambda}}{\sqrt{H}}} \\
&\geq \frac{\lambda \sdiff^2}{7H^2}\exp\prn*{-\frac{16R\sqrt{\lambda}}{\sqrt{H}}}
\end{align}

Therefore, for any $\sdiff^2$, we can pick $C^2 = \frac{16\sdiff^2}{\alpha H^2} \leq \frac{32\sdiff^2\sqrt{\lambda}}{H^{5/2}}$ and ensure that the heterogeneity of the objective is at most $\sdiff^2$. Furthermore, recalling 
\eqref{eq:C-for-norm}, this choice ensures 
$\nrm{x^*}^2 \leq \frac{\alpha C^2}{4} = \frac{4\sdiff^2}{H^2}$, so this problem instance is in $\mc{F}\prn*{H, \frac{4\sdiff^2}{H^2}, 0, \sdiff^2, \lambda}$. However, when the norm of $x^*$ is constrained to be less than $B$, we can only make $\sdiff^2$ as large as $\frac{H^2B^2}{4}$. Therefore, for functions in the class $\mc{F}(H,B,0,\sdiff^2,\lambda)$, the lower bound is
\begin{align}
F(\hat{x}) - F^* 
&\geq \frac{\lambda \min\crl*{\sdiff^2, \frac{H^2B^2}{4}}}{7H^2}\exp\prn*{-\frac{16R\sqrt{\lambda}}{\sqrt{H}}} \\
&= \min\crl*{\frac{\lambda \sdiff^2}{7H^2}\exp\prn*{-\frac{16R\sqrt{\lambda}}{\sqrt{H}}}, \frac{\lambda B^2}{28}\exp\prn*{-\frac{16R\sqrt{\lambda}}{\sqrt{H}}}}
\end{align}
Finally, it is well known that any first-order method which accesses at most $MKR$ stochastic gradients with variance $\sigma^2$ for a $\lambda$-strongly convex objective will suffer error at least $\beta \frac{\sigma^2}{\lambda MKR}$ in the worst case for a universal constant $\beta$ \cite{nemirovskyyudin1983}. Therefore, the lower bound for the class $\mc{F}(H,B,\sigma^2,\sdiff^2,\lambda)$ is
\begin{align}
F(\hat{x}) - F^* 
&\geq \beta\frac{\sigma^2}{\lambda MKR} \min\crl*{\frac{\lambda \sdiff^2}{7H^2}\exp\prn*{-\frac{16R\sqrt{\lambda}}{\sqrt{H}}}, \frac{\lambda B^2}{28}\exp\prn*{-\frac{16R\sqrt{\lambda}}{\sqrt{H}}}}
\end{align}
This concludes the proof of the lower bound for the strongly convex case.

For the convex case, we choose $\lambda = \frac{H}{256R^2}$, and the lower bound becomes
\begin{align}
F(\hat{x}) - F^* 
\geq \frac{\sdiff^2}{4872HR^2}
\end{align}
This is the lower bound for functions in the class $\mc{F}\prn*{H,\frac{4\sdiff^2}{H^2}, 0, \sdiff^2, 0}$. When the norm of $x^*$ is constrained to be less than $B$, we can only make $\sdiff^2$ as large as $\frac{H^2B^2}{4}$. Therefore, for functions in the class $\mc{F}(H,B,0,\sdiff^2,0)$, the lower bound is
\begin{align}
F(\hat{x}) - F^* 
&\geq \frac{\min\crl*{\sdiff^2, \frac{H^2B^2}{4}}}{4872HR^2} \\
&= \min\crl*{\frac{\sdiff^2}{4872HR^2}, \frac{HB^2}{19488R^2}} \\
&\geq \min\crl*{\frac{\sdiff B}{4872R^2}, \frac{HB^2}{19488R^2}}
\end{align}
To conclude, it is well known that any first-order method which accesses at most $MKR$ stochastic gradients with variance $\sigma^2$ for a convex objective with $\nrm{x^*} \leq B$ will suffer error at least $\beta \frac{\sigma B}{\sqrt{MKR}}$ in the worst case for a universal constant $\beta$ \cite{nemirovskyyudin1983}. So, the lower bound for the class $\mc{F}(H,B,\sigma^2,\sdiff^2,0)$ becomes
\begin{equation}
F(\hat{x}) - F^* \geq \beta \frac{\sigma B}{\sqrt{MKR}} + \min\crl*{\frac{\sdiff B}{4872R^2}, \frac{HB^2}{19488R^2}}
\end{equation}
} 
\end{proof}

\section{Proof of Corollary \ref{cor:ambsgd-optimal}}\label{app:ambsgd-optimal}
\ambsgdoptimal*
\begin{proof}
In the convex case, Theorem \ref{thm:mbsgd-upper-bound} ensures Accelerated Minibatch SGD converges at a rate proportional to
\begin{equation}
\frac{HB^2}{R^2} + \frac{\sigma B}{\sqrt{MKR}}
\end{equation}
The lower bound for convex functions in Theorem \ref{thm:dzr-lower-bound} precisely matches this whenever 
\begin{equation}
\frac{HB^2}{R^2} = \min\crl*{\frac{\sdiff^2}{HR^2}, \frac{HB^2}{R^2}} \implies \sdiff \geq HB
\end{equation}
For the strongly convex case, Theorem \ref{thm:mbsgd-upper-bound} ensures convergence at a rate porportional to
\begin{equation}
\Delta\exp\prn*{-\frac{\sqrt{\lambda}R}{c_3\sqrt{H}}} + \frac{\sigma^2}{\lambda MKR}\label{eq:corollary-sc-proof}
\end{equation}
The lower bound is given by
\begin{equation}
\min\crl*{\frac{\lambda \sdiff^2}{H^2}, \frac{\Delta\sqrt{\lambda}}{\sqrt{H}}} \exp\prn*{-\frac{8R\sqrt{\lambda}}{\sqrt{H}}} + \frac{\sigma^2}{\lambda MKR} 
\end{equation}
When $\sdiff^2 \geq H^{3/2}/\sqrt{\lambda}$, this reduces to 
\begin{equation}
\frac{\Delta\sqrt{\lambda}}{\sqrt{H}} \exp\prn*{-\frac{8R\sqrt{\lambda}}{\sqrt{H}}} + \frac{\sigma^2}{\lambda MKR} = \Delta \exp\prn*{-\frac{8R\sqrt{\lambda}}{\sqrt{H}} - \log\frac{\sqrt{\lambda}}{\sqrt{H}}} + \frac{\sigma^2}{\lambda MKR} \label{eq:corollary-sc-proof-1}
\end{equation}
Comparing this with \eqref{eq:corollary-sc-proof}, we see that the $R$ needed to guarantee error $\epsilon$ using Theorem \ref{thm:mbsgd-upper-bound} is larger than the minimum possible $R$, as lower bounded by \eqref{eq:corollary-sc-proof-1}, by at most a log factor.
\end{proof}

\section{Discussion of \citet{karimireddy2019scaffold}}\label{app:scaffold}

We compare our results to \citet{karimireddy2019scaffold}, who presented an analysis of the inner/outer stepsize variant of Section \ref{sec:inner-outer} (as \textsc{FedAvg}, with a different stepsize parametrization---see below) as well as the novel method \textsc{SCAFFOLD} which incorporates variance reduction.
\removed{\citeauthor{karimireddy2019scaffold} were concerned mostly with the setting of Section \ref{sec:subset} where only a subset of the machines are used in each round, and thus inter-machine variance reduction is appropriate.  But in order to better understand how the results relate, let us begin by first considering the simpler setting where all machines are used in each round (i.e.~$S=M$).}

\citet[Theorem V]{karimireddy2019scaffold} show that for the inner/outer stepsize updates \eqref{eq:inner-outer-updates}, with optimal choice of stepsizes, in the weakly convex case
\begin{equation}\label{eq:fedavg-bound}
\E F(\hat{x}) - F^* \leq O\left(
\frac{HB^2}{R} + \frac{\sigma B}{\sqrt{SKR}} + \frac{\prn*{H\sdiff^2B^4}^{1/3}}{R^{2/3}} + \sqrt{1-\frac{S}{M}}\cdot \frac{\sdiff B}{\sqrt{SR}} \right)
\end{equation}
and in the strongly convex case,
\begin{equation}\label{eq:fedavg-bound-sc}
\E F(\hat{x}) - F^* \leq O\left(
\lambda B^2\exp\prn*{\frac{-\lambda R}{H}} + \frac{\sigma^2 }{\lambda SKR} + \frac{H\sdiff^2}{\lambda^2 R^2} 
+ (1-\frac{S}{M}) \cdot \frac{\sdiff^2}{\lambda S R}
\right)\qquad\textrm{for } R \geq \Omega\prn*{\frac{H}{\lambda}}.
\end{equation}
But these are loose upper bounds: as discussed in Section \ref{sec:inner-outer}, the Minibatch SGD guarantees also apply to the inner/outer variant (by using $\eta_\textrm{inner}=0$).  The Minibatch SGD guarantees \eqref{eq:S-upper-bound} and \eqref{eq:S-upper-bound-sc} can therefor be viewed also as guarantees on the inner/outer variant (i.e.~\citeauthor{karimireddy2019scaffold}'s \textsc{FedAvg}) that improve over \eqref{eq:fedavg-bound} and \eqref{eq:fedavg-bound-sc} in several ways: (a) they avoid the the third terms in \eqref{eq:fedavg-bound} and \eqref{eq:fedavg-bound-sc}; (b) they improve the fourth terms by a factor of $\sqrt{S}$ and $S$ respectively;  and (c) they avoid the requirement $R > H/\lambda$.  

\citeauthor{karimireddy2019scaffold}'s presentation actually uses a different step-size parametrization that does not allow for $\eta_\textrm{inner}=0$: they use $\eta_l=\eta_\textrm{inner}$ and $\eta_g=\eta_\textrm{outer}/\eta_{\textrm{inner}}$.  We prefer the presentation using $\eta_\textrm{inner}$ and $\eta_\textrm{outer}$ in order to emphasize the relationship with Minibatch SGD and in order to explicitly allow $\eta_\textrm{inner}=0$.  But in any case, even using their parametrization, $\eta_l=\eta_\textrm{inner}$ could be taken arbitrarily close to zero making the deviation from Minibatch SGD arbitrarily small.  Indeed, the \citeauthor{karimireddy2019scaffold}'s bounds \eqref{eq:fedavg-bound} and \eqref{eq:fedavg-bound-sc} are obtained when $\eta_{\textrm{inner}}$ is already so small that the algorithm is essentially equivalent to Minibatch SGD.

But \citeauthor{karimireddy2019scaffold}'s main contribution was the presentation of \textsc{SCAFFOLD}, which incorporates machine-specific control iterates that reduce the inter-machine variances.  For \textsc{SCAFFOLD},  \citep[Theorem VII]{karimireddy2019scaffold} show that in the weakly convex case\footnote{This is different than the bound stated as \cite[Theorem III]{karimireddy2019scaffold}, but is what was proven \cite[Theorem VII, Appendix E]{karimireddy2019scaffold}.}:
\begin{equation}\label{eq:scaffold-bound-S}
\E F(\hat{x}) - F^* \leq O\left(
\frac{HB^2}{R} +  \frac{\sigma B}{\sqrt{SKR}} + \frac{M\sdiff^2}{HSR} + \frac{\sigma \sdiff\sqrt{M}}{HS\sqrt{KR}} \right),
\end{equation}
and in the strongly convex case
\begin{equation}\label{eq:scaffold-bound-S-sc}
\E F(\hat{x}) - F^* \leq O\left(
\lambda\prn*{B^2 + \frac{M\sdiff^2}{SH^2}}\exp\prn*{-\min\crl*{\frac{\lambda}{H},\frac{S}{M}}R} +  \frac{\sigma^2}{\lambda SKR} \right) \qquad\textrm{for }R\geq \max\crl*{\frac{H}{\lambda}, \frac{M}{S}}.
\end{equation}
These guarantees are also obtained when $\eta_{\textrm{inner}}$ is so close to zero that this is essentially a minibatch method, in this case ``Minibatch SAGA'' \citep[cf.][]{defazio2014saga}.

Although \textsc{SCAFFOLD} is aimed specifically at the setting where a subset of machines are used in each round (i.e.~$S<M$), let us first check whether it provides benefits in our ``standard'' setting (introduced in Sections \ref{sec:setup}), where all machines are used each round (i.e.~$S=M$).  In this case, the \textsc{SCAFFOLD} bounds \eqref{eq:scaffold-bound-S} and \eqref{eq:scaffold-bound-S-sc} may improve over the loose upper bounds \eqref{eq:fedavg-bound} and \eqref{eq:fedavg-bound-sc}, but this is only due to the looseness in these upper bounds.  The \textsc{SCAFFOLD} upper bounds (for $S=M$) do not actually improve over the Minibatch SGD guarantees of Theorem \ref{thm:mbsgd-upper-bound}, and only include additional terms (see Tables \ref{tab:prior-local-analysis-convex} and \ref{tab:prior-local-analysis-strongly-convex}).  That is, the \textsc{SCAFFOLD} upper bounds, when $S=M$, are valid also for \textsc{FedAvg} (since as discussed above, Minibatch SGD gurantees are valid also for \textsc{FedAvg}) and so do not show a benefit in the setting where all machines are used each round.  This is perhaps not surprising, since in this setting there is no need to reduce inter-machine variance, and so no benefit from variance reduction.

Let us turn then to the setting of Section \ref{sec:subset}, where only a random subset of $S<M$ machines are used in each iteration, and for which \textsc{SCAFFOLD} was developed.  In this setting, \textsc{SCAFFOLD} does show a benefit in some regimes.  Let us focus on the weakly convex case and compare the \textsc{SCAFFOLD} guarantee \eqref{eq:scaffold-bound-S} with the Minibatch SGD guarantee \eqref{eq:S-upper-bound}.  We can verify that, e.g.~when $\sigma=0$ and $\sdiff=HB$, SCAFFOLD improves over Minibatch SGD if $\frac{M}{R} \ll \frac{S}{M} \ll \frac{R}{M}$, and $S<M$, but the \textsc{SCAFFOLD} guarantee \eqref{eq:scaffold-bound-S} is worse then Minibatch SGD if $\frac{S}{M} \ll \frac{M}{R}$.  More generally, the \textsc{SCAFFOLD} guarantee is worse than Minibatch SGD if $\sigma=0$ and $\frac{S}{M} \ll \frac{\sdiff^2}{H^2B^2} \min(1,\frac{M}{R})$.  Also for strongly convex objectives, \textsc{SCAFFOLD} improves over Minibatch SGD in some regimes but the guarantee \eqref{eq:scaffold-bound-S-sc} is worse than Minibatch SGD in other regimes.  Care is required to map out the precise regimes and how they depend on the various problem parameters.

\removed{

\begin{equation}\label{eq:scaffold-bound}
\E F(\hat{x}) - F^* \leq O\left(
\frac{HB^2}{R} +  \frac{\sigma B}{\sqrt{MKR}} + \frac{\sdiff^2}{HR} + \frac{\sigma \sdiff}{H\sqrt{MKR}} \right),
\end{equation}
and in the strongly convex case
\begin{equation}\label{eq:scaffold-bound-sc}
\E F(\hat{x}) - F^* \leq O\left(\lambda B^2\exp\prn*{\frac{-\lambda R}{H}} +  \frac{\sigma^2 }{\lambda MKR} \right)\qquad\textrm{for } R > \Omega\prn*{\frac{H}{\lambda}}.
\end{equation}

\removed{

The \textsc{SCAFFOLD} upper bounds \eqref{}eq: neither improves over Minibatch SGD and thus neither improves over \textsc{FedAvg} for optimal inner/outer stepsizes.
These guarantees are also obtained when only $\eta_{\textrm{inner}}$ is so close to zero that this is essentially a minibatch method, in this case ``Minibatch SAGA'' \citep[cf.][]{defazio2014saga}. }


When only a random subset of $S \leq M$ machines are used in each round, \citet{karimireddy2019scaffold}[Theorem 9] show that for the inner/outer stepsize update \eqref{eq:inner-outer-updates}, with optimal choice of stepsizes, in the weakly convex case:
\begin{equation}\label{eq:fedavg-bound}
\E F(\hat{x}) - F^* \leq O\left(
\frac{HB^2}{R} + \frac{\sigma B}{\sqrt{MKR}} + \frac{\prn*{H\sdiff^2B^4}^{1/3}}{R^{2/3}} \right)
\end{equation}

When only a random subset of $S \leq M$ machines are available in each round, we discussed in Section \ref{sec:inner-outer} that Minibatch SGD attains the following guarantee under the convex assumptions
\begin{equation}\label{eq:S-upper-bound2}
    \E F(\hat{x}) - F^* \leq O\prn*{\frac{HB^2}{R} + \frac{\sigma_* B}{\sqrt{SKR}}
    + \sqrt{1-\frac{S}{M}}\cdot \frac{\sdiff B}{\sqrt{SR}}
    },
\end{equation}
and for the strongly convex case
\begin{equation}\label{eq:S-upper-bound-sc2}
    \E F(\hat{x}) - F^* \leq O\prn*{\lambda B^2\exp\prn*{\frac{-\lambda R}{H}} + \frac{\sigma_*^2 }{\lambda SKR}
    + \prn*{1-\frac{S}{M}}\cdot \frac{\sdiff^2}{\lambda SR}
    },
\end{equation}
with similar guarantees for Accelerated Minibatch SGD (with $\sigma$ and $\bar{\zeta}$ replacing $\sigma_*$ and $\sdiff$). The guarantees \eqref{eq:S-upper-bound2} and \eqref{eq:S-upper-bound-sc2} are also valid for Minibatch SGD (i.e.~with $\eta_\textrm{inner}=0$) and thus for \textsc{FedAvg} \eqref{eq:inner-outer-updates}.
In contrast, as above, the analyses of \citet[Theorem V]{karimireddy2019scaffold} feature additional terms involving $\sdiff$, and are therefore not tight.

In contrast, the \textsc{SCAFFOLD} (or Minibatch SAGA) guarantee is \citep[Theorem VII]{karimireddy2019scaffold}
\begin{equation}\label{eq:scaffold-bound-S}
\E F(\hat{x}) - F^* \leq O\left(
\frac{HB^2}{R} +  \frac{\sigma B}{\sqrt{SKR}} + \frac{M\sdiff^2}{HSR} + \frac{\sigma \sdiff\sqrt{M}}{HS\sqrt{KR}} \right)
\end{equation}
for the weakly convex case, and
\begin{equation}\label{eq:scaffold-bound-S-sc}
\E F(\hat{x}) - F^* \leq O\left(
\lambda\prn*{B^2 + \frac{M\sdiff^2}{SH^2}}\exp\prn*{-\min\crl*{\frac{\lambda}{H},\frac{S}{M}}R} +  \frac{\sigma^2}{\lambda SKR} \right) \qquad\textrm{for }R\geq \max\crl*{\frac{H}{\lambda}, \frac{M}{S}}
\end{equation}
for the strongly convex case.
This \emph{can} improve over Minibatch SGD, but only in a fairly narrow range of parameters.

To summarize, \citeauthor{karimireddy2019scaffold} introduce two innovations, which are somewhat independent: variance reduction, and inner/outer stepsizes. Using inner/outer stepsizes immediately allows matching the Minibatch-SGD gurantee, as this is subsumed as a particular stepsize choice, and their analysis is in this parameter regime.  Variance reduction is helpful when only a subset of machines are used in each round (but a large enough subset so that each machine is used many times), and this is established by \citeauthor{karimireddy2019scaffold}.

Variance reduction could also be helpful when local objectives are finite sums (e.g.~an average over data points, where the number of iterations is larger than the number of data points), but this is not taken advantage of by the \textsc{SCAFFOLD} analysis.
In any case, the analysis of neither method improves over the baseline Minibatch SGD analysis of Section \ref{sec:mbsgd} in the setting presented in Section \ref{sec:setup} where all machines are used each round.

}

\end{document}